\documentclass[11 pt]{article}
\usepackage{style}

\title{\LARGE\bfseries Achieving $\epsilon^{-2}$ Sample Complexity for Single-Loop Actor-Critic under Minimal Assumptions}
\author{Ishaq Hamza\thanks{Much of this work was carried out while the author was a research intern at the Edwardson School of Industrial Engineering, Purdue University.}\; and Zaiwei Chen\textsuperscript{$\dagger$} \\ 
\small \textsuperscript{$*$}\textit{IISc,} \href{mailto:ishaqhamza@iisc.ac.in}{\textit{ishaqhamza@iisc.ac.in}}\,, \ \ 
\textsuperscript{$\dagger$}\textit{Purdue IE,} \href{mailto:chen5252@purdue.edu}{\textit{chen5252@purdue.edu}}}

\date{\vspace{-0.4 in}}
\begin{document}
\maketitle
\allowdisplaybreaks

\begin{abstract}
In this paper, we establish last-iterate convergence rates for off-policy actor--critic methods in reinforcement learning. In particular, under a \textit{single-loop, single-timescale implementation} and a broad class of policy updates, including approximate policy iteration and natural policy gradient methods, we prove the first $\smash{\tilde{\mathcal{O}}(\epsilon^{-2})}$ sample complexity guarantee for finding an $\epsilon$-optimal policy under \textit{minimal assumptions}, namely, the existence of a policy that induces an irreducible Markov chain. This stands in stark contrast to the existing literature, where an $\smash{\tilde{\mathcal{O}}(\epsilon^{-2})}$ sample complexity is achieved only through nested-loop updates and/or under strong, algorithm-dependent assumptions on the policies, such as uniform mixing and uniform exploration.

Technically, to address the challenges posed by the coupled update equations arising from the single-loop implementation, as well as the potentially unbounded iterates induced by off-policy learning, our analysis is based on a coupled Lyapunov drift framework. Specifically, we establish a geometric convergence rate for the actor and an $\smash{\tilde{\mathcal{O}}(1/T)}$ convergence rate for the critic, and combine the two Lyapunov drift inequalities through a cross-domination property. We believe this analytical framework is of independent interest and may be applicable to other coupled iterative algorithms with unbounded iterates.
\end{abstract}

\section{Introduction}\label{sec:intro}
Reinforcement learning (RL) has become increasingly impactful in solving sequential decision-making problems \cite{sutton2018reinforcement}, ranging from game-playing AI \cite{silver2017mastering} to large language models \cite{ouyang2022training}. Mathematically, RL can be viewed as a data-driven framework for solving Markov decision processes (MDPs) \cite{puterman2014markov}. Two classical approaches to solving MDPs are value iteration and policy iteration, which, in model-free RL, correspond to value-space methods such as $Q$-learning \cite{watkins1992q} and policy-space methods such as actor--critic \cite{konda1999actor}, respectively.

For $Q$-learning, it has long been established that an $\smash{\tilde{\mathcal{O}}(\epsilon^{-2})}$ sample complexity is achievable as long as the stationary behavior policy induces an irreducible and aperiodic Markov chain; see \cite{chen2024lyapunov,li2020sample,qu2020finite}, among many others. The aperiodicity assumption has recently been relaxed in \cite{nanda2025minimal,chandak2022concentration,haque2024stochastic}. Irreducibility is essentially minimal for exploration in the sense that, if this assumption is violated, then there exists at least one state that is visited only finitely many times during the learning process.

For actor--critic methods, existing results achieving an $\tilde{\mathcal{O}}(\epsilon^{-2})$ sample complexity mostly rely on a nested-loop implementation \cite{ju2025auto,xiao2022convergence,kumar2023sample,li2025policy}, where the actor is updated in an outer loop while being held fixed in an inner loop for critic updates. In contrast, practical actor--critic algorithms are typically implemented using a more natural single-loop structure, in which both the actor and the critic are updated simultaneously at each iteration \cite{mnih2016asynchronous,haarnoja2018sac,lillicrap2015continuous}. Moreover, even under a nested-loop implementation, existing results often impose strong, algorithm-dependent assumptions, such as requiring all policies encountered along the algorithm trajectory, or simply all policies, to induce irreducible and aperiodic Markov chains, with mixing factors uniformly bounded above and stationary distributions uniformly bounded away from zero \cite{qiu2019finite,alacaoglu2022natural,chen2023actorcritic,olshevsky2023small}, with only a few exceptions \cite{ju2025auto,li2025policy,chen2025approximate}. A more detailed literature review is presented in Section~\ref{subsec:literature}.

Since value-space and policy-space methods are two fundamental pillars of RL, there is little reason to believe that the latter are inherently less sample efficient or require stronger assumptions than the former. This motivates the following question: \textit{Can actor--critic methods achieve $\tilde{\mathcal{O}}(\epsilon^{-2})$ sample complexity with a single-loop implementation under minimal assumptions, in particular, the existence of a policy that induces an irreducible Markov chain on the state space?} In this paper, we answer this question in the affirmative.

\subsection{Main Contributions} 
We consider a single-loop, single-timescale actor--critic algorithm in which the actor employs either incremental approximate policy iteration or incremental natural policy gradient updates, while the critic employs off-policy temporal-difference (TD) learning. 

$\bullet$ \textbf{Last-Iterate Convergence Rates under Minimal Assumptions}: We assume the existence of a policy that induces an irreducible Markov chain over the state trajectory. This policy need not be visited by the algorithm's trajectory and thus constitutes a purely existential assumption on the underlying MDP. Under this assumption, we show that our single-loop, single-timescale actor--critic algorithm achieves an $\tilde{\mathcal{O}}(\epsilon^{-2})$ sample complexity. To the best of our knowledge, this is the first result attaining near-minimax sample complexity (with respect to $\epsilon$) without relying on a nested-loop implementation, timescale separation, or strong, potentially algorithm-dependent assumptions.

$\bullet$ \textbf{Technical Contributions}: 
The main challenges in the analysis arise from the coupling between the actor and critic updates, due to the single-loop structure of the algorithm, and from the potentially unbounded iterates induced by off-policy learning in the critic. To overcome these challenges, we employ a coupled Lyapunov approach. Specifically, we construct one Lyapunov function for the actor and another for the critic, and establish a negative drift inequality for each while ensuring a cross-domination property. This structure allows the two drift inequalities to be combined to obtain the desired convergence rates. More details are provided in Section~\ref{sec:Proof}. While this approach is inspired by small-gain theory in the analysis of dynamical systems \cite{khalil2002nonlinear}, to the best of our knowledge, the analytical framework we develop is the first to handle Markovian noise, time-varying targets, unbounded iterates, and coupled systems in a unified manner.

\subsection{Related Literature}
\label{subsec:literature}

Actor--critic methods \cite{konda1999actor} are policy-space RL algorithms that iteratively perform policy evaluation and policy improvement. Policy evaluation is typically carried out using Monte Carlo methods or TD-learning \cite{sutton1988learning}, while policy improvement is performed via variants of policy gradient methods \cite{sutton1999policy,kakade2001natural} or approximate policy iteration \cite{bertsekas2011approximate}. Asymptotic convergence guarantees for actor--critic methods have been established for decades \cite{konda1999actor,borkar1997actor}. Motivated by the need for sample efficiency in modern large-scale applications, recent work has increasingly focused on finite-time and sample-complexity guarantees. Existing non-asymptotic analyses can be broadly divided into two categories: nested-loop implementations, where the actor remains fixed while the critic performs multiple updates, and single-loop implementations, where the actor and critic are updated simultaneously at each iteration.

\textbf{Finite-Time Analysis of Nested-Loop Actor--Critic.}
For nested-loop actor--critic methods, $\tilde{\mathcal{O}}(\epsilon^{-2})$ sample complexity has been achieved in both tabular and function approximation settings \cite{xu2020improving,chen2025approximate,alacaoglu2022natural,kumar2023sample,lan2021policy,xiao2022convergence,ganesh25a}. However, most existing results rely on strong uniform ergodicity assumptions on the policies generated by the algorithm, or simply on all policies \cite{xu2020improving,alacaoglu2022natural,kumar2023sample,ganesh25a,tian2023neural}, with only a few exceptions \cite{li2025policy,chen2025approximate,ju2025auto}. Such algorithm-dependent assumptions introduce a form of circularity: verifying them typically depends on properties of the policy sequence itself, which is generated by the algorithm whose convergence analysis relies on these conditions. We do not make such assumptions.

\textbf{Finite-Time Analysis of Single-Loop Actor--Critic.} Motivated by empirical successes, we study actor--critic methods with a single-loop implementation. Depending on whether the actor and critic stepsizes are of different orders, such methods may have either a two-timescale structure \cite{wu2020finite,khodadadian2021finite,wang2024singleloop} or a single-timescale structure \cite{chen2023actorcritic,olshevsky2023small,chen2023games}. Regardless of timescale separation, existing results either lack global convergence guarantees \cite{olshevsky2023small,wu2020finite}, rely on strong assumptions \cite{kumar2026single,kumarconvergence}, or yield suboptimal sample complexity bounds \cite{khodadadian2021finite,kumarconvergence}. Our work establishes near-optimal sample complexity guarantees with respect to $\epsilon$ without timescale separation and under substantially weaker assumptions. A detailed comparison is provided in Table~\ref{table}.

Due to the computational limitations of large state-action spaces, popular RL algorithms typically rely on function approximation \cite{sutton2018reinforcement}. While the tabular setting is a special case of function approximation, we emphasize that existing results on actor--critic with function approximation \cite{wu2020finite,xu2021sample,olshevsky2023small,ganesh25a} are not applicable in our setting, as they require spectral assumptions on the feature map that do not hold in the tabular case; see, e.g., \cite[Assumption~4.1]{wu2020finite}, \cite[Assumption~6]{olshevsky2023small}, and \cite[Assumption~4]{ganesh25a}. These assumptions and their incompatibility with the tabular setting are detailed in Appendix~\ref{app:comparison}.

\begin{table*}[t]
  \centering
  \caption{Sample complexity results for single-loop actor--critic algorithms.}
  \label{table}

  {\small
  \setlength{\tabcolsep}{2.3pt}
  \renewcommand{\arraystretch}{1.1}
  \renewcommand{\tabularxcolumn}[1]{m{#1}} 

  \begin{tabularx}{\textwidth}{|
    >{\hsize=1.00\hsize\linewidth=\hsize\centering\arraybackslash}X|
    >{\hsize=1.15\hsize\linewidth=\hsize\centering\arraybackslash}X|
    >{\hsize=1.05\hsize\linewidth=\hsize\centering\arraybackslash}X|
    >{\hsize=0.80\hsize\linewidth=\hsize\centering\arraybackslash}X|
    >{\hsize=1.05\hsize\linewidth=\hsize\centering\arraybackslash}X|
    >{\hsize=1.25\hsize\linewidth=\hsize\centering\arraybackslash}X|
    >{\hsize=0.70\hsize\linewidth=\hsize\centering\arraybackslash}X|}
    \hline
    Work &
    Complexity &
    \makecell[c]{Global\\Convergence} &
    \makecell[c]{Last\\Iterate} &
    \makecell[c]{Markovian\\Sampling} &
    Assumptions &
    \makecell[c]{Single\\Timescale} \\
    \hline

    \cite{khodadadian2021finite}
    & $\tilde{\mathcal{O}}(\epsilon^{-6})$
    & \ding{51}
    & \ding{51}
    & \ding{51}
    & Ergodicity
    & \ding{55} \\
    \hline

    \cite{xu2020twotimescaleNAC}
    & $\tilde{\mathcal{O}}(\epsilon^{-2.5})$
    & \ding{55}
    & \ding{51}
    & \ding{51}
    & Unif. mix. rates
    & \ding{55} \\
    \hline

    \cite{olshevsky2023small}
    & $\mathcal{O}(\epsilon^{-2})$
    & \ding{55}
    & \ding{55}
    & \ding{55}
    & \makecell[c]{Unif. mix. rates\\Exploration}
    & \ding{55} \\
    \hline

    \cite{fu2021singletimescale}
    & $\tilde{\mathcal{O}}(\epsilon^{-4})$
    & \ding{51}
    & \ding{55}
    & \ding{55}
    & Unif. mix. rates
    & \ding{51} \\
    \hline

    \cite{kumarconvergence}
    & $\mathcal{O}(\epsilon^{-3})$
    & \ding{51}
    & \ding{51}
    & \ding{55}
    & Exploration
    & \ding{51} \\
    \hline

    \cite{kumar2026single}
    & $\mathcal{O}(\epsilon^{-2})$
    & \ding{51}
    & \ding{51}
    & \ding{55}
    & Exploration
    & \ding{51} \\
    \hline

    \textbf{This work}
    & $\tilde{\mathcal{O}}(\epsilon^{-2})$
    & \ding{51}
    & \ding{51}
    & \ding{51}
    & Irreducibility
    & \ding{51} \\
    \hline
  \end{tabularx}
  }

  \vspace{4pt}
  \begin{minipage}{\textwidth}
  \footnotesize
  \raggedright
  Some works in the table study the function approximation setting. Although the tabular setting is a special case of function approximation, their results do not directly transfer to the tabular setting because they impose assumptions (labeled ``Exploration'' in the table) that are not satisfied in this regime. ``Unif. mix. rates'' refers to a strong ergodicity assumption requiring the same mixing rate across a class of policies. See Appendix \ref{app:comparison} for details on these assumptions.
  \end{minipage}

\end{table*}

\textbf{Off-Policy Learning.}
Off-policy TD-learning refers to TD-learning where the target policy, whose value function we aim to estimate, differs from the behavior policy used to collect samples \cite{degris2012off}. Importance sampling is typically used to correct this discrepancy \cite{precup2001,espeholt2018impala,munos2016safe}. Off-policy actor--critic methods have been studied in the literature \cite{OffPolicyNAC,khodadadian2021finite2,chen2025approximate}, achieving $\tilde{\mathcal{O}}(\epsilon^{-2})$ sample complexity under nested-loop implementations. In contrast, in our single-loop actor--critic method, the off-policy TD-learning component must track a time-varying target, introducing challenges not addressed in existing off-policy actor--critic analyses.

\textbf{In summary,} the existing literature can be broadly divided into two categories: (i) analyses that achieve strong finite-time guarantees but rely on nested-loop implementations, and (ii) single-loop analyses that typically yield weaker guarantees, such as convergence to stationary points or global convergence with suboptimal rates. Moreover, most existing results rely on restrictive sampling or approximation assumptions. In contrast, we establish near order-optimal last-iterate finite-time guarantees for a single-loop algorithm under substantially weaker assumptions.

\subsection{Preliminaries}
An infinite-horizon discounted MDP \cite{puterman2014markov} is defined by a tuple
$(\mathcal{S}, \mathcal{A}, p, \mathcal{R}, \gamma)$, where $\mathcal{S}$ is the
state space of size $n$, $\mathcal{A}$ is the action space of size $m$, $p$ is the transition
probability kernel, $\mathcal{R}:\mathcal{S}\times\mathcal{A}\to[0,1]$ is the reward
function, and $\gamma\in(0,1)$ is the discount factor. Given a policy $\pi$, its $Q$-function
$Q^\pi:\mathcal{S}\times\mathcal{A}\to\mathbb{R}$ is defined as
$Q^\pi(s,a)=\mathbb{E}\left[\sum_{t=0}^\infty \gamma^t \mathcal{R}(S_t,A_t)
\mid S_0=s, A_0=a\right]$ for all $(s,a)\in\mathcal{S}\times\mathcal{A}$. Note that $Q^\pi$ can be alternatively viewed as a vector in $\mathbb{R}^{mn}$. The goal
is to learn an optimal policy $\pi^*$ such that its associated $Q$-function, denoted by $Q^*$, is maximized.

To solve an MDP, existing methods can be broadly divided into two categories: value-space methods, such as value iteration, and policy-space methods, such as policy iteration, policy gradient \cite{sutton1999policy}, and natural policy gradient \cite{kakade2001natural}. When the model parameters, e.g., the transition kernel and the reward function, are unknown, one must develop data-driven methods to learn an optimal policy, which corresponds to model-free RL. In this case, value-space methods translate to $Q$-learning \cite{watkins1992q}, and policy-space methods translate to the popular actor--critic framework, in which an agent iteratively performs policy evaluation and policy improvement \cite{konda1999actor}.

\section{Main Results}\label{sec:main_results}

This section presents our main results. Specifically, our single-loop actor--critic algorithm and its finite-time analysis are presented in Sections \ref{subsec:algo} and \ref{subsec:theory}. The proof of the main theorem is presented in Section \ref{sec:Proof}, with the proofs of technical lemmas deferred to the appendix.

\subsection{Algorithm}\label{subsec:algo}

Our single-loop, single-timescale off-policy actor--critic method is presented in Algorithm~\ref{alg}, where the actor and critic are updated simultaneously based on a single trajectory of Markovian samples. The actor incrementally updates the policy toward $\tilde{\pi}_t$, which is computed from the current policy $\pi_t$, the current $Q$-function estimate $Q_t$, and a temperature parameter $\tau_t$ through a generic update map $G(\cdot)$. This update map can correspond to either approximate policy iteration or natural policy gradient, as discussed shortly.

The critic performs off-policy TD-learning. To correct the distribution mismatch between the behavior and target policies, we propose two approaches. The first uses importance sampling (IS), where $\rho_{t+1}$ denotes the importance-sampling ratio. The second directly computes the expected $Q$-function at the next state $S_{t+1}$; we refer to this variant as expected temporal difference (ETD). In both cases, $\Delta_t$ denotes the temporal-difference error, which is then used in the TD-learning update. We next describe the actor and critic in more detail.

\begin{algorithm}[h]
  \caption{Single-Loop Off-Policy Actor--Critic}
  \label{alg}
  \begin{algorithmic}[1]
    \STATE {\bfseries Input:} Initialization: $\pi_0$ and $Q_0=\mathbf{0}$, a single trajectory $\{(S_t,A_t)\}_{t\geq 0}$ generated by a behavior policy $\pi_b$, a sequence $\{(\tau_t, \omega_t, \alpha_t)\}$ of update parameters, and a choice of critic $\in\left\{\mathrm{IS}, \mathrm{ETD}\right\}$.
    
    \FOR{$t=0,1,2,\dots$}
        \STATE $\pi_{t+1} = (1-\omega_t)\pi_t + \omega_t \tilde{\pi}_t$, where $\tilde{\pi}_{t} = G(\pi_t, Q_t,\tau_t)$\hfill{$\triangleright$ Actor}
        
        \IF{critic = $\mathrm{IS}$}
            \STATE $\rho_{t+1} = {\pi_{t+1}(A_{t+1}\mid S_{t+1})}/{\pi_b(A_{t+1}\mid S_{t+1})}$
            \STATE $\Delta_t = \mathcal{R}(S_t,A_t) + \gamma \rho_{t+1} Q_t(S_{t+1}, A_{t+1}) - Q_t(S_t, A_t)
            $
        
        \ELSIF{critic = $\mathrm{ETD}$}
            \STATE $\Delta_t = \mathcal{R}(S_t,A_t) + \gamma \sum_{a\in\mathcal{A}}\pi_{t+1}(a \mid S_{t+1}) Q_t(S_{t+1}, a) - Q_t(S_t, A_t)$
        \ENDIF
        \STATE Update $Q_t$ according to 
        \begin{align*}
        Q_{t+1}(s,a)=
        \begin{dcases}
        Q_t(s,a)+\alpha_t \Delta_t, & \text{if } (s,a)=(S_t,A_t),\\
        Q_t(s,a), & \text{otherwise}.\tag*{$\triangleright$ Critic}
        \end{dcases}
        \end{align*}
    \ENDFOR
  \end{algorithmic}
\end{algorithm}

\textbf{The Actor for Policy Improvement.}
In each iteration of Algorithm~\ref{alg}, the agent computes $\tilde{\pi}_t$ via the update map $G(\cdot)$ and then updates its policy by taking an incremental step toward $\tilde{\pi}_t$ with stepsize $\omega_t$. As for the function $G$, popular choices include natural policy gradient and approximate policy iteration with $\epsilon$-greedy or softmax updates, as presented below.

\textit{Natural Policy Gradient:} Define the function $G$ as
    \begin{align*}
        [G(\pi,Q,\tau)](s,a)
        =\frac{\pi(a| s)\exp(Q(s,a)/\tau)}
        {\sum_{a'}\pi(a'|s)\exp(Q(s,a')/\tau)},\quad\forall\,(s,a).
    \end{align*}
    Note that if the critic estimate is accurate, i.e.,
    $Q_t=Q^{\pi_t}$, the update $\tilde{\pi}_t=G(\pi_t,Q_t,\tau_t)$ coincides with
    the natural policy gradient update \cite{agarwal2021theory}. 

    \textit{Approximate Policy Iteration with Softmax Update:} Define the function $G$ as
    \begin{align*}
        [G(\pi,Q,\tau)](s,a)
        =\frac{\exp(Q(s,a)/\tau)}
        {\sum_{a'}\exp(Q(s,a')/\tau)},\quad\forall\,(s,a),
    \end{align*}
    which corresponds to a ``soft'' version of policy iteration.

\textit{Approximate Policy Iteration with $\epsilon$-Greedy Update:}
    Define the function $G$ as
\begin{align*}
    [G(\pi,Q,\tau)](s,a)=\begin{dcases}
        \tau/m+1-\tau,& a=\arg\max_{a'\in\mathcal{A}} Q(s,a'),\\
        \tau/m,&\text{otherwise,}
    \end{dcases}
\end{align*}
where the tie-breaking rule can be arbitrary. This is the usual $\epsilon$-greedy update with exploration level $\epsilon=\tau$. We use the notation $\tau$ only to keep the temperature parameter consistent across the three actor updates. Although $\epsilon$ does not appear in the definition, the name $\epsilon$-greedy is retained for consistency with the literature \cite{mnih2015human}.

Note that as the tunable parameter $\tau$ approaches zero, all three updates reduce to policy iteration \cite{kakade2001natural}, thereby enabling a unified analysis. For ease of terminology, we will uniformly refer to $\tau_t$ in Algorithm~\ref{alg} as the temperature parameter.

\textbf{The Critic for Policy Evaluation.}
Our critic update is an off-policy variant of TD-learning \cite{sutton1988learning}, motivated by both practical and theoretical considerations. From a practical perspective, in safety-critical applications such as autonomous driving, healthcare, and clinical trials, data collection can be costly and/or high-risk \cite{munos2016safe,sutton2016emphatic}. In such settings, one must rely on historical data collected under a different policy. From a theoretical perspective, off-policy learning facilitates exploration by avoiding algorithm-dependent assumptions, such as requiring all policies encountered along the algorithm trajectory to satisfy uniform exploration properties \cite{khodadadian2021finite,wu2020finite,chen2023actorcritic}. While off-policy learning is both practically and theoretically justified, it introduces a mismatch between the behavior policy $\pi_b$ and the target policy $\pi_{t+1}$. Recall that the goal of the critic is to estimate the $Q$-function $Q^{\pi_{t+1}}$ associated with $\pi_{t+1}$, while samples are generated from $\pi_b$. To correct this mismatch, we consider two approaches: one based on importance sampling and the other based on directly computing the expectation.

Given the current state-action pair $(S_t,A_t)$, the temporal difference $\Delta_t$ is constructed as a conditionally unbiased estimator of the Bellman error
\begin{align*}
    \mathcal{R}(S_t,A_t)
    + \gamma 
    \underbrace{
    \sum_{s'\in\mathcal{S}} p(s'\mid S_t,A_t)
    \sum_{a'\in\mathcal{A}} \pi_{t+1}(a'\mid s') Q_t(s',a')
    }_{:=E_1}
    - Q_t(S_t,A_t).
\end{align*}
The key quantity to estimate is $E_1$. Since $S_{t+1}\sim p(\cdot\mid S_t,A_t)$, one can construct an unbiased estimator of $E_1$ using $A_{t+1}\sim \pi_b(\cdot\mid S_{t+1})$ via importance sampling, which corrects the mismatch between $\pi_{t+1}$ and $\pi_b$ \cite{munos2016safe, espeholt2018impala}. This yields the IS-based approach in Lines 5--6 of Algorithm~\ref{alg}. Alternatively, since $\pi_{t+1}$ is known, one can directly compute an unbiased estimator of $E_1$ given $S_{t+1}$ as $\sum_{a'\in\mathcal{A}} \pi_{t+1}(a'\mid S_{t+1}) Q_t(S_{t+1},a')$, resulting in the ETD-based approach in Line 8 of Algorithm~\ref{alg}.

Comparing the two approaches, the IS-based method is computationally more efficient, as it avoids explicitly computing expectations over the action space, which can be costly when the action space is large. However, its analysis is more challenging due to nonlinear updates induced by the product of the importance sampling ratio and the $Q$-function, as well as the possibility of unbounded iterates. In contrast, the ETD-based approach is easier to analyze, as directly computing the expectation avoids the complications introduced by importance sampling. The trade-off is a higher computational cost.

\subsection{Convergence Rates}
\label{subsec:theory}

We now state the only exploration assumption used in our analysis. 

\begin{assumption}
\label{assum:behavior}
There exists a policy $\pi$ such that the Markov chain $\{S_t\}$ induced by $\pi$ is irreducible.
\end{assumption}

\begin{remark}
By choosing the behavior policy to have full support, i.e.,
$\pi_{b,\min}:=\min_{s,a}\pi_b(a\mid s)>0$, Assumption~\ref{assum:behavior}
implies that the Markov chain $\{S_t\}$ induced by $\pi_b$ is irreducible.
Hence, it admits a unique stationary distribution $\mu_b\in\Delta^n$ (where $\Delta^d$ denotes the $d$-dimensional probability simplex) satisfying
$\mu_{b,\min}:=\min_{s\in\mathcal{S}}\mu_b(s)>0$. A proof of this claim is
provided in Appendix~\ref{app:assum}. Moreover, by applying an aperiodicity
transformation, also known as the Schweitzer transformation
\cite{schweitzer1971iterative}, we may assume \textit{without loss of generality}
that the Markov chain $\{S_t\}$ induced by $\pi_b$ is aperiodic. Consequently, the chain $\{S_t\}$ mixes at a geometric rate \cite[Theorem 4.9]{levin2017markov}: there exists a constant $\sigma_b\in(0,1)$ such that $\max_{s\in\mathcal{S}} \mathrm{d}_{\mathrm{TV}}\big(P_{\pi_b}^t(s,\cdot),\mu_b(\cdot)\big) \leq 2\sigma_b^t$, where $P_{\pi_b}$ denotes the state transition matrix.\footnote{The total variation distance between two probability distributions $p_1,p_2$ on a finite sample space $\Omega$ is defined as $\mathrm{d}_{\mathrm{TV}}(p_1,p_2)=\frac{1}{2}\sum_{\omega\in\Omega}\left|p_1(\omega)-p_2(\omega)\right|$.}
\end{remark}

Assumption~\ref{assum:behavior} is purely existential: the policy $\pi$ need not be known, computed, or visited by the algorithm. Thus, the assumption is independent of the algorithmic trajectory and instead captures an intrinsic explorability property of the underlying MDP. In Appendix~\ref{app:assum}, we show that this condition is also necessary for exploration in the following sense: if it fails, then no algorithm can generate a single sample trajectory that visits every state infinitely often. Therefore, Assumption~\ref{assum:behavior} is minimal for state-space exploration.

This contrasts sharply with the assumptions commonly imposed in the actor--critic literature, for both nested-loop and single-loop implementations. Existing works often require all policies generated along the algorithm trajectory, or even all policies in the policy class, to induce irreducible and aperiodic Markov chains with uniformly bounded mixing constants, together with some form of uniform exploration \cite{wu2020finite,chen2023actorcritic,olshevsky2023small}. Such conditions are not only stronger than Assumption~\ref{assum:behavior}, but also algorithm dependent: verifying them requires controlling the very policy sequence whose convergence is being analyzed. By imposing only the existential condition above, we decouple exploration from the algorithm trajectory and avoid this circularity. A detailed comparison with common assumptions is provided in Appendix~\ref{app:comparison}.

For our main result, we consider stepsize sequences $\left\{\alpha_t,\omega_t\right\}$ of the form
$\alpha_t = \alpha/(t+h)^\eta$ and $\omega_t = \omega/(t+h)^\eta$, where
$\alpha,\omega,h>0$ and $\eta\in[0,1]$ are constants. Let $C_r=\omega/\alpha\in(0,1)$ denote the ratio between the actor and critic stepsizes. Note that the actor and critic stepsizes are of the same order, reflecting the single-timescale nature of our method. This feature offers numerical stability and simplicity in implementation, making single-timescale actor--critic methods widely popular in practice \cite{peters2008natural,schulman2015trust,schulman2017ppo,mnih2016asynchronous,haarnoja2018sac}.

Next, we state a condition on the choice of the temperature parameter $\tau_t$ in
Algorithm~\ref{alg}.

\begin{condition}
\label{cond:temp}
Let $\tau \ge 0$. Depending on the specific actor update rules and the stepsizes $\{\alpha_t,\omega_t\}$, the sequence
$\{\tau_t\}$ is chosen such that the following conditions are satisfied.
\begin{enumerate}[(1)]
\item \textit{Natural Policy Gradient:}
$
    \tau_t \leq \tau(t+h)^{-\eta/2}/\log(1/\min_s \max_a \pi_t(a| s)).
$
\item \textit{Exponential Softmax:} $\tau_t \leq \tau(t+h)^{-\eta/2} / \log(m)$.
    \item \textit{$\epsilon$-Greedy:} $\tau_t \le \tau(t+h)^{-\eta/2} / (2\gamma\infnorm{Q_t})$.
\end{enumerate}
\end{condition}

Let $z_t$ be the mixing time of the Markov chain $\{S_t\}$ under $\pi_b$ with precision $\omega_t$, defined as $z_t := \min\left\{k\geq 1\mid \max_{s\in\mathcal{S}}\mathrm{d}_{\mathrm{TV}}\big(P_{\pi_b}^{k-1}(s,\cdot),\mu_b(\cdot)\big) \leq \omega_t/2\right\}$. When using a constant stepsize $\omega_t\equiv \omega$, we simply write $z_\omega$. We also define a related quantity $K = K_{\omega,h,\eta} := \min\{t\in\mathbb{N}\mid t\geq z_t\}$. Due to geometric mixing, this quantity is well defined and finite for our choice of stepsizes.

\begin{theorem}
\label{thm:main}
Consider $\{\pi_t\}$ generated by Algorithm~\ref{alg}. Under Assumption~\ref{assum:behavior}, define
\begin{align}\label{def:M_constant}
    M_\mathrm{critic}:=
\begin{dcases}
mn(1-\gamma)^{-3}\pi_{b,\min}^{-3}\mu_{b,\min}^{-1}, & \mathrm{critic}=\mathrm{IS},\\
(1-\gamma)^{-2}, & \mathrm{critic}=\mathrm{ETD}.
\end{dcases}
\end{align}
Then there exists a constant threshold $\tilde{C}_{r}=\tilde{C}_{r}(n,m,p,\gamma,\pi_b)>0$ such that when using small enough stepsizes, for any $C_r\leq\tilde{C}_{r}$, and any temperature sequence $\{\tau_t\}$ satisfying Condition~\ref{cond:temp}, the following bounds hold for all $T\geq K$, where $\mathrm{MSE}_T:=\mathbb{E}\|Q^* - Q^{\pi_T}\|_\infty^2$.
\begin{enumerate}[(1)]
\item Under constant stepsizes, i.e., $\eta=0$, we have
\begin{align*}
\mathrm{MSE}_T
\leq& \underbrace{
\frac{3}{(1-\gamma)^2}\left(1 - \frac{\omega(1-\gamma)}{2}\right)^{T-K}
}_{\text{optimization bias}}
+\underbrace{\frac{12\tau^2}{(1-\gamma)^4}}_{\text{temperature error}}
+\underbrace{
\frac{432M_\mathrm{critic}\omega z_\omega}
{(1-\gamma)C_r^2}
}_{\text{stochastic error}}.
\end{align*}

\item Under harmonic stepsizes, i.e., $\eta=1$, we have
\begin{align*}
\mathrm{MSE}_T
\leq\frac{3}{(1-\gamma)^2}\left(\frac{K+h}{T+h}\right)^{\frac{\omega(1-\gamma)}{2}}
+\begin{dcases}
\frac{8M'_{\mathrm{critic},T}\omega}{[2-\omega(1-\gamma)](T+h)^{\frac{\omega(1-\gamma)}{2}}}, & \omega < \frac{2}{1-\gamma},\\
\frac{M'_{\mathrm{critic},T}\omega\log(T+h)}{T+h}, & \omega = \frac{2}{1-\gamma},\\
\frac{8eM'_{\mathrm{critic},T}\omega}{[\omega(1-\gamma)-2](T+h)}, & \omega > \frac{2}{1-\gamma},
\end{dcases}
\end{align*}
where $M'_{\mathrm{critic},T}:={6\tau^2}/{(1-\gamma)^3}+{216M_\mathrm{critic}z_T\omega}/{C_r^2}$.

\item Under polynomial stepsizes, i.e., $\eta\in(0,1)$, we have
\begin{align*}
\mathrm{MSE}_T
\leq& \frac{3\exp\left[-\frac{\omega(1-\gamma)}{2(1-\eta)}
\Big((T+h)^{1-\eta}-(K+h)^{1-\eta}\Big)\right]}{(1-\gamma)^2}
+\frac{4M'_{\mathrm{critic},T}}{(1-\gamma)(T+h)^\eta}.
\end{align*}
\end{enumerate}
\end{theorem}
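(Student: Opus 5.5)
The plan is a coupled Lyapunov drift argument with one function for the actor and one for the critic. For the actor I would take $V_t:=\|Q^*-Q^{\pi_t}\|_\infty$, or its square. For the critic I would take a generalized Moreau envelope $W_t:=\mathbb{E}\, M(Q_t-Q^{\pi_t})$ of $\tfrac12\|\cdot\|_\infty^2$, which is smooth and equivalent to $\|\cdot\|_\infty^2$ up to constants. This envelope suits the off-policy TD operator, which is a contraction in a weighted $\ell_\infty$ norm. Its contraction factor is roughly $1-\mu_{b,\min}\pi_{b,\min}(1-\gamma)$ for IS and $1-(1-\gamma)$ per visit for ETD, once we average over $\mu_b$. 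These factors are the source of the difference in $M_\mathrm{critic}$.

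\textbf{Step 1 (actor drift).} Write $\pi_{t+1}=(1-\omega_t)\pi_t+\omega_t\tilde\pi_t$. Use the performance-difference lemma, or equivalently monotonicity and contraction of the Bellman operator applied to the mixture policy. I would aim to show
\[
V_{t+1}\le \bigl(1-\omega_t(1-\gamma)\bigr)V_t+\frac{c\,\omega_t}{1-\gamma}\|Q_t-Q^{\pi_t}\|_\infty+c\,\omega_t\,\tau_t\,\kappa_t ,
\]
where $\kappa_t$ is the greedy-approximation gap of $G$ times the temperature. Condition~\ref{cond:temp} is tailored to make this gap at most $\tau(t+h)^{-\eta/2}/(1-\gamma)$ in all three cases. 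For NPG the gap is the $\tau\log(1/\pi_t)$ term, for softmax it is $\tau\log m$, and for $\epsilon$-greedy it is $2\gamma\tau\|Q_t\|_\infty$. Squaring this bound with Young's inequality gives a drift of rate $\omega_t(1-\gamma)/2$ plus a cross term $\omega_t\|Q_t-Q^{\pi_t}\|^2/(1-\gamma)^3$.

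\textbf{Step 2 (critic drift).} The critic target moves because $\pi_t$ changes. I would bound $\|Q^{\pi_{t+1}}-Q^{\pi_t}\|_\infty\lesssim \omega_t\|\tilde\pi_t-\pi_t\|/(1-\gamma)^2$. This is $O(\omega_t)$, but it also carries a factor that I would dominate by $V_t$ plus the critic error. The dominating factor comes from the expression $Q^{\pi_t}-Q^{\tilde\pi_t}$, which is controlled by the actor optimality gap. I would then follow the standard Markovian stochastic approximation analysis with the smooth envelope. The contraction gives drift $-\alpha_t c_{\rm crit}W_t$. Markovian noise is handled by conditioning $z_t$ steps back, which costs $O(\alpha_t^2 z_t)$ times $(1+\|Q_t\|^2)$. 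The moving target costs a cross term of order $(\omega_t^2/\alpha_t)\,V_t$, which equals $C_r\,\omega_t V_t$. Unboundedness of $Q_t$ under IS must be handled carefully. The key idea is to bound the $z_t$-step increments $\|Q_t-Q_{t-z_t}\|$ by $\alpha z(\|Q_{t-z_t}\|+1)$ rather than by a uniform bound, and then absorb the resulting terms into $W_t+V_t$. This absorption is what produces the $\pi_{b,\min}^{-3}$ factors.

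\textbf{Step 3 (combination via cross-domination).} Form $L_t=V_t^2+\beta W_t$ with $\beta\asymp C_r/(1-\gamma)^{3}$. The actor's cross term $\omega_t W_t/(1-\gamma)^3$ is absorbed by the critic drift $-\beta\alpha_t c W_t$, since $\omega_t=C_r\alpha_t$. The critic's cross term $\beta C_r\omega_t V_t^2$ is absorbed by the actor drift once $C_r\le\tilde C_r$. This yields
\[
\mathbb{E}L_{t+1}\le \Bigl(1-\tfrac{\omega_t(1-\gamma)}{2}\Bigr)\mathbb{E}L_t+\omega_t\Bigl(\tfrac{6\tau^2(t+h)^{-\eta}}{(1-\gamma)^3}+\tfrac{216M_\mathrm{critic}z_t\omega_t}{C_r^2}\Bigr)
\]
for $t\ge K$. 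Unrolling this recursion with the three stepsize schedules gives the stated bounds in the standard way: a geometric sum for $\eta=0$, a product $\prod(1-a/(t+h))\le((K+h)/(T+h))^a$ for $\eta=1$ with the three cases according to $a$ versus $1$, and an exponential integral for $\eta\in(0,1)$. The initial term uses $L_K\le 3/(1-\gamma)^2$, which follows from boundedness of $Q$ up to time $K$.

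\textbf{Main obstacle.} I expect the hardest part to be Step 2 under IS: controlling Markovian noise when the iterates are unbounded and the target policy is time-varying, while keeping every error multiplied either by $\alpha_t^2z_t$ or by quantities dominated by $L_t$. The most delicate point is getting the correct power of $C_r$ so that the small-gain condition closes.
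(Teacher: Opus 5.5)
Your architecture matches the paper's, but the critic Lyapunov function is genuinely different, so this is an alternative route rather than a gap. The shared parts are:
\begin{itemize}
\item the actor inequality, which is Proposition~\ref{prop:actor} (approximate monotone improvement plus Bellman contraction);
\item a moving-target term controlled by $\omega_t$ times vanishing quantities, as in Lemma~\ref{lem:target_shift};
\item Markovian noise handled by looking back $z_t$ steps;
\item a small-gain combination of the two drifts.
\end{itemize}

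The paper takes $W_t=\tfrac12\|Q_t-Q^{\pi_t}\|_\nu^2$. This uses the fact that $\bar F(Q,\pi)=(I-D)Q+D\mathcal{H}_\pi Q$ is a $\gamma_c$-contraction in a weighted $\ell_2$ norm whose weights do not depend on $\pi$. The inner-product structure gives an exact expansion $W_{t+1}=W_t+T_1+T_2+T_3+T_4$ with no smoothing parameters, which is what makes the explicit constants ($216$, $432$, $M_{\mathrm{IS}}$) cheap to track.

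The price is the conversion $\|x\|_\infty^2\le\|x\|_\nu^2/\nu_{\min}$. It is paid in the IS noise bounds and in the cross-domination step, where the actor's coupling term becomes $12\omega_tW_t/((1-\gamma)^3\nu_{\min})$ and forces $\tilde C_r\propto\nu_{\min}$. These $1/\nu_{\min}$ factors enter as $mn/((1-\gamma)\pi_{b,\min}\mu_{b,\min})$, and they produce all the $mn$ factors in Corollary~\ref{cor:complexity}.

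Your Moreau envelope of $\tfrac12\|\cdot\|_\infty^2$ instead uses the plain $\ell_\infty$ contraction of $\bar F$, with factor $1-(1-\gamma)\mu_{b,\min}\pi_{b,\min}$. That is the same geometry as the actor inequality, so the coupling needs no norm conversion. You would plausibly trade the $\nu_{\min}$ factors for $\log(mn)$ factors from the $\ell_p$ smoothing. The cost is carrying smoothness and norm-equivalence constants through every term.

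Your weighting $\|Q^*-Q^{\pi_t}\|_\infty^2+\beta W_t$ with $\beta\propto C_r$ is an equivalent bookkeeping of the paper's unweighted sum. The paper instead tunes the AM--GM step in $T_3$ so that the critic's actor term is $\tfrac{1-\gamma}{2}\omega_t\|Q^*-Q^{\pi_t}\|_\infty^2$, exactly half the squared actor drift.

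Several points need correcting when you write this out.

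\textbf{IS versus ETD.} The two critics have the same mean operator: the paper checks $\mathbb{E}_{U\sim\mu_U}F_{\mathrm{ETD}}(Q,U,\pi)=\bar F(Q,\pi)$. So they have the same contraction factor, and that is not the source of the difference in $M_{\mathrm{critic}}$. The gap comes from two things:
\begin{itemize}
\item the $1/\pi_{b,\min}$ Lipschitz and growth constants of $F_{\mathrm{IS}}$, with unbounded iterates and the accompanying $1/\nu_{\min}$ conversions;
\item the almost-sure bound $Q_t\in[0,1/(1-\gamma)]$ for ETD, which makes every ETD noise term state-independent.
\end{itemize}

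\textbf{Squared actor rate.} When you square the actor inequality, keep the full factor $1-\omega_t(1-\gamma)$. Young's inequality with weight $\omega_t(1-\gamma)$ on the cross term gives $(1-\omega_t(1-\gamma))^2+(1-\omega_t(1-\gamma))\omega_t(1-\gamma)=1-\omega_t(1-\gamma)$. If you start from $\omega_t(1-\gamma)/2$ and then pay for the critic's cross term, the combined rate falls strictly below $(1-\gamma)/2$. That changes the exponents and the threshold $\omega=2/(1-\gamma)$ in part (2).

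\textbf{Target shift.} Your opening bound $\omega_t\|\tilde\pi_t-\pi_t\|/(1-\gamma)^2$ is only $O(\omega_t)$. Its policy-difference factor is not controlled by the actor gap or the critic error, since two near-optimal policies can differ arbitrarily. What you need is the value-level identity $(I-\gamma\widehat P_{\pi_{t+1}})(Q^{\pi_{t+1}}-Q^{\pi_t})=\omega_t(\mathcal{H}_{\tilde\pi_t}Q^{\pi_t}-Q^{\pi_t})$. Its right side lies between $-\omega_t(2\gamma\|Q_t-Q^{\pi_t}\|_\infty+\chi_t)\mathbf{1}$ and $\omega_t(Q^*-Q^{\pi_t})$.

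\textbf{Constants.} The constants in your final recursion are the paper's, and they will not come out of the envelope route. You would obtain the same three rates with different problem-dependent constants. Matching the literal $M_{\mathrm{critic}}$ requires the paper's weighted-$\ell_2$ bookkeeping.
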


Recall that $Q^*$ is the $Q$-function of an optimal policy, whereas $Q^{\pi_T}$ is the $Q$-function of the last policy iterate $\pi_T$. Thus, the left-hand side measures the mean-square optimality gap of $\pi_T$. On the right-hand side, the bounds decompose into three terms (for the diminishing-stepsize cases, after expanding $M'_{\mathrm{critic},T}$): the optimization bias, the error induced by a nonzero temperature, and the stochastic error. For approximate policy iteration with either exponential softmax or $\epsilon$-greedy updates, setting $\tau=0$ yields the strongest form of the bound. This is possible because exploration is handled by off-policy TD-learning. At the same time, our analysis allows for nonzero $\tau$, which is important in practice, as smoother regularized or trust-region updates, such as NPG and its variants TRPO and PPO, are often preferred for stabilizing policy updates \cite{kakade2001natural,schulman2015trust,schulman2017ppo}.

To obtain the optimal convergence rate in Theorem~\ref{thm:main}, we use harmonic stepsizes with $\omega>2/(1-\gamma)$. Other stepsize choices offer different trade-offs. Constant stepsizes yield geometric decay of the bias but leave a nonvanishing stochastic error, while polynomial stepsizes guarantee convergence for any $\omega$, making the choice more robust at the cost of a slower rate.

The bounds have the same structure and rates for the IS and ETD updates, but ETD has milder dependence on the problem parameters $m,n,(1-\gamma)^{-1},\pi_{b,\min}$, and $\mu_{b,\min}$ (cf. Eq. \eqref{def:M_constant}). This is expected: by directly computing the conditional expectation, ETD avoids the additional stochasticity introduced by importance sampling, although at a higher computational cost.

The proof of Theorem~\ref{thm:main} is presented in Section \ref{sec:Proof}. As a direct consequence, we obtain the following sample complexity bound.

\begin{corollary}
\label{cor:complexity}
Given $\epsilon>0$, to achieve
$\mathbb{E}\|Q^* - Q^{\pi_T}\|_\infty < \epsilon$, the number of samples required by Algorithm~\ref{alg} is 
\begin{align*}
&\mathcal{O}\left(\frac{m^3 n^3} {(1-\gamma)^{15} \pi_{b,\min}^{7} \mu_{b,\min}^{5} \log(1/\sigma_b)} \frac{ \log(1/\epsilon)}{\epsilon^{2}}\right)\text{ with the IS-based critic,}\\
&\mathcal{O}\left(\frac{m^2 n^2} {(1-\gamma)^{14} \pi_{b,\min}^{4} \mu_{b,\min}^{4} \log(1/\sigma_b)} \frac{ \log(1/\epsilon)}{\epsilon^{2}}\right)\text{ with the ETD-based critic.}
\end{align*}
\end{corollary}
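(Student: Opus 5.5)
We derive the corollary from part (2) of Theorem~\ref{thm:main}, using harmonic stepsizes ($\eta=1$) with $\omega>2/(1-\gamma)$, temperature $\tau=0$ (or $\tau$ of order $\epsilon(1-\gamma)^{3/2}$, which only changes constants), and Jensen's inequality $\mathbb{E}\|Q^*-Q^{\pi_T}\|_\infty\le\sqrt{\mathrm{MSE}_T}$. It therefore suffices to pick $T$ so that $\mathrm{MSE}_T<\epsilon^2$. We fix $\omega=4/(1-\gamma)$, so that $\omega(1-\gamma)/2=2$. The bias term then becomes
\[
\frac{3}{(1-\gamma)^2}\Big(\frac{K+h}{T+h}\Big)^{2},
\]
and the stochastic term becomes
\[
\frac{8eM'_{\mathrm{critic},T}\,\omega}{2(T+h)}, \qquad M'_{\mathrm{critic},T}\approx \frac{216M_{\mathrm{critic}}z_T\,\omega}{C_r^2}.
\]

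The plan proceeds in four steps.

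\begin{enumerate}
\item \textbf{Bound the mixing quantities.} Geometric mixing $\mathrm{d}_{\mathrm{TV}}\le 2\sigma_b^t$ gives $z_T\le 1+\log(4/\omega_T)/\log(1/\sigma_b)=\mathcal{O}\big(\log(T+h)/\log(1/\sigma_b)\big)$. Consequently $K$ is $\mathcal{O}\big(\log(\cdot)/\log(1/\sigma_b)\big)$ and contributes only lower-order terms.

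\item \textbf{Quantify the stepsize constraints.} We need $C_r=\omega/\alpha\le\tilde C_r$, so $\alpha=\omega/C_r$. The requirement that $\alpha_t=\alpha/(t+h)$ be small forces $h\gtrsim\alpha$. Polynomial dependence of $\tilde C_r$ on $(1-\gamma),\pi_{b,\min},\mu_{b,\min}$ must be extracted from the proof of Theorem~\ref{thm:main}, where it arises from the cross-domination condition; I would expect $\tilde C_r\sim(1-\gamma)^{a}\pi_{b,\min}^{b}\mu_{b,\min}^{c}$ for the IS critic.

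\item \textbf{Solve for $T$.} The stochastic term is
\[
\mathcal{O}\Big(\frac{M_{\mathrm{critic}}\,\omega^2\log(T)}{C_r^2\,\log(1/\sigma_b)\,T}\Big),
\]
and requiring it to be below $\epsilon^2/2$ gives
\[
T=\mathcal{O}\Big(\frac{M_{\mathrm{critic}}}{(1-\gamma)^2\tilde C_r^2\log(1/\sigma_b)}\cdot\frac{\log(1/\epsilon)}{\epsilon^2}\Big),
\]
using the standard fact that $T\ge c\log T$ holds for $T\gtrsim c\log c$. The bias term decays like $1/T^2$, so it needs only $T\gtrsim (K+h)/((1-\gamma)\epsilon)$, which is dominated.

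\item \textbf{Substitute the constants.} With $M_{\mathrm{IS}}=mn(1-\gamma)^{-3}\pi_{b,\min}^{-3}\mu_{b,\min}^{-1}$ and $M_{\mathrm{ETD}}=(1-\gamma)^{-2}$, the target exponents require $\tilde C_r^{-2}\sim m^2n^2(1-\gamma)^{-10}\pi_{b,\min}^{-4}\mu_{b,\min}^{-4}$ in both cases. This is consistent: the extra $mn\,\pi_{b,\min}^{-3}\mu_{b,\min}^{-1}(1-\gamma)^{-1}$ in the IS bound is exactly the ratio $M_{\mathrm{IS}}/M_{\mathrm{ETD}}$. So $\tilde C_r$ should be of order $mn\,(1-\gamma)^{-5}\pi_{b,\min}^{-2}\mu_{b,\min}^{-2}$ inverted, i.e.\ critic-independent, and I would verify this against the cross-domination constants.
\end{enumerate}

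The main obstacle is step 2. The theorem only asserts that $\tilde C_r$ exists, so the corollary requires tracking its explicit dependence through the coupled Lyapunov argument. Concretely, one must find the ratio at which the critic's contraction (of rate $\sim(1-\gamma)\mu_{b,\min}\pi_{b,\min}$ in a suitable norm, with norm-equivalence constants in $mn$) dominates the actor-induced drift in the target $Q^{\pi_{t+1}}$, which is of order $\omega/(1-\gamma)^2$. I would first read off the condition $C_r\lesssim$ (critic contraction)$/$(Lipschitz constant of $\pi\mapsto Q^\pi$ times norm-equivalence constants) from the combination step. Then I would confirm that it produces the exponents above, absorbing any $h$-dependence into constants.
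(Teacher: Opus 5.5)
Your overall route is the paper's. You use harmonic stepsizes with $\omega(1-\gamma)>2$ (the paper takes $\omega=3/(1-\gamma)$ rather than your $4/(1-\gamma)$, which is immaterial), Jensen's inequality, and let the $z_T/(T+h)$ stochastic term of Theorem~\ref{thm:main}(2) dominate. Then $z_T=\mathcal{O}(\log T/\log(1/\sigma_b))$, and you invert $(\log T)/T\lesssim\epsilon^2/\Lambda$. The gap is where you locate it yourself: Steps~2 and~4 never establish how $\tilde C_r$ scales. Reading off $\tilde C_r^{-2}\sim m^2n^2(1-\gamma)^{-10}\pi_{b,\min}^{-4}\mu_{b,\min}^{-4}$ from the exponents you are trying to prove is circular. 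The forward mechanism you sketch also does not recover it. Dividing the critic contraction $1-\gamma_c$ by the $(1-\gamma)^{-2}$ Lipschitz constant of $\pi\mapsto Q^\pi$ and a norm-equivalence factor puts $(1-\gamma)^2$ in front of $(1-\gamma_c)$. The analysis actually imposes $(1-\gamma)^3$, so that heuristic would not reproduce the stated exponents.

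The missing input is the explicit threshold derived in the appendix: $\tilde C_r=(1-\gamma)^3(1-\gamma_c)\nu_{\min}/20$ for IS and $(1-\gamma)^3(1-\gamma_c)\nu_{\min}/16$ for ETD. It is set by cross-domination in Lemma~\ref{big:coupled}, not by the target drift.
\begin{itemize}
\item The squared actor inequality (Corollary~\ref{cor:actor}) carries the critic-coupling term $\frac{6\omega_t}{(1-\gamma)^3}\|Q_t-Q^{\pi_t}\|_\infty^2\le\frac{12\omega_t}{(1-\gamma)^3\nu_{\min}}W_t$, and this must be absorbed by the critic's $-\alpha_t(1-\gamma_c)W_t$.
\item The extra $(1-\gamma)^{-2}$, relative to the coefficient $2/(1-\gamma)$ in Proposition~\ref{prop:actor}, is the cost of the AM--GM step that keeps the contraction factor of $V_t$ at $1-\omega_t(1-\gamma)$.
\item The $T_3$ contribution $\frac{8\omega_t}{(1-\gamma)^3\sqrt{\nu_{\min}}}W_t$ from Lemma~\ref{lem:target} is of the same order in $(1-\gamma)$ and weaker in $\nu_{\min}$.
\end{itemize}
Combine this with $1-\gamma_c\ge(1-\gamma)\pi_{b,\min}\mu_{b,\min}/2$ (from $\sqrt{1-x}\le 1-x/2$) and $\nu_{\min}\ge(1-\gamma)\pi_{b,\min}\mu_{b,\min}/(mn)$ (Lemma~\ref{lem:F}). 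This gives $\tilde C_r\ge(1-\gamma)^5\pi_{b,\min}^2\mu_{b,\min}^2/(40mn)$ (resp.\ $/(32mn)$), exactly the critic-independent scaling you guessed. With $\alpha=\omega/\tilde C_r$, your Step~3 then yields $M_{\mathrm{critic}}(1-\gamma)^{-2}\tilde C_r^{-2}\cdot\log(1/\epsilon)/(\log(1/\sigma_b)\,\epsilon^2)$ and both stated bounds. Separately, under harmonic stepsizes the temperature contribution is already $\mathcal{O}(\tau^2/((1-\gamma)^4T))$, so $\tau$ can stay a fixed constant as in the paper; no $\epsilon$-dependent choice is needed.
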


To the best of our knowledge, this is the first last-iterate convergence result for a single-loop actor--critic algorithm achieving $\tilde{\mathcal{O}}(\epsilon^{-2})$ sample complexity, which is minimax optimal with respect to $\epsilon$ up to logarithmic factors \cite{gheshlaghi2013minimax}. Notably, we do not impose strong, algorithm-dependent assumptions on the policies generated along the algorithm trajectory. While the dependence on $\epsilon$ is near optimal, the dependence on problem-specific constants such as $n$, $m$, and $(1-\gamma)^{-1}$ is likely suboptimal. Recall that even for value-based methods such as $Q$-learning \cite{watkins1992q}, attaining minimax sample complexity with respect to all constants requires more refined techniques, such as variance reduction \cite{wainwright2019variance}. Developing analogous algorithmic or technical improvements for single-loop actor--critic methods is an interesting direction for future work.

\subsection{Proof of Theorem \ref{thm:main}}
\label{sec:Proof}

The main challenge in analyzing Algorithm~\ref{alg} is that, due to its single-loop structure, the actor and critic iterates are coupled, and therefore the analysis cannot be decoupled as in nested-loop actor--critic schemes. Moreover, under off-policy learning with importance sampling, the critic iterates are not uniformly bounded. To address these challenges, we adopt a Lyapunov drift analysis, a standard approach in the study of dynamical systems \cite{khalil2002nonlinear,srikant2019finite,borkar2000ode}. Specifically, we construct one Lyapunov function for the actor and another for the critic, establish a drift inequality for each, and then solve the resulting recursions to derive convergence rates. Crucially, because the update equations are coupled, the drift inequalities contain additional error terms that capture this interaction. As a result, establishing an appropriate \textit{cross-domination structure} is essential for obtaining the desired guarantees. We next outline this approach in more detail.

For the actor, we denote the Lyapunov function by $V_t$, which depends on $Q^*-Q^{\pi_t}$ and captures the suboptimality gap of the policy $\pi_t$. For the critic, we denote the Lyapunov function by $W_t$, which depends on $Q_t-Q^{\pi_t}$ and captures the critic estimation error. The explicit forms of these Lyapunov functions will be presented in the next two sections, where we provide the detailed analysis. Our goal is to derive Lyapunov drift inequalities of the form
\begin{align}
  \mathbb{E}V_{t+1} \le (1 - \kappa_1 \omega_t)\mathbb{E}V_t  +D_{1,t} \mathbb{E}W_t + o(\omega_t), \quad 
  \mathbb{E}W_{t+1} \le (1 - \kappa_2 \alpha_t)\mathbb{E}W_t + D_{2,t} \mathbb{E}V_t + o(\alpha_t),\label{eq:desired_drift}
\end{align}
where $\kappa_1,\kappa_2$ are positive constants, and $D_{1,t},D_{2,t}$ depend on the stepsizes. These drift inequalities are coupled: $W_t$ appears as an additive error term in the drift inequality for $V_t$, and vice versa. To solve the recursions, it is essential to ensure, while deriving the separate drift inequalities, that $D_{2,t}<\kappa_1\omega_t$ and $D_{1,t}<\kappa_2\alpha_t$, a condition we call \textit{cross-domination}. Once this condition holds, we can sum the two Lyapunov drift inequalities to obtain a clean recursion for $V_t+W_t$, which, upon iteration, yields the convergence of Algorithm~\ref{alg}. This idea is inspired by the small-gain analysis of continuous-time dynamical systems \cite{jiang1994small}. Related approaches have been used in actor--critic analysis, but only to establish stationary-point convergence \cite{olshevsky2023small}.

We next follow the above plan and present the details of the analysis. Throughout the rest of this subsection, we assume that the assumptions in Theorem~\ref{thm:main} hold.

\subsubsection{Analysis of the Actor}\label{subsec:actor_analysis}

To present our analysis of the actor, we begin by introducing the Bellman operators. Given a policy $\pi$, let $\mathcal{H}_\pi:\mathbb{R}^{mn}\to \mathbb{R}^{mn}$ be the Bellman operator associated with $\pi$, defined as 
\begin{align*}
    [\mathcal{H}_\pi(Q)](s,a)=\mathcal{R}(s,a)+\gamma \sum_{s'}p(s'\mid s,a)\sum_{a'}\pi(a'|s')Q(s',a'),\quad \forall\,(s,a).
\end{align*}
Let $\mathcal{H}:\mathbb{R}^{mn}\to \mathbb{R}^{mn}$ be the Bellman optimality operator defined as 
\begin{align*}
    [\mathcal{H}(Q)](s,a)=\mathcal{R}(s,a)+\gamma \sum_{s'}p(s'\mid s,a)\max_{a'}Q(s',a'),\quad \forall\,(s,a).
\end{align*}
We use $\|Q^*-Q^{\pi_t}\|_\infty$ as our Lyapunov function for the actor.

The following proposition presents its Lyapunov drift inequality. See Appendix~\ref{app:actor} for its proof.

\begin{proposition}
\label{prop:actor}
The following inequality holds for all $t$:
\begin{align}\label{eq:actor_drift}
&\infnorm{Q^* - Q^{\pi_{t+1}}}
\le
\underbrace{\big(1 - \omega_t(1-\gamma)\big) \infnorm{Q^* - Q^{\pi_t}}}_{\text{actor drift}} +
\underbrace{\frac{2\omega_t}{1-\gamma} \infnorm{Q_t - Q^{\pi_t}}}_{\text{critic coupling error}}
+
\underbrace{\frac{\omega_t}{1-\gamma}\chi_t}_{\text{temperature error}},
\end{align}
where $\chi_t := \infnorm{\mathcal{H}Q_t - \mathcal{H}_{\tilde{\pi}_t} Q_t}$. 
\end{proposition}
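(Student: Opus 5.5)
The plan is to compare $Q^{\pi_{t+1}}$ with $Q^{\pi_t}$ through the policy-evaluation identity, exploiting that $\pi\mapsto\mathcal{H}_\pi(Q)$ is affine in $\pi$. Throughout, write $e_t:=\infnorm{Q_t-Q^{\pi_t}}$, let $P_\pi$ be the $(s,a)\to(s',a')$ transition matrix under $\pi$, and assume $\omega_t\le 1$ (the small-stepsize regime of Theorem~\ref{thm:main}). Since $Q^* - Q^{\pi_{t+1}}\ge 0$ componentwise, it suffices to produce a componentwise \emph{upper} bound on $Q^*-Q^{\pi_{t+1}}$ by a vector whose entries are bounded by the right-hand side of \eqref{eq:actor_drift}.

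\textbf{Step 1 (one-step improvement identity).} Because $Q^{\pi_{t+1}}=\mathcal{H}_{\pi_{t+1}}Q^{\pi_{t+1}}$ and $\mathcal{H}_{\pi_{t+1}}$ is affine with linear part $\gamma P_{\pi_{t+1}}$, we have
\[
Q^{\pi_{t+1}}-Q^{\pi_t}=(I-\gamma P_{\pi_{t+1}})^{-1}\big(\mathcal{H}_{\pi_{t+1}}Q^{\pi_t}-Q^{\pi_t}\big).
\]
Define $\delta_t:=\mathcal{H}_{\pi_{t+1}}Q^{\pi_t}-Q^{\pi_t}$. Since $\pi_{t+1}=(1-\omega_t)\pi_t+\omega_t\tilde\pi_t$ and $\mathcal{H}_{\pi_t}Q^{\pi_t}=Q^{\pi_t}$, affinity in $\pi$ gives
\[
\delta_t=\omega_t\big(\mathcal{H}_{\tilde\pi_t}Q^{\pi_t}-Q^{\pi_t}\big).
\]

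\textbf{Step 2 (lower bound on $\delta_t$ via the critic).} Swap $Q^{\pi_t}$ for $Q_t$ inside $\mathcal{H}_{\tilde\pi_t}$, paying $\gamma e_t$. Then use the definition of $\chi_t$ to replace $\mathcal{H}_{\tilde\pi_t}Q_t$ by $\mathcal{H}Q_t$. Finally use that $\mathcal{H}$ is a $\gamma$-contraction to go back from $\mathcal{H}Q_t$ to $\mathcal{H}Q^{\pi_t}$, paying another $\gamma e_t$. Together these give, componentwise,
\[
\delta_t\ge \omega_t\big(\mathcal{H}Q^{\pi_t}-Q^{\pi_t}\big)-\omega_t c_t\mathbf{1},\qquad c_t:=2\gamma e_t+\chi_t.
\]
The first term is nonnegative, since $\mathcal{H}Q^{\pi_t}\ge\mathcal{H}_{\pi_t}Q^{\pi_t}=Q^{\pi_t}$. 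The matrix $(I-\gamma P_{\pi_{t+1}})^{-1}=\sum_k\gamma^kP^k_{\pi_{t+1}}$ is entrywise nonnegative, dominates $I$, and maps $\mathbf{1}$ to $\mathbf{1}/(1-\gamma)$. Hence
\[
Q^{\pi_{t+1}}-Q^{\pi_t}\ge \omega_t\big(\mathcal{H}Q^{\pi_t}-Q^{\pi_t}\big)-\frac{\omega_t c_t}{1-\gamma}\mathbf{1}.
\]

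\textbf{Step 3 (contraction).} Rearranging, and using $Q^*=\mathcal{H}Q^*$,
\[
Q^*-Q^{\pi_{t+1}}\le (1-\omega_t)(Q^*-Q^{\pi_t})+\omega_t\big(\mathcal{H}Q^*-\mathcal{H}Q^{\pi_t}\big)+\frac{\omega_t c_t}{1-\gamma}\mathbf{1}.
\]
The left side is nonnegative, and $\infnorm{\mathcal{H}Q^*-\mathcal{H}Q^{\pi_t}}\le\gamma\infnorm{Q^*-Q^{\pi_t}}$. Taking sup norms therefore gives
\[
\infnorm{Q^*-Q^{\pi_{t+1}}}\le (1-\omega_t(1-\gamma))\infnorm{Q^*-Q^{\pi_t}}+\frac{\omega_t(2\gamma e_t+\chi_t)}{1-\gamma}.
\]
Bounding $2\gamma\le 2$ yields \eqref{eq:actor_drift}.

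The main obstacle is Step 2, specifically the sign bookkeeping. The only way to get a contraction while passing through $(I-\gamma P)^{-1}$ (whose norm is $1/(1-\gamma)$) is to keep the improvement term $\mathcal{H}Q^{\pi_t}-Q^{\pi_t}$ nonnegative, so that positivity of the resolvent lets us drop it down to $I$ rather than amplify it. Only the error terms then pick up the $1/(1-\gamma)$ factor, which is exactly the structure of the stated bound.
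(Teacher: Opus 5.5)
Your proof is correct and reaches the same intermediate constant as the paper, $\omega_t(2\gamma\|Q_t-Q^{\pi_t}\|_\infty+\chi_t)/(1-\gamma)$, before relaxing $2\gamma\le 2$. The difference is in how you handle the feedback through policy evaluation.

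The paper never inverts $I-\gamma P_{\pi_{t+1}}$. It works with the scalar worst-case deficit $\max_{s,a}(Q^{\pi_t}(s,a)-Q^{\pi_{t+1}}(s,a))$, which it calls $\delta_t$ (unlike your vector $\delta_t$). It inserts $Q^{\pi_{t+1}}\ge Q^{\pi_t}-\delta_t\mathbf{1}$ into the fixed-point equation and uses only monotonicity and the shift rule $\mathcal{H}_{\pi}(Q-c\mathbf{1})=\mathcal{H}_{\pi}Q-\gamma c\mathbf{1}$. This gives the self-bounding inequality $\delta_t\le\omega_t(2\gamma\|Q_t-Q^{\pi_t}\|_\infty+\chi_t)+\gamma\delta_t$, which rearranges to \eqref{eq:delta_bound}. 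The contraction step then reuses \eqref{eq:mono} with $\gamma\delta_t$ carried as a separate error. So the paper's factor $1/(1-\gamma)$ appears as $1+\gamma/(1-\gamma)$, whereas yours comes from $(I-\gamma P_{\pi_{t+1}})^{-1}\mathbf{1}=\mathbf{1}/(1-\gamma)$.

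Your exact resolvent identity gives a single vector inequality that keeps the greedy advantage $\omega_t(\mathcal{H}Q^{\pi_t}-Q^{\pi_t})\ge 0$. Both the approximate-monotonicity bound \eqref{eq:delta_bound} and the contraction follow from it in one pass. Combined with $\mathcal{H}_{\tilde{\pi}_t}Q^{\pi_t}-Q^{\pi_t}\le Q^*-Q^{\pi_t}$, the same identity also gives the other half of Lemma~\ref{lem:target_shift} directly.

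The paper's route, in exchange, is purely order-theoretic: it uses monotonicity, translation equivariance and sup-norm contraction, and never uses that $\mathcal{H}_{\pi}$ is affine in $Q$. It therefore avoids the Neumann series and would adapt more readily to nonlinear monotone evaluation operators.

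Your standing assumption $\omega_t\le 1$ is harmless. The paper uses it implicitly at the same point, when bounding $(1-\omega_t)(Q^*-Q^{\pi_t})$ by $(1-\omega_t)\|Q^*-Q^{\pi_t}\|_\infty\mathbf{1}$, and it is needed anyway for $\pi_{t+1}$ to be a policy.
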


In Proposition~\ref{prop:actor}, the first term on the right-hand side represents the actor drift and exhibits a contraction structure. To see this, consider the special case where (i) there is no critic evaluation error, i.e., $Q_t = Q^{\pi_t}$, and (ii) we perform a greedy update, i.e., $\chi_t = 0$ and $\omega_t \equiv 1$. In this case, Algorithm~\ref{alg} reduces to policy iteration, and Proposition~\ref{prop:actor} simplifies to $\infnorm{Q^* - Q^{\pi_{t+1}}}
\le
\gamma \infnorm{Q^* - Q^{\pi_t}}$,
which is consistent with the geometric convergence of policy iteration \cite{puterman2014markov}.

The critic evaluation error $\infnorm{Q_t - Q^{\pi_t}}$ will be analyzed in the next subsection. As for the last term $\chi_t$, it captures the deviation from a greedy update and vanishes as $\tau_t \to 0$ for all three update rules presented in Section~\ref{subsec:algo}. For this reason, we refer to the last term as the temperature error.

The complete proof of Proposition~\ref{prop:actor} is provided in Appendix~\ref{app:actor}. Here, we present a proof sketch highlighting the main ideas.

\begin{proof}[Proof Sketch of Proposition~\ref{prop:actor}]
We adopt an approximate policy iteration viewpoint of Algorithm \ref{alg}. Recall that the two key steps in establishing the geometric convergence of policy iteration are: (i) using the monotonicity of the Bellman operators to show monotonic improvement, namely $ Q^{\pi_t}\leq  Q^{\pi_{t+1}} $, where the inequalities are interpreted entry-wise, and (ii) using the contraction property of the Bellman operators to show that
$\infnorm{Q^* - Q^{\pi_{t+1}}} \le \gamma \infnorm{Q^* - Q^{\pi_t}}$ \cite{puterman2014markov}. In our setting, two deviations from exact policy iteration must be accounted for: (i) policy improvement is performed using an estimate $Q_t$ rather than the true $Q^{\pi_t}$, which introduces dependence on the evaluation error $\infnorm{Q_t - Q^{\pi_t}}$, and (ii) the policy update is conservative, and the target $\tilde{\pi}_t$ is only approximately greedy with respect to $Q_t$, which introduces dependence on the temperature error $\chi_t$. In this case, we show that monotonic improvement holds up to additive error terms corresponding to these deviations. In particular, let $\delta_t := \max_{s,a} \bigl(Q^{\pi_t}(s,a) - Q^{\pi_{t+1}}(s,a)\bigr)$. We show that
\begin{align*}
\delta_t
\le
\frac{2\omega_t}{1-\gamma} \infnorm{Q_t - Q^{\pi_t}}
+
\frac{\omega_t }{1-\gamma}\chi_t.
\end{align*}
Notably, when $Q_t = Q^{\pi_t}$ and $\chi_t = 0$, we recover monotonic improvement, i.e., $\delta_t \le 0$.

Using the above bound together with the contraction property of the Bellman operators, we obtain the inequality \eqref{eq:actor_drift}.
\end{proof}

\subsubsection{Analysis of the Critic}
\label{subsec:analysis_critic}

We present the analysis of the critic only for the IS-based critic, as the analysis for the ETD-based critic is similar. The critic update can be viewed as a stochastic approximation scheme that tracks a \emph{moving target} $Q^{\pi_t}$ \cite{robbins1951stochastic}. Stochastic approximation with fixed targets has been extensively studied using Lyapunov methods \cite{srikant2019finite,borkar2009stochastic}. In particular, off-policy TD-learning with a fixed target policy has been analyzed via weighted $\ell_2$-norm Lyapunov functions \cite{chen2021finite}. Our setting is substantially more challenging due to the coupling with the actor and the resulting time-varying policies.

We begin by reformulating the critic update as a stochastic approximation algorithm. For any $t\geq 0$, let $\{Y_t\}$ be a stochastic process defined as $Y_t=(S_t,A_t,S_{t+1},A_{t+1})$. It is clear that $\{Y_t\}$ is a Markov chain with a finite state space, denoted by $\mathcal{Y}$. Moreover, under Assumption~\ref{assum:behavior}, the Markov chain $\{Y_t\}$ admits a unique stationary distribution, denoted by $\mu_Y$, which satisfies $\mu_Y(s_1,a_1,s_2,a_2)=\mu_b(s_1)\pi_b(a_1|s_1)p(s_2|s_1,a_1)\pi_b(a_2|s_2)$. Let $F_\mathrm{IS}:\mathbb{R}^{mn}\times \mathcal{Y}\times {(\Delta^{m})}^n\to \mathbb{R}^{mn}$ be an operator such that given inputs $Q \in \mathbb{R}^{mn}$,
$y = (s_1,a_1,s_2,a_2) \in \mathcal{Y}$ and $\pi\in{(\Delta^{m})}^n$, the $(s,a)$-th component of the output is defined as
\begin{align*} 
[F_\mathrm{IS}(Q,y,\pi)](s,a) =\mathds{1}_{\{ (s,a)= (s_1,a_1)\}}\left(\mathcal{R}(s_1,a_1) + \gamma \frac{\pi(a_2|s_2)}{\pi_b(a_2| s_2)} Q(s_2,a_2)\right)+\mathds{1}_{\{ (s,a)\neq (s_1,a_1)\}}Q(s,a).
 \end{align*}
The critic update in Algorithm \ref{alg}, Line 10, can now be written compactly as
\begin{align*} 
Q_{t+1} = Q_t + \alpha_t \bigl(F_\mathrm{IS}(Q_t, Y_t, \pi_{t+1}) - Q_t\bigr).
 \end{align*}
We further define the operator $\bar{F}: \mathbb{R}^{mn}\times {(\Delta^{m})}^n\to\mathbb{R}^{mn}$ as $\bar{F}(Q,\pi)=\mathbb{E}_{Y\sim \mu_Y}[F_\mathrm{IS}(Q,Y,\pi)]$. It is now clear that the critic update is a Markovian stochastic approximation scheme for tracking the solution to the (time-varying) fixed-point equation $\bar{F}(Q,\pi_t)=Q$. While both $F$ and $\bar{F}$ depend on $\pi_b$, since $\pi_b$ is fixed throughout the learning process, we omit this dependence from our notation.

Next, we present several key properties of these operators that facilitate the convergence analysis. In particular, we show that the equation $\bar{F}(Q,\pi_t)=Q$ admits $Q^{\pi_t}$ as its unique solution, and that $\bar{F}(\cdot,\pi)$ is a contractive operator with respect to a weighted $\ell_2$ norm. For ease of presentation, for any nonnegative integers $i\le j$, we use the shorthand $\alpha_{i,j}:=\sum_{u=i}^j\alpha_u$ and $\omega_{i,j}:=\sum_{u=i}^j\omega_u$. The proof of the following result is provided in Appendix~\ref{app:F_IS}.

\begin{lemma}\label{lem:F}
There exists $\nu \in (\Delta^{m})^n$ with
$\nu_{\min}:=\min_{s,a}\nu(s,a) \ge (1-\gamma)\mu_{b,\min}\pi_{b,\min}/(nm)$
such that the following properties hold.
\begin{enumerate}[(1)]
\item $\bar{F}(\cdot, \pi)$ is a contraction mapping with respect to $\|\cdot\|_\nu$ for all $\pi\in{(\Delta^{m})}^n$, with contraction ratio $\gamma_c := (1-(1-\gamma)\mu_{b,\min}\pi_{b,\min})^{1/2}$. Moreover, the fixed-point equation $\bar{F}(Q,\pi)=Q$ admits a unique solution $Q^{\pi}$.
\item For all $y\in\mathcal{Y}$ and $\pi\in{(\Delta^{m})}^n$, the operator $F_\mathrm{IS}(\cdot, y,\pi)$ satisfies
\begin{align*}
    \|F_\mathrm{IS}(Q_1, y,\pi)-F_\mathrm{IS}(Q_2, y,\pi)\|_\nu\leq\,& \frac{1}{\pi_{b,\min}\sqrt{\nu_{\min}}}\|Q_1-Q_2\|_\nu,\\
    \|F_\mathrm{IS}(Q_1, y,\pi)-F_\mathrm{IS}(Q_2, y,\pi)\|_\infty\leq\,& \frac{1}{\pi_{b,\min}}\|Q_1-Q_2\|_\infty,
\end{align*}
for all $Q_1,Q_2\in\mathbb{R}^{mn}$,
where $\|\cdot\|_\nu$ denotes the weighted $\ell_2$ norm with weights $\{\nu(s,a)\}_{(s,a)\in\mathcal{S}\times \mathcal{A}}$. 
\item For all non-negative integers $t_1<t_2$, $\|\bar{F}(Q,\pi_{t_1}) - \bar{F}(Q,\pi_{t_2})\|_\infty \leq 2\omega_{t_1,t_2-1}\|Q\|_\infty$ for all $Q\in\mathbb{R}^{mn}$.
\item For all non-negative integers $t_1<t_2$, $\|F_\mathrm{IS}(Q,y,\pi_{t_1}) - F_\mathrm{IS}(Q,y,\pi_{t_2})\|_\infty \leq 2\omega_{t_1,t_2-1}\|Q\|_\infty/\pi_{b,\min}$ for all $y\in\mathcal{Y}$ and $Q\in\mathbb{R}^{mn}$.
\end{enumerate}
\end{lemma}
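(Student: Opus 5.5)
Write $D=\mathrm{diag}(\mu_b(s)\pi_b(a|s))$. Averaging $F_\mathrm{IS}$ over $\mu_Y$ and using $\mathbb{E}[\pi(A'|S')/\pi_b(A'|S')\,Q(S',A')\mid S']=\sum_{a'}\pi(a'|S')Q(S',a')$ gives the explicit form
\begin{align*}
\bar F(Q,\pi)=(I-D)Q+D\,\mathcal{H}_\pi(Q).
\end{align*}
This is the starting point for all four parts.

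For part (1), I would argue as follows. Since $\mathcal{H}_\pi$ is a $\gamma$-contraction in $\|\cdot\|_\infty$ with fixed point $Q^\pi$, $\bar F(\cdot,\pi)$ has fixed point $Q^\pi$, and $\|\bar F(Q_1,\pi)-\bar F(Q_2,\pi)\|_\infty\le(1-(1-\gamma)d_{\min})\|Q_1-Q_2\|_\infty$ with $d_{\min}\ge\mu_{b,\min}\pi_{b,\min}$. This gives uniqueness. For the weighted $\ell_2$ contraction I would follow the construction of \cite{chen2021finite}. Write $\bar F(Q_1)-\bar F(Q_2)=A(Q_1-Q_2)$ with $A=I-D+\gamma DP_\pi$, a nonnegative matrix whose rows sum to at most $1-(1-\gamma)d_{(s,a)}$. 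Take $\nu$ to be a suitable left-type weighting, for instance $\nu^\top\propto \mathbf 1^\top D(I-\gamma P_\pi D)^{-1}$-style, or more simply $\nu(s,a)$ chosen so that $\nu^\top A\le(1-(1-\gamma)\mu_{b,\min}\pi_{b,\min})\nu^\top$ entrywise.

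Given such a $\nu$, Jensen (row-wise convexity, since each row of $A$ has mass $\le1$) gives $\|Ax\|_\nu^2\le\sum_i\nu_i\sum_jA_{ij}x_j^2=\sum_j(\nu^\top A)_jx_j^2\le\gamma_c^2\|x\|_\nu^2$. The lower bound $\nu_{\min}\ge(1-\gamma)\mu_{b,\min}\pi_{b,\min}/(nm)$ should come from mixing the uniform distribution with a term of weight $(1-\gamma)$ times $D$. Constructing this $\nu$ so that it simultaneously satisfies the subinvariance $\nu^\top A\le\gamma_c^2\nu^\top$ uniformly in $\pi$ and the lower bound is the main obstacle. I would first try $\nu^\top=(1-\gamma)d^\top\sum_k\gamma^k(\cdot)$, a discounted occupancy-type vector. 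The difficulty is that it depends on $\pi$. If uniformity in $\pi$ fails, I would fall back on a $\pi$-independent choice that uses the bound $\gamma DP_\pi$ row mass $\le\gamma d$ more crudely.

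Parts (2)–(4) are direct computations. For part (2), $F_\mathrm{IS}(Q_1)-F_\mathrm{IS}(Q_2)$ equals $Q_1-Q_2$ off $(s_1,a_1)$ and equals $\gamma\rho(Q_1-Q_2)(s_2,a_2)$ at $(s_1,a_1)$, with $\rho\le1/\pi_{b,\min}$. This gives the $\infty$-norm bound, since $1/\pi_{b,\min}\ge1$. For the $\nu$-norm I would bound the single entry by $|x(s_2,a_2)|/\pi_{b,\min}\le\|x\|_\nu/(\pi_{b,\min}\sqrt{\nu_{\min}})$; the remaining entries add at most $\|x\|_\nu$, absorbed after adjusting constants. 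Since $\sqrt{\nu(s_1,a_1)}\le1$, one gets $\|\cdot\|_\nu^2\le\|x\|_\nu^2+\|x\|_\nu^2/(\pi_{b,\min}^2\nu_{\min})$, and the stated constant follows modulo a factor.

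For parts (3)–(4), the actor update $\pi_{t+1}-\pi_t=\omega_t(\tilde\pi_t-\pi_t)$ gives, per state, $\|\pi_{t_1}(\cdot|s)-\pi_{t_2}(\cdot|s)\|_1\le2\omega_{t_1,t_2-1}$ by telescoping. Then $|\sum_{a'}(\pi_{t_1}-\pi_{t_2})(a'|s')Q(s',a')|\le2\omega_{t_1,t_2-1}\|Q\|_\infty$, which yields part (3) via $\gamma D\le I$. For part (4), the single changed entry is $\gamma|\pi_{t_1}(a_2|s_2)-\pi_{t_2}(a_2|s_2)||Q(s_2,a_2)|/\pi_{b,\min}$, bounded using the same telescoping estimate.
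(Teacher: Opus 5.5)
\textbf{The gap is in part (1), and it cannot be closed as stated.} You never construct the weight $\nu$. Your fallback, a $\pi$-independent choice that controls $\gamma DP_\pi$ crudely, cannot succeed, because in general no $\pi$-independent weight exists. Take $n=1$, $m=2$, $\pi_b$ uniform, so that $D=\tfrac12 I$, $\gamma_c^2=(1+\gamma)/2$, and
\[
\bar F(Q_1,\pi)-\bar F(Q_2,\pi)=\tfrac12 x+\tfrac{\gamma}{2}(\pi^\top x)\mathbf 1,\qquad x=Q_1-Q_2 .
\]
For $\pi=\delta_{a_1}$ and $x=e_1$ the image is $\big(\tfrac{1+\gamma}{2},\tfrac{\gamma}{2}\big)$. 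The requirement $\|\cdot\|_\nu^2\le\gamma_c^2\|e_1\|_\nu^2$ then reduces to $\gamma^2\nu_2\le(1-\gamma^2)\nu_1$. Symmetrically, $\pi=\delta_{a_2}$ forces $\gamma^2\nu_1\le(1-\gamma^2)\nu_2$. Multiplying the two gives $\gamma^2\le 1-\gamma^2$, which is false for $\gamma>1/\sqrt{2}$. Even non-expansiveness for both policies fails once $\gamma>(\sqrt{7}-1)/2$.

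So the uniform-in-$\pi$ claim of (1) is false. The paper's proof does not resolve this either. After writing $\bar F(Q,\pi)=(I-D)Q+D\mathcal H_\pi Q$, which is the same starting point as yours, it simply invokes \cite[Theorem 2.1]{chen2021finite}, and no citation can supply a weight that does not exist. This is not cosmetic. The critic Lyapunov function $W_t=\|Q_t-Q^{\pi_t}\|_\nu^2/2$ uses one fixed $\nu$ while $\pi_t$ moves toward deterministic policies, which is exactly where the example bites.

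\textbf{What survives.} Your sup-norm argument, with ratio $1-(1-\gamma)d_{\min}$ uniformly in $\pi$, correctly gives uniqueness of the fixed point $Q^\pi$. For each fixed $\pi$, the weighted-$\ell_2$ contraction with exactly the stated constants holds with a $\pi$-dependent weight $\nu_\pi$. To get it, keep the row-sum factor in your Cauchy--Schwarz step:
\[
(A_\pi x)_i^2\le r_i\sum_j (A_\pi)_{ij}x_j^2,\qquad r_i=1-(1-\gamma)d_i\le\gamma_c^2 .
\]
Then you only need $\nu^\top A_\pi\le\nu^\top$. Your stronger requirement $\nu^\top A_\pi\le\gamma_c^2\nu^\top$ can fail even for a single $\pi$: in the example with $\pi=\delta_{a_1}$ it forces $\nu_2=0$. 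The weaker requirement is met by $\nu_\pi^\top\propto\mathbf 1^\top(I-A_\pi)^{-1}=\mathbf 1^\top\sum_k A_\pi^k$, whose entries are all at least $1$ before normalization. Since $(I-A_\pi)\mathbf 1=(1-\gamma)d\ge(1-\gamma)d_{\min}\mathbf 1$, the normalizer is at most $nm/((1-\gamma)d_{\min})$, so $\nu_{\pi,\min}\ge(1-\gamma)\mu_{b,\min}\pi_{b,\min}/(nm)$. In the example this gives $\nu\propto\big(\tfrac{1+\gamma}{1-\gamma},1\big)$ for $\delta_{a_1}$, and the weight flips for $\delta_{a_2}$.

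\textbf{Other parts.} Parts (3) and (4) coincide with the paper's argument. In part (2) you need not lose a factor. Since $\nu$ is a probability vector, $\|\cdot\|_\nu\le\|\cdot\|_\infty$. Chaining this with your sup-norm bound and $\|x\|_\infty\le\|x\|_\nu/\sqrt{\nu_{\min}}$ gives exactly $1/(\pi_{b,\min}\sqrt{\nu_{\min}})$, which is the paper's route.
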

These properties play distinct roles in the analysis: the contraction of $\bar{F}$ ensures a negative drift; the Lipschitz continuity of $F_\mathrm{IS}$ is essential for controlling the stochastic noise; and properties~(3) and~(4) quantify the errors induced by the time-varying evaluation operators.

In light of Lemma~\ref{lem:F}, we define the Lyapunov function for the critic as $W_t := \|Q_t - Q^{\pi_t}\|_\nu^2/2$. The following result establishes a one-step drift inequality for the critic; the proof is provided in Appendix~\ref{app:critic}.

\begin{proposition}
\label{prop:critic}
For sufficiently small, non-increasing stepsizes $\{\alpha_t,\omega_t\}$ satisfying $\omega_t\leq\tilde{C}_r\alpha_t$ (where $\tilde{C}_r$ is introduced in Theorem~\ref{thm:main} and will be explicitly defined in Appendix~\ref{app:critic}), the following holds for all $t\geq K$:
\begin{align*}
\mathbb{E}W_{t+1}
\le\;&
\underbrace{\big(1-\alpha_t(1-\gamma_c)\big)\mathbb{E}W_t}_{\text{critic drift}}
+\underbrace{\frac{108\alpha_t\alpha_{t-z_t,t-1}}
{(1-\gamma)^2\pi_{b,\min}^2\nu_{\min}}}_{\text{stochastic error}}
+\underbrace{
\frac{\omega_t(1-\gamma)}{2}\left(\mathbb{E}\infnorm{Q^*-Q^{\pi_t}}^2
+\mathbb{E}\chi_t^2\right)
}_{\text{actor-associated errors}}.
\end{align*}
\end{proposition}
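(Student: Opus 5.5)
We plan to prove Proposition~\ref{prop:critic} with the standard Markovian stochastic-approximation Lyapunov argument, adapted to a moving target. We use the smooth Lyapunov function $\|\cdot\|_\nu^2/2$ from Lemma~\ref{lem:F}. We first write
\begin{align*}
Q_{t+1}-Q^{\pi_{t+1}} = (Q_t-Q^{\pi_t}) + \alpha_t\bigl(F_\mathrm{IS}(Q_t,Y_t,\pi_{t+1})-Q_t\bigr) - (Q^{\pi_{t+1}}-Q^{\pi_t}),
\end{align*}
and expand $W_{t+1}$ as a square. This produces four types of terms:
\begin{enumerate}[(i)]
\item the expected-drift term $\alpha_t\langle Q_t-Q^{\pi_t},\bar F(Q_t,\pi_t)-Q_t\rangle_\nu$;
\item the Markovian noise term $\alpha_t\langle Q_t-Q^{\pi_t},F_\mathrm{IS}(Q_t,Y_t,\pi_{t+1})-\bar F(Q_t,\pi_{t+1})\rangle_\nu$, together with the policy-change term $\bar F(Q_t,\pi_{t+1})-\bar F(Q_t,\pi_t)$;
\item the target-drift cross term $\langle Q_t-Q^{\pi_t},Q^{\pi_{t+1}}-Q^{\pi_t}\rangle_\nu$;
\item second-order terms of order $\alpha_t^2$ and $\|Q^{\pi_{t+1}}-Q^{\pi_t}\|^2$.
\end{enumerate}

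For term (i), contraction (Lemma~\ref{lem:F}(1)) with $\bar F(Q^{\pi_t},\pi_t)=Q^{\pi_t}$ and Cauchy--Schwarz give at most $-2\alpha_t(1-\gamma_c)W_t$. Half of this is retained as the stated critic drift; the other half is reserved to absorb cross terms. The policy-change part of (ii) is $O(\alpha_t\omega_t\|Q_t\|_\infty)$ by Lemma~\ref{lem:F}(3). We bound it via $\|Q_t\|_\infty\le \|Q_t-Q^{\pi_t}\|_\infty+1/(1-\gamma)$ and $\|\cdot\|_\infty\le \nu_{\min}^{-1/2}\|\cdot\|_\nu$, so that it is dominated by $\alpha_t(1-\gamma_c)W_t/4$ plus a constant times $\alpha_t\omega_t$. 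This step requires $\omega_t\le \tilde C_r\alpha_t$ with $\tilde C_r$ small.

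For term (iii), we bound $\|Q^{\pi_{t+1}}-Q^{\pi_t}\|_\infty$ using $\pi_{t+1}-\pi_t=\omega_t(\tilde\pi_t-\pi_t)$. Rather than using the crude bound $O(\omega_t/(1-\gamma)^2)$, we express the difference via the performance-difference and simulation lemma:
\begin{align*}
Q^{\pi_{t+1}}-Q^{\pi_t} = \gamma(I-\gamma P_{\pi_{t+1}})^{-1}P(\pi_{t+1}-\pi_t)Q^{\pi_t}.
\end{align*}
Here $(\tilde\pi_t-\pi_t)Q^{\pi_t}$ is controlled by the greedy gap of $Q^{\pi_t}$. By the same decomposition as in the proof of Proposition~\ref{prop:actor}, that gap is bounded by a combination of $\|Q^*-Q^{\pi_t}\|_\infty$, $\chi_t$, and $\|Q_t-Q^{\pi_t}\|_\infty$, so $\|Q^{\pi_{t+1}}-Q^{\pi_t}\|_\infty\lesssim \frac{\omega_t}{1-\gamma}\bigl(\|Q^*-Q^{\pi_t}\|_\infty+\chi_t+\|Q_t-Q^{\pi_t}\|_\infty\bigr)$. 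Young's inequality, weighted so that the coefficient on $\|Q^*-Q^{\pi_t}\|^2+\chi_t^2$ is exactly $\omega_t(1-\gamma)/2$, then leaves a $W_t$-coefficient of order $\omega_t/((1-\gamma)^3\nu_{\min})$. Choosing $\tilde C_r$ small makes this at most $\alpha_t(1-\gamma_c)/4$. This is the cross-domination step, and it fixes the definition of $\tilde C_r$.

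The main obstacle is the Markovian noise term in (ii). We will use the conditioning trick of Srikant--Ying and Chen et al.: replace $(Q_t,\pi_{t+1})$ with $(Q_{t-z_t},\pi_{t-z_t})$, and condition on $\mathcal F_{t-z_t}$. Geometric mixing then makes the conditional expectation of $F_\mathrm{IS}(Q_{t-z_t},Y_t,\pi_{t-z_t})-\bar F$ of order $\omega_t\|Q_{t-z_t}\|$, which is where $t\ge K\ge z_t$ is needed. The replacement errors are controlled by $\|Q_t-Q_{t-z_t}\|\lesssim \alpha_{t-z_t,t-1}(\|Q_t\|_\infty+1)/\pi_{b,\min}$, which follows from the Lipschitz bounds of Lemma~\ref{lem:F}(2) and a discrete Gr\"onwall argument over the window. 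They are also controlled by $\|\pi_t-\pi_{t-z_t}\|\le\omega_{t-z_t,t-1}$ together with Lemma~\ref{lem:F}(4).

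The delicate point is that $Q_t$ is unbounded, so every such error must be bounded as $\alpha_t\alpha_{t-z_t,t-1}(\|Q_t-Q^{\pi_t}\|_\infty^2+(1-\gamma)^{-2})/(\pi_{b,\min}^2\nu_{\min})$. The first piece is absorbed into the remaining $\alpha_t(1-\gamma_c)W_t/4$ for small enough stepsizes. The second gives the stated stochastic error $108\alpha_t\alpha_{t-z_t,t-1}/((1-\gamma)^2\pi_{b,\min}^2\nu_{\min})$. The $\alpha_t^2$ terms from (iv) are handled the same way via Lemma~\ref{lem:F}(2), using $\alpha_t\le\alpha_{t-z_t,t-1}$.
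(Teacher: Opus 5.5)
Your proposal is correct and follows essentially the same route as the paper: the same four-term expansion of $W_{t+1}$, contraction of $\bar F(\cdot,\pi_t)$ for the negative drift, the refined (not the crude $O(\omega_t/(1-\gamma)^2)$) bound on $\|Q^{\pi_{t+1}}-Q^{\pi_t}\|_\infty$ in terms of $\|Q^*-Q^{\pi_t}\|_\infty$, $\chi_t$ and $\|Q_t-Q^{\pi_t}\|_\infty$ so that the target term is cross-dominated, and the $z_t$-step look-back with conditioning on $\mathcal{F}_{t-z_t}$ for the Markovian noise. The only cosmetic difference is how you get the target-shift bound. You use the resolvent identity $(I-\gamma\widehat{P}_{\pi_{t+1}})(Q^{\pi_{t+1}}-Q^{\pi_t})=\omega_t(\mathcal{H}_{\tilde\pi_t}Q^{\pi_t}-Q^{\pi_t})$, whereas the paper (Lemma~\ref{lem:target_shift}) bounds the two one-sided gaps $\delta_t,\delta_t'$ separately via monotonicity of $\mathcal{H}_{\pi_{t+1}}$; both yield the same estimate.
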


The first two terms on the right-hand side of the previous inequality are standard in the study of TD-learning for policy evaluation \cite{srikant2019finite,bhandari2018finite}; in particular, one provides a negative drift and the other captures the error due to stochasticity. The third term is unique, as it captures the convergence error of the actor, which arises from the coupled nature of our single-loop algorithm. Importantly, the actor-associated error is dominated by the negative drift in the actor Lyapunov inequality~\eqref{eq:actor_drift}, which is crucial for establishing the cross-domination property in the next step of the proof.

\begin{proof}[Proof Sketch of Proposition \ref{prop:critic}]
Let $\langle\cdot,\cdot\rangle_\nu$ denote the inner product defined as $\langle Q_1,Q_2\rangle_\nu=\sum_{s,a}Q_1(s,a)Q_2(s,a)\nu(s,a)$. Since the weighted $\ell_2$-norm $\|\cdot\|_\nu$ is induced by this inner product, by the binomial formula, we have 
\begin{align*}
W_{t+1} = W_t
&+ \underbrace{\alpha_t\langle  Q_t-Q^{\pi_t},
\bar{F}(Q_t, \pi_t)-Q_t\rangle_\nu}_{T_1:\text{ expected update term}}+\underbrace{\alpha_t\langle Q_t-Q^{\pi_t},
F_\mathrm{IS}(Q_t,Y_t,\pi_{t+1})-\bar{F}(Q_t, \pi_t)\rangle_\nu}_{T_2:\text{ Markovian noise term}} \\
&+ \underbrace{\langle Q_t-Q^{\pi_t},
Q^{\pi_t}-Q^{\pi_{t+1}}\rangle_\nu}_{T_3:\text{ time-varying target term}}+ \underbrace{\frac{1}{2}\|(Q_{t+1}-Q_t)-(Q^{\pi_{t+1}}-Q^{\pi_t})\|_\nu^2}_{T_4:\text{ residuals}}.
\end{align*}
To proceed, we summarize the key steps involved in bounding each term on the right-hand side.

\textbf{Terms $T_1$ and $T_4$.} 
Using the contraction property of $\bar{F}(\cdot, \pi_t)$ established in Lemma~\ref{lem:F}, we show in Appendix \ref{app:critic} that $T_1\leq-2\alpha_t(1-\gamma_c)W_t$, making this expected update term the main contributor to the negative drift inequality. The residual term $T_4$ is controlled using the Lipschitz properties of $F_{IS}$ together with standard stepsize arguments. In particular, we have the following lemma.
\begin{lemma}\label{mainbody:res}
    There exist constants $C_1,C_2,C_3,C_4$ such that for all $t\geq 0$, we have
    \[T_4\leq C_1\alpha_t^2W_t + C_2\omega_t^2\infnorm{Q^*-Q^{\pi_t}}^2 + C_3\omega_t^2\chi_t^2 + C_4\alpha_t^2.\]
\end{lemma}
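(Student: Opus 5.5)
We bound the residual
\[
T_4=\tfrac12\|(Q_{t+1}-Q_t)-(Q^{\pi_{t+1}}-Q^{\pi_t})\|_\nu^2
\]
with $(a+b)^2\le 2a^2+2b^2$, so that
\[
T_4\le \|Q_{t+1}-Q_t\|_\nu^2+\|Q^{\pi_{t+1}}-Q^{\pi_t}\|_\nu^2 .
\]
We then bound the two pieces separately.

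\textbf{Critic increment.} We have $Q_{t+1}-Q_t=\alpha_t\bigl(F_\mathrm{IS}(Q_t,Y_t,\pi_{t+1})-Q_t\bigr)$. Write
\[
F_\mathrm{IS}(Q_t,Y_t,\pi_{t+1})-Q_t=\bigl[F_\mathrm{IS}(Q_t,Y_t,\pi_{t+1})-F_\mathrm{IS}(Q^{\pi_t},Y_t,\pi_{t+1})\bigr]-(Q_t-Q^{\pi_t})+\bigl[F_\mathrm{IS}(Q^{\pi_t},Y_t,\pi_{t+1})-Q^{\pi_t}\bigr].
\]
By Lemma~\ref{lem:F}(2), the first bracket has $\nu$-norm at most $\|Q_t-Q^{\pi_t}\|_\nu/(\pi_{b,\min}\sqrt{\nu_{\min}})$. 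For the last bracket, only the $(S_t,A_t)$ coordinate is changed, and $\|Q^{\pi_t}\|_\infty\le 1/(1-\gamma)$. The importance weight is at most $1/\pi_{b,\min}$. Since $\nu$ is a probability vector, $\|\cdot\|_\nu\le\|\cdot\|_\infty$, so this bracket is bounded by a constant of order $1/((1-\gamma)\pi_{b,\min})$. Squaring and using $\|Q_t-Q^{\pi_t}\|_\nu^2=2W_t$ gives
\[
\|Q_{t+1}-Q_t\|_\nu^2\le C_1'\alpha_t^2W_t+C_4'\alpha_t^2,
\]
where $C_1'$ is of order $1/(\pi_{b,\min}^2\nu_{\min})$ and $C_4'$ is of order $1/((1-\gamma)^2\pi_{b,\min}^2)$.

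\textbf{Target drift.} We need $\|Q^{\pi_{t+1}}-Q^{\pi_t}\|_\infty$. Write $Q^{\pi}=(I-\gamma P_{\pi})^{-1}\mathcal{R}$ and use the standard identity
\[
Q^{\pi_{t+1}}-Q^{\pi_t}=\gamma(I-\gamma P_{\pi_{t+1}})^{-1}(P_{\pi_{t+1}}-P_{\pi_t})Q^{\pi_t}.
\]
This gives
\[
\|Q^{\pi_{t+1}}-Q^{\pi_t}\|_\infty\le\frac{\gamma}{1-\gamma}\max_{s}\Bigl|\sum_a(\pi_{t+1}-\pi_t)(a|s)Q^{\pi_t}(s,a)\Bigr|.
\]
Since $\pi_{t+1}-\pi_t=\omega_t(\tilde\pi_t-\pi_t)$, the key point is not to bound this crudely by $2\omega_t/(1-\gamma)$. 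That crude bound would give a constant $\omega_t^2$ term, which is absorbable into $C_4\alpha_t^2$ since $\omega_t\le\alpha_t$. However, the lemma's form suggests the finer decomposition, which I would use to keep constants sharp.

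For each $s$, write
\[
\sum_a(\tilde\pi_t-\pi_t)(a|s)Q^{\pi_t}(s,a)=\Bigl[\sum_a\tilde\pi_t(a|s)Q^{\pi_t}(s,a)-\max_a Q^*(s,a)\Bigr]+\Bigl[\max_aQ^*(s,a)-\sum_a\pi_t(a|s)Q^{\pi_t}(s,a)\Bigr].
\]
The second bracket is bounded by $\|Q^*-Q^{\pi_t}\|_\infty$, because $\pi_t$-averages of $Q^{\pi_t}$ lie below the max of $Q^*$ and $Q^*\ge Q^{\pi_t}$. For the first bracket, compare $\tilde\pi_t$-averages of $Q^{\pi_t}$ and $Q_t$, each differing by at most $\|Q_t-Q^{\pi_t}\|_\infty$. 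Then compare $\sum_a\tilde\pi_t Q_t$ with $\max_aQ_t$, whose gap is controlled by $\chi_t/\gamma$ after pulling through the $\gamma p(\cdot|s,a)$ in the definition of $\chi_t$. Since only a state-wise quantity is needed, it is simplest to redo this directly via the definition of $\chi_t$. Finally, $|\max Q_t-\max Q^*|\le\|Q_t-Q^{\pi_t}\|_\infty+\|Q^*-Q^{\pi_t}\|_\infty$.

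To pass to $W_t$, use $\|Q_t-Q^{\pi_t}\|_\infty^2\le 2W_t/\nu_{\min}$. The resulting term $\omega_t^2W_t/\nu_{\min}$ is absorbed into $C_1\alpha_t^2W_t$ using $\omega_t\le\alpha_t$.

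\textbf{Combining.} Squaring with $(a+b+c)^2\le3(a^2+b^2+c^2)$ gives
\[
\|Q^{\pi_{t+1}}-Q^{\pi_t}\|_\nu^2\le C\omega_t^2\bigl(\|Q^*-Q^{\pi_t}\|_\infty^2+\chi_t^2+W_t/\nu_{\min}\bigr).
\]
Adding the two pieces yields the claimed bound with explicit $C_1,\dots,C_4$.

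The main obstacle is the target-drift step. There we must ensure that $\chi_t$, which is defined through $\mathcal{H}$ and $\mathcal{H}_{\tilde\pi_t}$ (hence includes a transition average), correctly controls the state-wise greedy gap. If this fails, I would fall back on the crude bound $2\omega_t/(1-\gamma)^2$, which places everything into $C_4\alpha_t^2$ and still proves the statement.
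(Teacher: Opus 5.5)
The split $T_4\le\|Q_{t+1}-Q_t\|_\nu^2+\|Q^{\pi_{t+1}}-Q^{\pi_t}\|_\nu^2$ and your critic-increment bound are fine. Decomposing the TD increment around $Q^{\pi_t}$ via the Lipschitz property of $F_\mathrm{IS}$ is equivalent to the paper's direct bound on $|\Delta_t|$ through $\|Q_t\|_\infty\le\|Q_t-Q^{\pi_t}\|_\infty+1/(1-\gamma)$.

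Your ``finer decomposition'' of the target drift, however, fails as written. Passing from $\|(P_{\pi_{t+1}}-P_{\pi_t})Q^{\pi_t}\|_\infty$ to $\max_s|\sum_a(\pi_{t+1}-\pi_t)(a|s)Q^{\pi_t}(s,a)|$ throws away the transition average. The two state-wise claims you then need are false:
\begin{itemize}
\item $\max_aQ^*(s,a)-\sum_a\pi_t(a|s)Q^{\pi_t}(s,a)=V^*(s)-V^{\pi_t}(s)$ is not bounded by $\|Q^*-Q^{\pi_t}\|_\infty$; your justification only shows it is nonnegative.
\item The state-wise greedy gap of $\tilde\pi_t$ is not bounded by $\chi_t/\gamma$.
\end{itemize}
Both $\|Q^*-Q^{\pi_t}\|_\infty=\gamma\max_{s,a}\sum_{s'}p(s'|s,a)(V^*-V^{\pi_t})(s')$ and $\chi_t$ see state-wise gaps only through $p(\cdot|s,a)$.

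For a counterexample, take two states. From $s_1$ both actions lead to $s_0$, with reward $1$ for $a_1$ and $0$ for $a_2$. From $s_0$ both actions give reward $0$ and move to $s_1$ with probability $\epsilon$, staying otherwise; the chain is irreducible. Let $\pi_t(a_2|s_1)=1$. Then $V^{\pi_t}\equiv 0$ and $V^*(s_1)\ge 1$, but $\|Q^*-Q^{\pi_t}\|_\infty=V^*(s_0)=O(\epsilon/(1-\gamma))$. With $Q_t=Q^{\pi_t}$ and a greedy $\tilde\pi_t$ (so $\chi_t=0$), the quantity $\sum_a(\tilde\pi_t-\pi_t)(a|s_1)Q^{\pi_t}(s_1,a)$ equals $1$. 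So no universal constants work along this route; at best you get constants involving $1/\min_{s'}\max_{s,a}p(s'|s,a)$.

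The repair is to keep the kernel. Note that $\gamma(P_{\pi_{t+1}}-P_{\pi_t})Q^{\pi_t}=\omega_t(\mathcal{H}_{\tilde\pi_t}Q^{\pi_t}-Q^{\pi_t})$. This vector lies entrywise between two bounds:
\begin{itemize}
\item below by $-\omega_t(2\gamma\|Q_t-Q^{\pi_t}\|_\infty+\chi_t)\mathbf{1}$, using $Q^{\pi_t}\le\mathcal{H}Q^{\pi_t}$ and the bound on $\mathcal{H}Q^{\pi_t}-\mathcal{H}_{\tilde\pi_t}Q^{\pi_t}$ from the proof of Proposition~\ref{prop:actor};
\item above by $\omega_t(\mathcal{H}Q^*-Q^{\pi_t})=\omega_t(Q^*-Q^{\pi_t})$.
\end{itemize}
Together with $\|(I-\gamma P_{\pi_{t+1}})^{-1}\|_\infty\le 1/(1-\gamma)$, this gives (a slightly sharper form of) Lemma~\ref{lem:target_shift}. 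The paper proves that lemma via monotonicity and translation invariance of $\mathcal{H}_{\pi_{t+1}}$ rather than the resolvent.

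Your fallback, on the other hand, is sound and already proves the lemma as stated. Indeed,
\[
\|Q^{\pi_{t+1}}-Q^{\pi_t}\|_\nu^2\le\|Q^{\pi_{t+1}}-Q^{\pi_t}\|_\infty^2\le 4\omega_t^2/(1-\gamma)^4\le 4\alpha_t^2/(1-\gamma)^4,
\]
using $\omega_t\le\alpha_t$ (i.e., $C_r<1$), so one may take $C_2=C_3=0$. The paper likewise needs $\omega_t\lesssim\alpha_t$ to fold its $\omega_t^2W_t/\nu_{\min}$ term into $C_1\alpha_t^2W_t$.

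Because $T_4$ is second order in the increments, the crude bound is also harmless downstream: its $O(\alpha_t^2)$ contribution is absorbed into the stochastic error. The fine target-shift bound is genuinely needed only for the first-order term $T_3$, where a crude $O(\omega_t)$ constant would leave a non-vanishing bias. So present the fallback, or the kernel-level argument, as your proof rather than the state-wise decomposition.
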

The proof of Lemma~\ref{mainbody:res}, along with the explicit expressions of the constants, is presented in Appendix~\ref{app:res}.

\textbf{Term $T_2$ (Markovian noise).}\label{mainbody:noise}
This term captures stochastic fluctuations arising from sampling along a
Markovian trajectory. Its control relies on geometric mixing of the state
process $\{S_t\}$, which ensures that $F_\mathrm{IS}(Q, Y_t, \pi)$ concentrates around $\bar{F}(Q, \pi)$ after
$z_t$ steps. Unlike the fixed-policy setting, the arguments of $\bar{F}$ itself evolve over time due to the changing policy, making this term inherently time-dependent. The conservative actor update plays a key role here in controlling the effect of this time-sensitivity. Following this roadmap, we have the following lemma.
\begin{lemma}\label{le:Markovian_Noise}
    There exist constants $C_5,C_6$ such that for all $t\geq K$, we have
    \[\mathbb{E}T_2\leq C_5\alpha_t\alpha_{t-z_t,t-1}\mathbb{E}W_t + C_6\alpha_t\alpha_{t-z_t,t-1}.\]
\end{lemma}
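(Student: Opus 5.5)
We bound $\mathbb{E}T_2$ with the standard conditioning-on-the-past argument for Markovian stochastic approximation, adapted to the moving policy. For $t\ge K$ we have $t\ge z_t$, so we may compare quantities at time $t$ with those at time $t-z_t$. Write $\Gamma(Q,\pi,y):=\langle Q-Q^{\pi},F_\mathrm{IS}(Q,y,\pi')-\bar F(Q,\pi)\rangle_\nu$ loosely, and split
\begin{align*}
T_2/\alpha_t
={}&\langle Q_{t-z_t}-Q^{\pi_{t-z_t}},F_\mathrm{IS}(Q_{t-z_t},Y_t,\pi_{t-z_t})-\bar F(Q_{t-z_t},\pi_{t-z_t})\rangle_\nu\\
&+\big[\text{difference between the same expression at time } t \text{ and at time } t-z_t\big].
\end{align*}
The first piece involves only $\mathcal F_{t-z_t}$-measurable $Q,\pi$. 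Conditioning on $\mathcal F_{t-z_t}$ and using the mixing time $z_t$, the conditional law of $Y_t$ is within TV distance $\omega_t/2\le \alpha_t$ of $\mu_Y$. Here the relevant state is $S_t$ after $z_t$ steps; the remaining coordinates of $Y_t$ have conditional law given $S_t$ identical to that under $\mu_Y$. Hence the conditional expectation is at most $O(\alpha_t)\,\|Q_{t-z_t}-Q^{\pi_{t-z_t}}\|_\nu\,(\|Q_{t-z_t}\|_\infty/\pi_{b,\min}+1)/\sqrt{\nu_{\min}}$. By property (2) of Lemma~\ref{lem:F}, $\|F_\mathrm{IS}\|$ grows linearly in $\|Q\|$. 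Since $\alpha_t\le\alpha_{t-z_t,t-1}$, this contributes a term of the claimed form.

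The second piece is controlled by Lipschitz and time-variation bounds. We expand it into three types of differences, each bounded with Cauchy--Schwarz in $\|\cdot\|_\nu$ and norm equivalence $\|\cdot\|_\infty\le\|\cdot\|_\nu/\sqrt{\nu_{\min}}$:
\begin{enumerate}[(i)]
\item Changes in $Q$, namely $Q_t-Q_{t-z_t}$, handled via property (2) of Lemma~\ref{lem:F}.
\item Changes in the policy argument, handled via properties (3) and (4), which give a factor $\omega_{t-z_t,t}\le \alpha_{t-z_t,t-1}$ up to constants using $C_r<1$ and monotone stepsizes.
\item Changes in the target $Q^{\pi_t}-Q^{\pi_{t-z_t}}$, which is $O(\omega_{t-z_t,t-1}/(1-\gamma)^2)$ because $\pi\mapsto Q^\pi$ is Lipschitz in sup norm and $\|\pi_{u+1}-\pi_u\|\le 2\omega_u$.
\end{enumerate}

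The key auxiliary bound is $\|Q_{u+1}-Q_u\|_\infty\le \alpha_u(\|F_\mathrm{IS}(Q_u,\cdot)\|_\infty+\|Q_u\|_\infty)\le c\,\alpha_u(\|Q_u\|_\infty+1)$. A discrete Grönwall argument over the window $[t-z_t,t]$ then gives $\|Q_t-Q_{t-z_t}\|_\infty\le c'\alpha_{t-z_t,t-1}(\|Q_t\|_\infty+1)$, with the window sum assumed small (the ``sufficiently small stepsizes'' hypothesis). It also gives $\|Q_{t-z_t}\|_\infty\le 2\|Q_t\|_\infty+1$. Each piece is then at most $c''\alpha_{t-z_t,t-1}(\|Q_t\|_\infty+1)^2$.

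Finally, we convert $\|Q_t\|_\infty$ into $W_t$. Write $\|Q_t\|_\infty\le\|Q_t-Q^{\pi_t}\|_\infty+1/(1-\gamma)\le \sqrt{2W_t/\nu_{\min}}+1/(1-\gamma)$. Then $(\|Q_t\|_\infty+1)^2\lesssim W_t/\nu_{\min}+(1-\gamma)^{-2}$, and taking expectations yields $\mathbb{E}T_2\le C_5\alpha_t\alpha_{t-z_t,t-1}\mathbb{E}W_t+C_6\alpha_t\alpha_{t-z_t,t-1}$.

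The main obstacle is the unboundedness of $Q_t$ under importance sampling. All increments must be bounded relative to $\|Q_t\|$ rather than by an absolute constant, and the backward Grönwall step must be done carefully so that the bound is in terms of $Q_t$ (which couples to $W_t$) rather than $Q_{t-z_t}$. The moving policy adds only $\omega$-sized perturbations, which the conservative actor update and $\omega_t\le\tilde C_r\alpha_t$ absorb into the same $\alpha_{t-z_t,t-1}$ factor.
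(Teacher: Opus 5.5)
Your proposal is correct and follows essentially the same route as the paper. The paper splits $T_2/\alpha_t$ into $T_{21},\dots,T_{26}$: $T_{21}$ is your mixing term conditioned on $\mathcal{F}_{t-z_t}$, and $T_{22}+T_{23}$, $T_{24}+T_{25}$, $T_{26}$ are your $Q$-change, policy-change, and iterate/target-change corrections, while your backward Gr\"onwall step is the paper's Lemma~\ref{bounds}(3) (taken from \cite{chen2024lyapunov}), which is likewise used to express everything in terms of $\|Q_t-Q^{\pi_t}\|_\nu$ and hence $W_t$.
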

The proof of Lemma~\ref{le:Markovian_Noise}, along with the explicit expressions of the constants, is presented in Appendix~\ref{app:noise}.

\textbf{Term $T_3$ (Time-varying target).}
This term arises because, in the single-loop implementation, the critic tracks $Q^{\pi_t}$ rather than a fixed target. We show in Appendix~\ref{app:terms} that
\begin{align*}
\infnorm{Q^{\pi_t}-Q^{\pi_{t+1}}}
\le\;& \frac{\omega_t}{1-\gamma}
\bigl(2\infnorm{Q_t-Q^{\pi_t}} + \chi_t\bigr) + \frac{\omega_t}{1-\gamma}\infnorm{Q^*-Q^{\pi_t}},
\end{align*}
which states that the impact of the time-varying target, i.e., $\infnorm{Q^{\pi_t}-Q^{\pi_{t+1}}}$, can be controlled by terms related to the actor Lyapunov function, the critic Lyapunov function, and the temperature error. In the end, we have the following lemma.
\begin{lemma}\label{mainbody:target}
    There exist constants $C_7, C_8$ such that for all $t\geq 0$, we have
    \[T_3\leq C_7\omega_tW_t+\frac{\omega_t(1-\gamma)}{4}\infnorm{Q^*-Q^{\pi_t}}^2 + C_8\omega_t\chi_t^2.\]
\end{lemma}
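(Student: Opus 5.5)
We bound $T_3=\langle Q_t-Q^{\pi_t},\,Q^{\pi_t}-Q^{\pi_{t+1}}\rangle_\nu$ in three steps: Cauchy--Schwarz in $\langle\cdot,\cdot\rangle_\nu$, the one-step perturbation bound for $\|Q^{\pi_t}-Q^{\pi_{t+1}}\|_\infty$ stated just before Lemma~\ref{mainbody:target}, and Young's inequality with weights chosen so that the actor term gets exactly the coefficient $\omega_t(1-\gamma)/4$.

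\emph{Step 1: reduce to the target drift.} By Cauchy--Schwarz,
\begin{align*}
T_3\le \|Q_t-Q^{\pi_t}\|_\nu\,\|Q^{\pi_t}-Q^{\pi_{t+1}}\|_\nu\le \|Q_t-Q^{\pi_t}\|_\nu\,\|Q^{\pi_t}-Q^{\pi_{t+1}}\|_\infty .
\end{align*}
The second inequality uses that $\nu$ is a probability vector, so $\|\cdot\|_\nu\le\|\cdot\|_\infty$.

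\emph{Step 2: insert the perturbation bound.} Since the previous bound is stated in sup-norm, convert the critic error back with $\|Q_t-Q^{\pi_t}\|_\infty\le \nu_{\min}^{-1/2}\|Q_t-Q^{\pi_t}\|_\nu$. Writing $x:=\|Q_t-Q^{\pi_t}\|_\nu=\sqrt{2W_t}$, this gives
\begin{align*}
T_3\le \frac{\omega_t}{1-\gamma}\Big(\frac{2}{\sqrt{\nu_{\min}}}\,x^2 + x\chi_t + x\,\infnorm{Q^*-Q^{\pi_t}}\Big).
\end{align*}
The first term equals $\frac{4\omega_t}{(1-\gamma)\sqrt{\nu_{\min}}}W_t$ and goes into $C_7$.

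\emph{Step 3: Young's inequality on the cross terms.} This is the only delicate point: the coefficient on $\infnorm{Q^*-Q^{\pi_t}}^2$ must be exactly $\omega_t(1-\gamma)/4$, so it stays dominated by the actor drift. Apply $ab\le \frac{\lambda}{2}a^2+\frac{1}{2\lambda}b^2$ with $b=\infnorm{Q^*-Q^{\pi_t}}$ and $\lambda=2/(1-\gamma)^2$. Then
\begin{align*}
\frac{\omega_t}{1-\gamma}\,x\,b\le \frac{\omega_t}{(1-\gamma)^3}x^2+\frac{\omega_t(1-\gamma)}{4}b^2 .
\end{align*}
For the temperature cross term, take $\lambda=1$:
\begin{align*}
\frac{\omega_t}{1-\gamma}\,x\,\chi_t\le \frac{\omega_t}{2(1-\gamma)}\big(x^2+\chi_t^2\big).
\end{align*}
This yields $C_8=1/(2(1-\gamma))$.

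\emph{Collecting terms.} Using $x^2=2W_t$, we obtain
\begin{align*}
C_7=\frac{4}{(1-\gamma)\sqrt{\nu_{\min}}}+\frac{2}{(1-\gamma)^3}+\frac{1}{1-\gamma}.
\end{align*}
Every bound above is pathwise, so the inequality holds for all $t\ge0$ with no mixing requirement. The large size of $C_7$ is harmless: it multiplies $\omega_t=C_r\alpha_t$, and the smallness condition $C_r\le\tilde C_r$ absorbs it into the critic drift $\alpha_t(1-\gamma_c)$.
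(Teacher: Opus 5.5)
Your proof is correct and matches the paper's argument: Cauchy--Schwarz with $\|\cdot\|_\nu\le\|\cdot\|_\infty$, then the target-shift bound (Lemma~\ref{lem:target_shift}), then conversion back to $\|\cdot\|_\nu$ via $\nu_{\min}^{-1/2}$, then Young's inequality with weight $2/(1-\gamma)^2$ on the actor cross term. The only difference is that the paper uses the same weight $2/(1-\gamma)^2$ on the temperature cross term, giving $C_8=(1-\gamma)/4$ instead of your $1/(2(1-\gamma))$; this does not matter for the statement, which allows any constant $C_8$.
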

The proof of Lemma~\ref{mainbody:target}, along with the explicit expressions of the constants, is presented in Appendix~\ref{app:target}.
\end{proof}

\subsubsection{Cross Domination}
Propositions~\ref{prop:actor} and~\ref{prop:critic} establish Lyapunov drift inequalities for the actor and critic, respectively, both of the form in~\eqref{eq:desired_drift}. It therefore remains to combine these inequalities and solve the resulting recursion to obtain the convergence rate of Algorithm~\ref{alg}. In particular, combining Propositions~\ref{prop:actor} and~\ref{prop:critic} yields the following lemma, where we denote $V_t=\infnorm{Q^*-Q^{\pi_t}}^2$ for simplicity.

\begin{lemma}\label{lem:coupled_drift}
For sufficiently small, non-increasing stepsizes $\{\alpha_t,\omega_t\}$ satisfying $\omega_t\leq \tilde{C}_r\alpha_t$, we have for all $t\geq K$ that
\begin{align*}
\mathbb{E}\left[V_{t+1}+W_{t+1}\right]
\leq \left(1-\frac{\omega_t(1-\gamma)}{2}\right)\mathbb{E}\left[V_t+W_t\right]
+\frac{6\omega_t}{(1-\gamma)^3}\mathbb{E}\chi_t^2
+\frac{108\alpha_t\alpha_{t-z_t,t-1}}{(1-\gamma)^2\pi_{b,\min}^2\nu_{\min}}.
\end{align*}
\end{lemma}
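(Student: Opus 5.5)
The plan is to square the actor drift of Proposition~\ref{prop:actor} so that it becomes a recursion in $V_t=\infnorm{Q^*-Q^{\pi_t}}^2$, add it to the critic drift of Proposition~\ref{prop:critic}, and use $\omega_t\le\tilde C_r\alpha_t$ to absorb the cross terms (cross-domination). Write $x_t=\infnorm{Q^*-Q^{\pi_t}}$, $e_t=\infnorm{Q_t-Q^{\pi_t}}$, and $\beta=\omega_t(1-\gamma)$. Squaring \eqref{eq:actor_drift} gives $x_{t+1}^2\le\big((1-\beta)x_t+b_t\big)^2$, where $b_t=\frac{2\omega_t}{1-\gamma}e_t+\frac{\omega_t}{1-\gamma}\chi_t$. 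We then apply the weighted Young inequality $(u+v)^2\le(1+\epsilon)u^2+(1+1/\epsilon)v^2$ with $\epsilon=\beta/2$. Since $(1+\beta/2)(1-\beta)^2\le 1-\beta$ for $\beta\le1$, this yields
\begin{align*}
V_{t+1}\le (1-\omega_t(1-\gamma))V_t+\Big(1+\tfrac{2}{\omega_t(1-\gamma)}\Big)b_t^2
\le (1-\omega_t(1-\gamma))V_t+\frac{3}{\omega_t(1-\gamma)}b_t^2 ,
\end{align*}
where the second inequality uses $\omega_t(1-\gamma)\le1$. Expanding $b_t^2\le\frac{2\omega_t^2}{(1-\gamma)^2}(4e_t^2+\chi_t^2)$ then gives a critic-coupling term $\frac{24\omega_t}{(1-\gamma)^3}e_t^2$ and a temperature term $\frac{6\omega_t}{(1-\gamma)^3}\chi_t^2$. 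The latter is exactly the $\chi_t$ coefficient appearing in the lemma, so the constants should work out without further loss.

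Next we convert $e_t^2$ into $W_t$. Since $\|Q\|_\infty^2\le\|Q\|_\nu^2/\nu_{\min}$, we have $e_t^2\le 2W_t/\nu_{\min}$. The actor recursion therefore contributes $\frac{48\omega_t}{(1-\gamma)^3\nu_{\min}}\mathbb{E}W_t$ to the $W$-drift. Adding this recursion to Proposition~\ref{prop:critic} gives a combined bound on $\mathbb{E}[V_{t+1}+W_{t+1}]$ with the following coefficients.
\begin{itemize}
\item[(i)] The coefficient of $\mathbb{E}V_t$ is $1-\omega_t(1-\gamma)+\omega_t(1-\gamma)/2=1-\omega_t(1-\gamma)/2$. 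This is precisely where the factor $(1-\gamma)/2$ in Proposition~\ref{prop:critic} was designed to fit.
\item[(ii)] The coefficient of $\mathbb{E}W_t$ is $1-\alpha_t(1-\gamma_c)+\frac{48\omega_t}{(1-\gamma)^3\nu_{\min}}$.
\item[(iii)] The $\chi_t^2$ terms are $\frac{6\omega_t}{(1-\gamma)^3}+\frac{\omega_t(1-\gamma)}{2}$, together with the unchanged stochastic term.
\end{itemize}

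The main step is cross-domination in (ii). We need $1-\alpha_t(1-\gamma_c)+\frac{48\omega_t}{(1-\gamma)^3\nu_{\min}}\le 1-\omega_t(1-\gamma)/2$. Using $\omega_t\le\tilde C_r\alpha_t$, it suffices that
\begin{align*}
\tilde C_r\Big(\frac{48}{(1-\gamma)^3\nu_{\min}}+\frac{1-\gamma}{2}\Big)\le 1-\gamma_c .
\end{align*}
This is a condition on the model constants only, so $\tilde C_r$ can be taken to satisfy it (it is then intersected with whatever threshold Proposition~\ref{prop:critic} requires).

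The remaining obstacle is the extra $\frac{\omega_t(1-\gamma)}{2}\chi_t^2$ in (iii), which would exceed the stated coefficient $\frac{6\omega_t}{(1-\gamma)^3}$. I would first try to absorb it by a slightly sharper Young split in the actor step, for instance bounding $b_t^2$ with weights that leave slack in the $\chi_t$ coefficient, since $(1-\gamma)/2\ll$ the available slack when $\gamma$ is not tiny. If the constants do not close exactly, the fallback is to accept a constant like $7/(1-\gamma)^3$, which is harmless downstream. Finally, all stepsize smallness conditions (for instance $\omega_t(1-\gamma)\le1$) are covered by the hypothesis of ``sufficiently small'' stepsizes, and the statement holds for $t\ge K$ because Proposition~\ref{prop:critic} does.
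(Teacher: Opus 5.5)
Your route is the paper's: square the actor drift, use $\|Q_t-Q^{\pi_t}\|_\infty^2\le 2W_t/\nu_{\min}$, add Proposition~\ref{prop:critic}, and use $\omega_t\le\tilde C_r\alpha_t$ to push the $W_t$-coefficient below $1-\omega_t(1-\gamma)/2$, which then serves as the common factor. As written, however, the constants do not close, so the stated lemma is not yet proved. Your squared actor bound already puts $6\omega_t/(1-\gamma)^3$ on $\chi_t^2$, so adding the critic's $\omega_t(1-\gamma)\chi_t^2/2$ overshoots. Your fallback constant $7/(1-\gamma)^3$ proves a different statement.

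The loss comes from choosing $\epsilon=\beta/2$; two observations fix it. First, $(1-\gamma)/2\le 1\le(1-\gamma)^{-3}$ for every $\gamma\in(0,1)$, not only when $\gamma$ is not tiny, so it suffices that the actor recursion carry $5\omega_t/(1-\gamma)^3$ on $\chi_t^2$. Second, use all the available slack. With $\epsilon=\beta/(1-\beta)$ you get $(1+\epsilon)(1-\beta)^2=1-\beta$ and $1+1/\epsilon=1/\beta$. Together with $(2e_t+\chi_t)^2\le 5e_t^2+5\chi_t^2$, this gives $V_{t+1}\le(1-\beta)V_t+\frac{\omega_t}{(1-\gamma)^3}(2e_t+\chi_t)^2\le(1-\beta)V_t+\frac{5\omega_t}{(1-\gamma)^3}(e_t^2+\chi_t^2)$. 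This is the analogue of the paper's Corollary~\ref{cor:actor}. There, term-by-term AM--GM on the expanded square, together with $\omega_t(1-\gamma)\le 1$, yields coefficients $6$ on $\xi_t^2$ and $5$ on $\chi_t^2$.

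The same tightening matters for cross-domination. Your coupling coefficient on $\mathbb{E}W_t$ is $48\omega_t/((1-\gamma)^3\nu_{\min})$, four times the paper's $12\omega_t/((1-\gamma)^3\nu_{\min})$. With the explicit threshold $\tilde C_r=(1-\gamma)^3(1-\gamma_c)\nu_{\min}/20$ from Proposition~\ref{big_critic}, it would be as large as $2.4\,\alpha_t(1-\gamma_c)$ and swamp the critic's negative drift $\alpha_t(1-\gamma_c)$. Shrinking $\tilde C_r$, as you propose, is legitimate, because the theorem only asserts that some threshold exists. With the sharper split above, though, the coupling becomes $10\omega_t/((1-\gamma)^3\nu_{\min})\le\frac12\alpha_t(1-\gamma_c)$, and the paper's $\tilde C_r$ works unchanged. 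The $W_t$-coefficient is then at most $1-\frac12\alpha_t(1-\gamma_c)$, which is at most $1-\omega_t(1-\gamma)/2$ because $\omega_t(1-\gamma)\le(1-\gamma_c)\alpha_t$. The $\chi_t^2$ coefficient is at most $6\omega_t/(1-\gamma)^3$, exactly as stated.
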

The proof of Lemma~\ref{lem:coupled_drift} follows by squaring both sides of~\eqref{eq:actor_drift}, taking expectations, and adding the resulting bound to the critic drift inequality in Proposition~\ref{prop:critic}. See Appendix~\ref{app:cross} for more details.

\begin{proof}[Proof of Theorem~\ref{thm:main}]
Applying Lemma~\ref{lem:coupled_drift} recursively, and using the fact that $\chi_t \le \tau/(t+h)^{\eta/2}$ for all $t$ under Condition~\ref{cond:temp} \cite[Lemma 5.1]{chen2025approximate}, yields the desired convergence rates in Theorem~\ref{thm:main} for the IS-based critic after substituting the specified stepsizes. The details are provided in Appendix~\ref{app:main}. The ETD-based critic satisfies an analogous coupled drift inequality with a smaller stochastic error term; the full proof is provided in Appendix~\ref{app:ETD}. The unified statement in Theorem~\ref{thm:main} uses constants that upper bound those in both the IS and ETD cases.
\end{proof}

\section{Conclusion}\label{sec:conclusion}
In this work, we establish the first $\tilde{\mathcal{O}}(\epsilon^{-2})$ sample complexity bounds, measured by the optimality gap of the last iterate, for a single-loop actor--critic method under minimal assumptions, without relying on timescale separation between the actor and critic updates. We conclude by outlining several directions for future work.

While the dependence on problem parameters such as $n$, $m$, and $(1-\gamma)$ is not fully optimized, prior work on value-based methods suggests that these dependencies can often be improved via advanced variance-reduction techniques. It would be interesting to investigate whether similar ideas can be extended to policy-space methods. 

Another natural direction is to extend our analysis to the function approximation setting. However, the interplay of off-policy learning, bootstrapping methods such as TD learning, and function approximation, commonly referred to as the deadly triad, can lead to instability \cite{sutton2018reinforcement}. While this deadly triad challenge can be addressed using gradient TD or TD with gradient correction \cite{sutton2008convergent}, these TD-learning methods are already two-timescale. After combining them with the actor update, one obtains an iterative algorithm with three sets of coupled iterates. Understanding whether the Lyapunov-based approach developed in this work can address this challenge under minimal assumptions remains an interesting question.
\bibliography{references}
\bibliographystyle{apalike}

\appendix

\newpage

\section{Proof of Theorem \ref{thm:main} with the IS-Based Critic}\label{app:IS}
\subsection{Analysis of the Actor}\label{app:actor}

\subsubsection{Proof of Proposition \ref{prop:actor}}

The following lemma is needed, whose proof is presented in Appendix \ref{app:bell}.
\begin{lemma}\label{lem:Bellman_Affine}
For any $\pi_1,\pi_2$ and $\alpha\in\mathbb{R}$, let $\pi=(1-\alpha)\pi_1+\alpha\pi_2$. Then, for any $Q\in\mathbb{R}^{mn}$, we have $\mathcal{H}_\pi Q=(1-\alpha)\mathcal{H}_{\pi_1}Q+\alpha\mathcal{H}_{\pi_2}Q$.
\end{lemma}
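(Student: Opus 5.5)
The plan is to verify the identity componentwise, directly from the definition of $\mathcal{H}_\pi$. The argument rests on one observation: $\mathcal{H}_\pi Q$ depends on $\pi$ only through the inner sum $\sum_{a'}\pi(a'|s')Q(s',a')$, and this sum is linear in $\pi$. Note also that $\alpha$ is allowed to be any real number, so $\pi$ need not be a genuine policy. The definition of $\mathcal{H}_\pi$ still makes sense for any real-valued $\pi$, and the computation below never uses nonnegativity of the weights.

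First, fix $(s,a)$ and write out $[\mathcal{H}_\pi Q](s,a)=\mathcal{R}(s,a)+\gamma\sum_{s'}p(s'|s,a)\sum_{a'}\pi(a'|s')Q(s',a')$. Substitute $\pi(a'|s')=(1-\alpha)\pi_1(a'|s')+\alpha\pi_2(a'|s')$. By linearity of finite sums, the inner double sum splits as
\[
(1-\alpha)\sum_{s'}p(s'|s,a)\sum_{a'}\pi_1(a'|s')Q(s',a')+\alpha\sum_{s'}p(s'|s,a)\sum_{a'}\pi_2(a'|s')Q(s',a').
\]

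Next, handle the reward term. Write $\mathcal{R}(s,a)=(1-\alpha)\mathcal{R}(s,a)+\alpha\mathcal{R}(s,a)$; this works because the two coefficients sum to one. Regrouping then gives exactly $(1-\alpha)[\mathcal{H}_{\pi_1}Q](s,a)+\alpha[\mathcal{H}_{\pi_2}Q](s,a)$. Since $(s,a)$ was arbitrary, the vector identity follows.

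There is no real obstacle here. The only point that needs care is the splitting of the reward term, which is what makes $\mathcal{H}_\pi$ affine (rather than linear) in $\pi$; it relies on the weights $1-\alpha$ and $\alpha$ summing to one.
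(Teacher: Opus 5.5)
Your proposal is correct and is essentially the paper's proof: the paper also fixes $(s,a)$, substitutes $\pi=(1-\alpha)\pi_1+\alpha\pi_2$ into the inner sum, splits $\mathcal{R}(s,a)=(1-\alpha)\mathcal{R}(s,a)+\alpha\mathcal{R}(s,a)$, and regroups into $(1-\alpha)[\mathcal{H}_{\pi_1}Q](s,a)+\alpha[\mathcal{H}_{\pi_2}Q](s,a)$. Your added remark is also correct: nonnegativity of the weights is never used, so the identity holds for every real $\alpha$, as the statement allows.
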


As illustrated in Section \ref{subsec:actor_analysis}, our proof of Proposition \ref{prop:actor} consists of two main steps. We first show that the policies generated by Algorithm \ref{alg} are approximately monotonic. Then, we leverage the approximate monotonicity property together with the contraction property of the Bellman operator to establish the one-step recursive inequality of $\|Q^*-Q^{\pi_t}\|_\infty$. We recall the following notation: $\delta_t = \max_{s,a}\big(Q^{\pi_t}(s,a) - Q^{\pi_{t+1}}(s,a)\big)$ and $\chi_t= \infnorm{\mathcal{H}Q_t-\mathcal{H}_{\tilde{\pi}_t}Q_t}$,
and additionally define $\xi_t := \infnorm{Q_t-Q^{\pi_t}}$. Throughout this proof and all upcoming appendices, inequalities between vectors are interpreted entry-wise.

\paragraph{Step 1: Approximate monotonic improvement.}
Observe that by the definition of $\delta_t$, we have $Q^{\pi_{t+1}} \ge Q^{\pi_t} - \delta_t \mathbf{1}$.
Using the Bellman equation, together with the monotonicity and translation invariance of the Bellman operator, we have
\begin{align}
Q^{\pi_{t+1}}
=\,& \mathcal{H}_{\pi_{t+1}} Q^{\pi_{t+1}} \nonumber\\
\ge\,& \mathcal{H}_{\pi_{t+1}}(Q^{\pi_t} - \delta_t\mathbf{1}) \nonumber\\
=\,& \mathcal{H}_{\pi_{t+1}} Q^{\pi_t} - \gamma \delta_t \mathbf{1}\nonumber\\
=\,&(1-\omega_t)\mathcal{H}_{\pi_t}Q^{\pi_t}
+
\omega_t \mathcal{H}_{\tilde{\pi}_t}Q^{\pi_t}- \gamma \delta_t \mathbf{1}\tag{Lemma \ref{lem:Bellman_Affine}}\nonumber\\
=\,&(1-\omega_t)Q^{\pi_t}
+
\omega_t \mathcal{H}_{\tilde{\pi}_t}Q^{\pi_t}- \gamma \delta_t \mathbf{1}.\label{eq:mono}
\end{align}
Rearranging terms, we obtain
\begin{align*}
Q^{\pi_t} - Q^{\pi_{t+1}}
\le
\omega_t\big(Q^{\pi_t}-\mathcal{H}_{\tilde{\pi}_t}Q^{\pi_t}\big)
+
\gamma\delta_t\mathbf{1}
\le \omega_t(\mathcal{H}Q^{\pi_t} - \mathcal{H}_{\tilde{\pi}_t}Q^{\pi_t}) + \gamma\delta_t\mathbf{1},
\end{align*}
where the last inequality follows from $\mathcal{H}Q^{\pi_t}\geq \mathcal{H}_{\pi_t}Q^{\pi_t} = Q^{\pi_t}$.

To proceed, observe that
\begin{align*}
    \mathcal{H}Q^{\pi_t} - \mathcal{H}_{\tilde{\pi}_t}Q^{\pi_t}
    =\,& \mathcal{H}Q^{\pi_t} - \mathcal{H}Q_t + \mathcal{H}Q_t-\mathcal{H}_{\tilde{\pi}_t}Q_t + \mathcal{H}_{\tilde{\pi}_t}Q_t - \mathcal{H}_{\tilde{\pi}_t}Q^{\pi_t}\\
    \leq\,& \infnorm{\mathcal{H}Q^{\pi_t} - \mathcal{H}Q_t}\mathbf{1} + \infnorm{\mathcal{H}Q_t-\mathcal{H}_{\tilde{\pi}_t}Q_t}\mathbf{1} + \infnorm{\mathcal{H}_{\tilde{\pi}_t}Q_t - \mathcal{H}_{\tilde{\pi}_t}Q^{\pi_t}}\mathbf{1}\\
    \leq \,&2\gamma\xi_t\mathbf{1} + \chi_t\mathbf{1},
\end{align*}
where the last inequality follows from the contraction property of the operators $\mathcal{H}$ and $\mathcal{H}_{\tilde{\pi}_t}$ and the definitions of $\xi_t$ and $\chi_t$.

Combining the previous two inequalities, we have
\begin{align*}
    Q^{\pi_t} - Q^{\pi_{t+1}}\leq \omega_t(2\gamma\xi_t\mathbf{1} + \chi_t\mathbf{1})+\gamma\delta_t\mathbf{1}.
\end{align*}
Since the right-hand side is a constant vector, we must have
\begin{align*}
    \delta_t=\max_{s,a}(Q^{\pi_t}(s,a) - Q^{\pi_{t+1}}(s,a))\leq \omega_t(2\gamma\xi_t + \chi_t)+\gamma\delta_t.
\end{align*}
Rearranging terms, we obtain
\begin{align}\label{eq:delta_bound}
    \delta_t\leq \frac{\omega_t(2\gamma\xi_t + \chi_t)}{1-\gamma}.
\end{align}

\paragraph{Step 2: A contractive recursion.}
For any $t\geq 0$, we have by Inequality \eqref{eq:mono} that
\begin{align*}
Q^* - Q^{\pi_{t+1}}
&\le
(1-\omega_t)(Q^* - Q^{\pi_t})
+
\omega_t(Q^* - \mathcal{H}_{\tilde{\pi}_t}Q^{\pi_t})
+
\gamma\delta_t\mathbf{1}.
\end{align*}
To control the term $Q^*-\mathcal{H}_{\tilde{\pi}_t}Q^{\pi_t}$, note that
\begin{align*}
Q^* - \mathcal{H}_{\tilde{\pi}_t}Q^{\pi_t}
&=
\mathcal{H}Q^* - \mathcal{H}Q^{\pi_t}
+
\mathcal{H}Q^{\pi_t}
-
\mathcal{H}_{\tilde{\pi}_t}Q^{\pi_t}\\
&\leq\gamma\infnorm{Q^*-Q^{\pi_t}}\mathbf{1} + (2\gamma\xi_t+\chi_t)\mathbf{1},
\end{align*}
where the last inequality follows from the contraction property of $\mathcal{H}$ and the previous bound on $\mathcal{H}Q^{\pi_t}-\mathcal{H}_{\tilde{\pi}_t}Q^{\pi_t}$. Therefore, we have
\begin{align*}
Q^* - Q^{\pi_{t+1}}
\leq \,&
(1-\omega_t)
(Q^* - Q^{\pi_t}) +
\omega_t\big(\gamma\infnorm{Q^*-Q^{\pi_t}}+
2\gamma\xi_t + \chi_t\big)\mathbf{1}
+
\gamma\delta_t\mathbf{1}\\
\leq \,&(1-\omega_t)
\|Q^* - Q^{\pi_t}\|_\infty\mathbf{1} +
\omega_t\big(\gamma\infnorm{Q^*-Q^{\pi_t}}+
2\gamma\xi_t + \chi_t\big)\mathbf{1}
+
\gamma\delta_t\mathbf{1}\\
=\,&(1-(1-\gamma)\omega_t)
\|Q^* - Q^{\pi_t}\|_\infty\mathbf{1} +
\omega_t\big(
2\gamma\xi_t + \chi_t\big)\mathbf{1}
+
\gamma\delta_t\mathbf{1}.
\end{align*}
Since $Q^* - Q^{\pi_{t+1}}\geq 0$ and the right-hand side of the previous inequality is a constant vector, we have
\begin{align*}
    \infnorm{Q^*-Q^{\pi_{t+1}}} \leq \,&\big(1-\omega_t(1-\gamma)\big)\infnorm{Q^*-Q^{\pi_t}} + \omega_t(2\gamma\xi_t + \chi_t) + \gamma\delta_t\\
    \leq \,&\big(1-\omega_t(1-\gamma)\big)
\|Q^* - Q^{\pi_t}\|_\infty
+
\frac{2\omega_t}{1-\gamma}\xi_t
+
\frac{\omega_t}{1-\gamma}\chi_t,
\end{align*}
where the last inequality follows from the upper bound of $\delta_t$ in Inequality \eqref{eq:delta_bound}.

\subsubsection{Establishing the Drift Inequality for the Squared Norm Error}

The following corollary follows from Proposition \ref{prop:actor}, which is needed to combine the actor drift with the critic drift.

\begin{corollary}\label{cor:actor}
    The following inequality holds for any $t\geq 0$:
    \begin{align*} \infnorm{Q^*-Q^{\pi_{t+1}}}^2 \leq \big(1-\omega_t(1-\gamma)\big)\infnorm{Q^*-Q^{\pi_t}}^2 + \frac{6\omega_t}{(1-\gamma)^3}\infnorm{Q_t-Q^{\pi_t}}^2 + \frac{5\omega_t}{(1-\gamma)^3}\chi^2_t. \end{align*}
\end{corollary}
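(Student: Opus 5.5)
We square both sides of the inequality in Proposition~\ref{prop:actor}. Write $x_t:=\infnorm{Q^*-Q^{\pi_t}}$, $\xi_t=\infnorm{Q_t-Q^{\pi_t}}$, and $a:=1-\omega_t(1-\gamma)\in[0,1]$. Note that we need $\omega_t\le 1$, which holds for the small stepsizes considered, so that $a\ge 0$. Both sides of~\eqref{eq:actor_drift} are nonnegative, so squaring is valid and gives
\begin{align*}
x_{t+1}^2\le a^2x_t^2+2a x_t\Big(\frac{2\omega_t}{1-\gamma}\xi_t+\frac{\omega_t}{1-\gamma}\chi_t\Big)+\Big(\frac{2\omega_t}{1-\gamma}\xi_t+\frac{\omega_t}{1-\gamma}\chi_t\Big)^2.
\end{align*}
The plan is to absorb the cross terms into part of the contraction margin $a-a^2$ via Young's inequality. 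Keeping the leading coefficient at $1-\omega_t(1-\gamma)$ rather than $a^2$ leaves the slack $a-a^2=a\,\omega_t(1-\gamma)$ to be spent.

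For the cross terms, use $a\le 1$ and Young's inequality in the form $2uv\le \epsilon u^2+v^2/\epsilon$. Concretely,
\[
\frac{4\omega_t}{1-\gamma}x_t\xi_t\le \frac{\omega_t(1-\gamma)}{2}x_t^2+\frac{8\omega_t}{(1-\gamma)^3}\xi_t^2,
\]
and similarly for $\chi_t$. We tune the weights so that the total $x_t^2$ contribution is at most $\omega_t(1-\gamma)$ times a constant that fits into the slack. There is a subtlety here. The slack $a^2-a=-a\omega_t(1-\gamma)$ is only about $\omega_t(1-\gamma)$ when $a\approx 1$. Since $a^2\le a$, we can simply upper bound $a^2x_t^2$ by $a x_t^2$, but then no slack remains.

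The fix is to write $a^2 = (1-\omega_t(1-\gamma))^2 = 1-2\omega_t(1-\gamma)+\omega_t^2(1-\gamma)^2\le 1-\tfrac{3}{2}\omega_t(1-\gamma)$ when $\omega_t(1-\gamma)\le 1/2$. This frees a margin of $\tfrac12\omega_t(1-\gamma)x_t^2$ for the Young terms. We split it as $\tfrac{1}{3}\omega_t(1-\gamma)$ for the $\xi_t$ cross term and $\tfrac16\omega_t(1-\gamma)$ for the $\chi_t$ cross term, or similar. This gives $\xi_t^2$ and $\chi_t^2$ coefficients of order $\omega_t/(1-\gamma)^3$, for example $3\omega_t/(1-\gamma)^3\,\xi_t^2$ and $\tfrac{3}{2}\omega_t/(1-\gamma)^3\,\chi_t^2$.

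The square term is bounded by $(u+v)^2\le 2u^2+2v^2$:
\[
\frac{8\omega_t^2}{(1-\gamma)^2}\xi_t^2+\frac{2\omega_t^2}{(1-\gamma)^2}\chi_t^2.
\]
Using $\omega_t\le (1-\gamma)/2$ or a similar smallness condition, this is at most $\tfrac{3\omega_t}{(1-\gamma)^3}\xi_t^2+\tfrac{\omega_t}{(1-\gamma)^3}\chi_t^2$, up to adjusting constants. Summing the coefficients then gives at most $6\omega_t/(1-\gamma)^3$ for $\xi_t^2$ and $5\omega_t/(1-\gamma)^3$ for $\chi_t^2$.

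The main obstacle is purely bookkeeping: choosing the Young weights so that the constants come out as exactly $6$ and $5$ under a stepsize restriction such as $\omega_t\le (1-\gamma)/2$. This restriction should be stated as part of the standing ``small enough stepsizes'' assumption. Alternatively, one can avoid any stepsize condition beyond $\omega_t\le 1$ by bounding the square term with $\omega_t^2\le\omega_t$ and $(1-\gamma)^{-2}\le (1-\gamma)^{-3}$. This is what we would try first, since it makes the inequality hold for all $t$ as stated.
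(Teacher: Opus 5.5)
The proposal has a genuine gap: the bookkeeping you propose cannot produce the constants $6$ and $5$, and the obstruction is structural rather than just arithmetic.

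First, your Young constants are off by a factor of $4$. The sharp form is $\frac{4\omega_t}{1-\gamma}x_t\xi_t\le\epsilon x_t^2+\frac{4\omega_t^2}{(1-\gamma)^2\epsilon}\xi_t^2$. So $\epsilon=\frac13\omega_t(1-\gamma)$ yields $\frac{12\omega_t}{(1-\gamma)^3}\xi_t^2$, not $\frac{3\omega_t}{(1-\gamma)^3}\xi_t^2$. Likewise, $\epsilon=\frac16\omega_t(1-\gamma)$ on the $\chi_t$ cross term yields $\frac{6\omega_t}{(1-\gamma)^3}\chi_t^2$, not $\frac{3\omega_t}{2(1-\gamma)^3}\chi_t^2$.

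Second, no re-tuning rescues your scheme. You replace $a$ by $1$ in the cross terms and free only $\frac12\omega_t(1-\gamma)x_t^2$ via $a^2\le 1-\frac32\omega_t(1-\gamma)$. Then every admissible $\epsilon\le\frac12\omega_t(1-\gamma)$ forces a $\xi_t^2$ coefficient of at least $\frac{8\omega_t}{(1-\gamma)^3}>\frac{6\omega_t}{(1-\gamma)^3}$. Even with the exact slack $a-a^2=a\,\omega_t(1-\gamma)$, discarding $a$ on the cross terms makes the target inequality attainable only when $\omega_t(1-\gamma)$ is below a small absolute constant (about $0.2$ or less). You would then be proving a weaker statement with an extra stepsize condition, whereas the corollary is claimed for every $t$. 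Your fallback $\omega_t^2\le\omega_t$ is worse: with your split $(u+v)^2\le 2u^2+2v^2$, the square term alone already contributes $\frac{8\omega_t}{(1-\gamma)^3}\xi_t^2$.

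The missing idea, which is what the paper does, is to keep the factor $a=1-\omega_t(1-\gamma)$ on the cross terms and let it do double duty.
\begin{itemize}
\item Apply $x_t\xi_t\le\frac{(1-\gamma)^2}{6}x_t^2+\frac{3}{2(1-\gamma)^2}\xi_t^2$, and the same bound with $\chi_t$ in place of $\xi_t$, inside $\frac{4a\omega_t}{1-\gamma}x_t\xi_t$ and $\frac{2a\omega_t}{1-\gamma}x_t\chi_t$. The $x_t^2$ contributions then sum to exactly $a\,\omega_t(1-\gamma)$, so the leading coefficient is $a^2+a\,\omega_t(1-\gamma)=a$ with no slack wasted.
\item The same factor $a$ makes the resulting $\xi_t^2$ and $\chi_t^2$ coefficients $\frac{6\omega_t}{(1-\gamma)^3}-\frac{6\omega_t^2}{(1-\gamma)^2}$ and $\frac{3\omega_t}{(1-\gamma)^3}-\frac{3\omega_t^2}{(1-\gamma)^2}$.
\item Those negative $\omega_t^2$ parts absorb the square term once you split it as $\frac{\omega_t^2}{(1-\gamma)^2}(2\xi_t+\chi_t)^2\le\frac{5\omega_t^2}{(1-\gamma)^2}(\xi_t^2+\chi_t^2)$, using $4\xi_t\chi_t\le\xi_t^2+4\chi_t^2$.
\end{itemize}
The final coefficients are $\frac{6\omega_t-\omega_t^2(1-\gamma)}{(1-\gamma)^3}\le\frac{6\omega_t}{(1-\gamma)^3}$ and $\frac{3\omega_t+2\omega_t^2(1-\gamma)}{(1-\gamma)^3}\le\frac{5\omega_t}{(1-\gamma)^3}$. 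This uses only $\omega_t(1-\gamma)\le 1$, which also guarantees $a\ge 0$, so there is no smallness condition beyond the implicit $\omega_t\le 1$.
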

\begin{proof}[Proof of Corollary \ref{cor:actor}]
Recall the notation $V_t = \infnorm{Q^*-Q^{\pi_t}}^2$. Then, Proposition \ref{prop:actor} implies
\begin{align}
V_{t+1}\leq\,& \left[\big(1-\omega_t(1-\gamma)\big)\infnorm{Q^*-Q^{\pi_t}}
+
\frac{2\omega_t}{1-\gamma}\xi_t
+
\frac{\omega_t}{1-\gamma}\chi_t\right]^2\nonumber\\
=\,& 
\big(1-\omega_t(1-\gamma)\big)^2 V_t
+
\frac{4\omega_t^2}{(1-\gamma)^2}\xi_t^2
+
\frac{\omega_t^2}{(1-\gamma)^2}\chi_t^2 \nonumber\\
&
+
\frac{4\omega_t\big(1-\omega_t(1-\gamma)\big)}{1-\gamma}\infnorm{Q^*-Q^{\pi_t}}\xi_t
+
\frac{2\omega_t\big(1-\omega_t(1-\gamma)\big)}{1-\gamma}\infnorm{Q^*-Q^{\pi_t}}\chi_t
\nonumber\\
&+
\frac{4\omega_t^2}{(1-\gamma)^2}\xi_t\chi_t. \label{eq:square_decomposition}
\end{align}
We now control the cross terms using the AM--GM inequality:
\begin{align*}
\infnorm{Q^*-Q^{\pi_t}}\xi_t
\le\,&
\frac{(1-\gamma)^2}{6}V_t
+
\frac{3}{2(1-\gamma)^2}\xi_t^2,\\
\infnorm{Q^*-Q^{\pi_t}}\chi_t
\le\,&
\frac{(1-\gamma)^2}{6}V_t
+
\frac{3}{2(1-\gamma)^2}\chi_t^2,
\\
\xi_t\chi_t
\le\,&
\frac{1}{4}\xi_t^2
+
\chi_t^2.
\end{align*}
Substituting these bounds into \eqref{eq:square_decomposition} yields
\begin{align*}
V_{t+1}
&\le
\big(1-\omega_t(1-\gamma)\big)^2V_t
+
\frac{4\omega_t^2}{(1-\gamma)^2}\xi_t^2
+
\frac{\omega_t^2}{(1-\gamma)^2}\chi_t^2 \\
&\quad
+
\frac{4\omega_t\big(1-\omega_t(1-\gamma)\big)}{1-\gamma}
\left(
\frac{(1-\gamma)^2}{6}V_t
+
\frac{3}{2(1-\gamma)^2}\xi_t^2
\right) \\
&\quad
+
\frac{2\omega_t\big(1-\omega_t(1-\gamma)\big)}{1-\gamma}
\left(
\frac{(1-\gamma)^2}{6}V_t
+
\frac{3}{2(1-\gamma)^2}\chi_t^2
\right) 
+
\frac{4\omega_t^2}{(1-\gamma)^2}
\left(
\frac{1}{4}\xi_t^2
+
\chi_t^2
\right)\\
=\,&\left[\big(1-\omega_t(1-\gamma)\big)^2
+
\big(1-\omega_t(1-\gamma)\big)\omega_t(1-\gamma)\right]V_t\\
&+\left[\frac{4\omega_t^2}{(1-\gamma)^2}
+
\frac{6\omega_t\big(1-\omega_t(1-\gamma)\big)}{(1-\gamma)^3}
+
\frac{\omega_t^2}{(1-\gamma)^2}\right]\xi_t^2\\
&+\left[\frac{\omega_t^2}{(1-\gamma)^2}
+
\frac{3\omega_t\big(1-\omega_t(1-\gamma)\big)}{(1-\gamma)^3}
+
\frac{4\omega_t^2}{(1-\gamma)^2}\right]\chi_t^2.
\end{align*}
By straightforward algebraic manipulations, the coefficients of $V_t$, $\xi_t^2$, and $\chi_t^2$ can be simplified as
\begin{align*}
    &\big(1-\omega_t(1-\gamma)\big)^2
+
\big(1-\omega_t(1-\gamma)\big)\omega_t(1-\gamma)
=
1-\omega_t(1-\gamma),\\
&\frac{4\omega_t^2}{(1-\gamma)^2}
+
\frac{6\omega_t\big(1-\omega_t(1-\gamma)\big)}{(1-\gamma)^3}
+
\frac{\omega_t^2}{(1-\gamma)^2}
=
\frac{6\omega_t - \omega_t^2(1-\gamma)}{(1-\gamma)^3} \leq \frac{6\omega_t}{(1-\gamma)^3},\\
&\frac{\omega_t^2}{(1-\gamma)^2}
+
\frac{3\omega_t\big(1-\omega_t(1-\gamma)\big)}{(1-\gamma)^3}
+
\frac{4\omega_t^2}{(1-\gamma)^2}
=
\frac{3\omega_t+2\omega_t^2(1-\gamma)}{(1-\gamma)^3}\leq \frac{5\omega_t}{(1-\gamma)^3}.
\end{align*}
Therefore, we have
\begin{align*}
    V_{t+1} \leq\,& \big(1-\omega_t(1-\gamma)\big)V_t + \frac{6\omega_t}{(1-\gamma)^3}\xi_t^2 + \frac{5\omega_t}{(1-\gamma)^3}\chi_t^2.
\end{align*}
\end{proof}

\subsection{Analysis of the Critic}\label{app:critic}

We first state the complete version of Proposition \ref{prop:critic}, with explicit expressions for all constants. Recall that $z_t = \min\{k\geq 1\mid \max_{s\in\mathcal{S}}\mathrm{d}_{\mathrm{TV}}\big(P_{\pi_b}^{k-1}(s,\cdot),\mu_b(\cdot)\big) \leq \omega_t/2\}$ and $K = \min\{t\in\mathbb{N}\mid t\geq z_t\}$.

\begin{proposition}\label{big_critic}
    Under Assumption \ref{assum:behavior}, suppose that the stepsizes are non-increasing and satisfy 
    \begin{align*}\omega_t\leq \frac{(1-\gamma)^3(1-\gamma_c)\nu_{\min}}{20}\alpha_t, \quad\alpha_{t-z_t,t-1}\leq\frac{\pi_{b,\min}^2{\nu_{\min}}(1-\gamma_c)}{200}.
    \end{align*}
    Then, the following inequality holds for any $t\geq K$:
    \begin{align*}\mathbb{E}W_{t+1} \leq& \big(1-\alpha_t(1-\gamma_c)\big)\mathbb{E}W_t+ \frac{1-\gamma}{2}\omega_t\mathbb{E}\infnorm{Q^*-Q^{\pi_t}}^2 +\frac{1-\gamma}{2}\omega_t\mathbb{E}\chi_t^2+ \frac{108\alpha_t\alpha_{t-z_t,t-1}}{(1-\gamma)^2\pi_{b,\min}^2\nu_{\min}}.\end{align*}
\end{proposition}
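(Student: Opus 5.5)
We follow the decomposition in the proof sketch of Proposition~\ref{prop:critic}. Writing $Q_{t+1}-Q^{\pi_{t+1}}=(Q_t-Q^{\pi_t})+\alpha_t(F_\mathrm{IS}(Q_t,Y_t,\pi_{t+1})-Q_t)-(Q^{\pi_{t+1}}-Q^{\pi_t})$ and expanding $\frac12\|\cdot\|_\nu^2$, we get the identity $W_{t+1}=W_t+T_1+T_2+T_3+T_4$. We then bound each term and take expectations.

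\textbf{Term $T_1$.} Since $\bar F(Q^{\pi_t},\pi_t)=Q^{\pi_t}$ by Lemma~\ref{lem:F}(1), Cauchy--Schwarz in $\langle\cdot,\cdot\rangle_\nu$ gives
\[
\langle Q_t-Q^{\pi_t},\bar F(Q_t,\pi_t)-\bar F(Q^{\pi_t},\pi_t)\rangle_\nu-\|Q_t-Q^{\pi_t}\|_\nu^2\le-(1-\gamma_c)\|Q_t-Q^{\pi_t}\|_\nu^2,
\]
so $T_1\le-2\alpha_t(1-\gamma_c)W_t$.

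\textbf{Term $T_4$.} Use $\|a-b\|^2\le2\|a\|^2+2\|b\|^2$. Bound $\alpha_t\|F_\mathrm{IS}(Q_t,Y_t,\pi_{t+1})-Q_t\|_\nu$ by subtracting the same expression at $Q^{\pi_t}$ and applying the Lipschitz constants of Lemma~\ref{lem:F}(2). This yields a contribution $\alpha_t^2 W_t/(\pi_{b,\min}^2\nu_{\min})$ plus $\alpha_t^2\|Q^{\pi_t}\|_\infty^2/\pi_{b,\min}^2$, where $\|Q^{\pi_t}\|_\infty\le 1/(1-\gamma)$. For the target-shift piece, use the bound on $\|Q^{\pi_t}-Q^{\pi_{t+1}}\|_\infty$ quoted in the sketch together with $\|\cdot\|_\nu\le\|\cdot\|_\infty$ and $\xi_t^2\le 2W_t/\nu_{\min}$. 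This gives Lemma~\ref{mainbody:res}.

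\textbf{Term $T_3$.} Apply the same bound on $\|Q^{\pi_t}-Q^{\pi_{t+1}}\|_\infty$ to the inner product, using $\|Q_t-Q^{\pi_t}\|_\nu\cdot x\le \frac{c}{2}\|\cdot\|_\nu^2+\frac{1}{2c}x^2$ for each summand. Choose the weight so that the $\|Q^*-Q^{\pi_t}\|_\infty^2$ coefficient is exactly $\omega_t(1-\gamma)/4$. The price is a $W_t$ coefficient of order $\omega_t/((1-\gamma)^3\nu_{\min})$. This is where the hypothesis $\omega_t\le (1-\gamma)^3(1-\gamma_c)\nu_{\min}\alpha_t/20$ enters: it makes $C_7\omega_t\le \alpha_t(1-\gamma_c)/4$, say.

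\textbf{Term $T_2$ (main obstacle).} The difficulty is that $Q_t$, $\pi_{t+1}$ and $Y_t$ are all correlated. Following the standard Markovian SA argument (Srikant--Ying style), we condition back $z_t$ steps. Split $T_2$ into four pieces:
\begin{enumerate}[(a)]
\item replacing $Q_t$ by $Q_{t-z_t}$, and $Q^{\pi_t}$ by $Q^{\pi_{t-z_t}}$, in both slots;
\item replacing $\pi_{t+1}$ by $\pi_{t-z_t}$ in $F_\mathrm{IS}$ and $\pi_t$ by $\pi_{t-z_t}$ in $\bar F$;
\item the term $\langle Q_{t-z_t}-Q^{\pi_{t-z_t}},F_\mathrm{IS}(Q_{t-z_t},Y_t,\pi_{t-z_t})-\bar F(Q_{t-z_t},\pi_{t-z_t})\rangle_\nu$, whose conditional expectation given $\mathcal F_{t-z_t}$ is at most $O(\omega_t)\cdot\|\cdot\|^2$ by the definition of $z_t$ (TV distance $\le\omega_t/2$);
\item the remaining cross differences.
\end{enumerate}
Piece (a) is controlled via $\|Q_{t}-Q_{t-z_t}\|_\infty\le O(\alpha_{t-z_t,t-1})(\|Q_{t-z_t}\|_\infty+1)/\pi_{b,\min}$. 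Obtaining this requires a discrete Gr\"onwall step, using $\alpha_{t-z_t,t-1}\le \pi_{b,\min}^2\nu_{\min}(1-\gamma_c)/200$ to keep the growth factor below $2$. Piece (b) uses Lemma~\ref{lem:F}(3)(4) with $\omega_{t-z_t,t}\le\alpha_{t-z_t,t-1}$ up to constants. Then convert $\|Q_t\|^2$ back into $W_t$ plus $1/(1-\gamma)^2$. This yields Lemma~\ref{le:Markovian_Noise} with $C_5\alpha_{t-z_t,t-1}\le (1-\gamma_c)/4$ by the second stepsize condition. The delicate points are the bookkeeping of the actor drift inside the window, since $Q^{\pi}$ moves by $O(\omega)$ per step, and the careful constants needed so that the total stochastic term is at most $108/((1-\gamma)^2\pi_{b,\min}^2\nu_{\min})$.

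\textbf{Summing.} The coefficient of $\mathbb E W_t$ becomes $1-2\alpha_t(1-\gamma_c)+\alpha_t(1-\gamma_c)/4\cdot 3\le 1-\alpha_t(1-\gamma_c)$. The terms in $\|Q^*-Q^{\pi_t}\|^2$ and $\chi_t^2$ from $T_3$ and $T_4$ sum to at most $\frac{1-\gamma}{2}\omega_t$, using $\omega_t^2\le\omega_t$ small. The constant terms $C_4\alpha_t^2+C_6\alpha_t\alpha_{t-z_t,t-1}$ are absorbed into $108\alpha_t\alpha_{t-z_t,t-1}/((1-\gamma)^2\pi_{b,\min}^2\nu_{\min})$ since $\alpha_t\le\alpha_{t-z_t,t-1}$ for $t\ge K$ and $z_t\ge1$.
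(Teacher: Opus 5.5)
Your proposal is correct and follows essentially the same route as the paper. Both use the same binomial split into $T_1$--$T_4$, contraction of $\bar F(\cdot,\pi_t)$ for $T_1$, and the target-shift bound plus AM--GM for $T_3$ and the target-shift piece of $T_4$. Both handle $T_2$ by conditioning back $z_t$ steps, with pieces matching the paper's $T_{21}$--$T_{26}$ and every iterate bound converted back to $\|Q_t-Q^{\pi_t}\|_\nu$.

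One bookkeeping caveat: with the natural constants, the stepsize hypotheses give perturbation shares of roughly $0.45\,\alpha_t(1-\gamma_c)$ for $T_2$ and up to about $0.4\,\alpha_t(1-\gamma_c)$ for $T_3$, not $1/4$ each. Together with the small $T_4$ contributions these still fit within the available margin $\alpha_t(1-\gamma_c)$, as in the paper's final absorption step.
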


\begin{proof}[Proof of Proposition \ref{big_critic}]
We have, by the binomial decomposition, that
    \begin{align}
        W_{t+1} 
        =& W_t + \langle Q_t-Q^{\pi_t}, Q_{t+1}-Q_t \rangle_\nu + \langle Q_t-Q^{\pi_t}, Q^{\pi_t}-Q^{\pi_{t+1}} \rangle_\nu \nonumber\\
        &+\frac{1}{2}\|(Q_{t+1}-Q_{t}) + (Q^{\pi_t}-Q^{\pi_{t+1}})\|_\nu^2 \nonumber\\
        =& W_t + \alpha_t\langle Q_t-Q^{\pi_t}, F_\mathrm{IS}(Q_t, Y_t, \pi_{t+1}) - Q_t \rangle_\nu + \langle Q_t-Q^{\pi_t}, Q^{\pi_t}-Q^{\pi_{t+1}} \rangle_\nu \nonumber\\
        &+\frac{1}{2}\|(Q_{t+1}-Q_{t}) + (Q^{\pi_t}-Q^{\pi_{t+1}})\|_\nu^2 \nonumber\\
        =& W_t + \underbrace{\alpha_t\langle Q_t-Q^{\pi_t}, \bar{F}(Q_t, \pi_t) - Q_t \rangle_\nu}_{T_1:\text{ expected update term}} + \underbrace{\alpha_t\langle Q_t-Q^{\pi_t}, F_\mathrm{IS}(Q_t, Y_t, \pi_{t+1}) - \bar{F}(Q_t, \pi_t) \rangle_\nu}_{T_2:\text{ Markovian noise term}} \nonumber\\
        & + \underbrace{\langle Q_t-Q^{\pi_t}, Q^{\pi_t}-Q^{\pi_{t+1}} \rangle_\nu}_{T_3:\text{ time-varying Target Term}}+ \underbrace{\frac{1}{2}\|(Q_{t+1}-Q_t) + (Q^{\pi_t}-Q^{\pi_{t+1}})\|_\nu^2}_{T_4:\text{ residuals}}. \label{critic_decomposition}
    \end{align}
    We now bound the terms $T_1, T_2, T_3$ and $T_4$. 
    
    For the term $T_1$, we have, by the fact that $Q^{\pi_t}$ is the fixed point of $\bar{F}(\cdot, \pi_t)$, that 
\begin{align}
    T_1
    =\,&\alpha_t\langle Q_t-Q^{\pi_t}, \bar{F}(Q_t, \pi_t) - Q_t \rangle_\nu\nonumber\\
    =\,&\alpha_t\langle Q_t - Q^{\pi_t}, \bar{F}(Q_t, \pi_t) - \bar{F}(Q^{\pi_t}, \pi_t) \rangle_\nu - \alpha_t\|Q^{\pi_t} - Q_t\|_\nu^2\nonumber\\
    \leq\,&\alpha_t\|Q_t - Q^{\pi_t}\|_\nu \|\bar{F}(Q_t, \pi_t) - \bar{F}(Q^{\pi_t}, \pi_t)\|_\nu - \alpha_t\|Q^{\pi_t} - Q_t\|_\nu^2 \nonumber\\
    \leq\,&-\alpha_t(1 - \gamma_c)\|Q^{\pi_t} - Q_t\|_\nu^2\nonumber\\
    =\,&-2\alpha_t(1 - \gamma_c)W_t.\label{eq:T1_bound}
\end{align}

    For the terms $T_2$, $T_3$, and $T_4$, they are bounded in the following sequence of lemmas, whose proofs are presented in Appendix \ref{app:noise}, Appendix \ref{app:target}, and Appendix \ref{app:res} respectively.

\begin{lemma}\label{lem:noise}
        The following inequality holds for all $t\geq K$:
        \begin{align*}
    \mathbb{E}T_{2}&\leq \frac{90\alpha_t\alpha_{t-z_t,t-1}}{\pi_{b,\min}^2\nu_{\min}}W_t+
    \frac{100\alpha_t\alpha_{t-z_t,t-1}}{(1-\gamma)^2\pi_{b,\min}^2\nu_{\min}}.
\end{align*}
\end{lemma}

\begin{lemma}\label{lem:target}
     The following inequality holds for all $t\geq 0$:
    \begin{align*}T_3 \leq \frac{8\omega_t}{(1-\gamma)^3\sqrt{\nu_{\min}}}W_t+\frac{1-\gamma}{4}\omega_t\|Q^*-Q^{\pi_t}\|_\infty^2+\frac{1-\gamma}{4}\omega_t\chi_t^2.
\end{align*}
\end{lemma}

\begin{lemma}\label{lem:residual}
        The following inequality holds for all $t\geq 0$:
\begin{align*}
    T_{4} \leq &\left(\frac{9\alpha_t^2}{\pi_{b,\min}^2\nu_{\min}}+\frac{24\omega_t^2}{(1-\gamma)^2\nu_{\min}}\right)W_t + \frac{3\omega_t^2}{(1-\gamma)^2}\infnorm{Q^*-Q^{\pi_t}}^2+\frac{3\omega_t^2\chi_t^2}{(1-\gamma)^2}+\frac{8\alpha_t^2}{(1-\gamma)^2\pi_{b,\min}^2}.
\end{align*}
\end{lemma}
Taking the expectation of both sides in \eqref{critic_decomposition} and substituting the bounds on
$T_1$, $\mathbb{E}T_{2}$, $T_3$, and $T_{4}$, we have 
\begin{align*}
\mathbb{E}W_{t+1} \leq& \mathbb{E}W_t- 2\alpha_t(1-\gamma_c)\mathbb{E}W_t+ \frac{90\alpha_t\alpha_{t-z_t, t-1}}{\pi_{b,\min}^2\nu_{\min}}\mathbb{E}W_t +\frac{100\alpha_t\alpha_{t-z_t, t-1}}{(1-\gamma)^2\pi_{b,\min}^2\nu_{\min}}\\
&+\frac{8\omega_t}{(1-\gamma)^3\sqrt{\nu_{\min}}}\mathbb{E}W_t + \frac{1-\gamma}{4}\omega_t\mathbb{E}\infnorm{Q^*-Q^{\pi_t}}^2 + \frac{1-\gamma}{4}\omega_t\mathbb{E}\chi_t^2\\
&+ \left(\frac{9\alpha_t^2}{\pi_{b,\min}^2\nu_{\min}}+\frac{24\omega_t^2}{(1-\gamma)^2\nu_{\min}}\right)\mathbb{E}W_t + \frac{3\omega_t^2}{(1-\gamma)^2}\mathbb{E}\infnorm{Q^*-Q^{\pi_t}}^2 + \frac{3\omega_t^2}{(1-\gamma)^2}\mathbb{E}\chi_t^2\\ &\quad+ \frac{8\alpha_t^2}{(1-\gamma)^2\pi_{b,\min}^2}\\
=& \left(1-2\alpha_t(1-\gamma_c) + \frac{90\alpha_t\alpha_{t-z_t, t-1}}{\pi_{b,\min}^2\nu_{\min}} + \frac{9\alpha_t^2}{\pi_{b,\min}^2\nu_{\min}}+\frac{24\omega_t^2}{(1-\gamma)^2\nu_{\min}}+\frac{8\omega_t}{(1-\gamma)^3\sqrt{\nu_{\min}}}\right)\mathbb{E}W_t\\&
+ \left(\frac{1-\gamma}{4}\omega_t+\frac{3\omega_t^2}{(1-\gamma)^2}\right)\mathbb{E}\infnorm{Q^*-Q^{\pi_t}}^2 + \left(\frac{1-\gamma}{4}\omega_t+\frac{3\omega_t^2}{(1-\gamma)^2}\right)\mathbb{E}\chi_t^2 \\
&+ \frac{100\alpha_t\alpha_{t-z_t, t-1}}{(1-\gamma)^2\pi_{b,\min}^2\nu_{\min}}+\frac{8\alpha_t^2}{(1-\gamma)^2\pi_{b,\min}^2}.
\end{align*}
The $\alpha_t^2$ terms can be absorbed into the
$\alpha_t\alpha_{t-z_t,t-1}$ contribution since $\alpha_t \leq \alpha_{t-z_t,t-1}$. Similarly the $\omega_t^2$ terms can be absorbed into the $\omega_t$ contribution as $\omega_t\leq (1-\gamma)^3(1-\gamma_c)\nu_{\min}\alpha_t/20 \leq \sqrt{\nu_{\min}}/12$. Thus, we have
\begin{align*}
    \mathbb{E}W_{t+1}
&\le
\Bigg(
1
-2\alpha_t(1-\gamma_c)
+
\frac{100\alpha_t\alpha_{t-z_t,t-1}}
{\pi_{b,\min}^2\nu_{\min}}
+
\frac{10\omega_t}
{(1-\gamma)^3\sqrt{\nu_{\min}}}
\Bigg)
\mathbb{E}W_t
\\
&\quad
+
\frac{\omega_t(1-\gamma)}{2}
\mathbb{E}\|Q^*-Q^{\pi_t}\|_\infty^2
+\frac{\omega_t(1-\gamma)}{2}
\mathbb{E}\chi_t^2
+
\frac{108\alpha_t\alpha_{t-z_t,t-1}}
{(1-\gamma)^2\pi_{b,\min}^2\nu_{\min}}.
\end{align*}
Bounding the perturbation terms in the contraction factor (coefficient of $\mathbb{E}W_t$) using the stepsize conditions $\alpha_{t-z_t, t-1}\leq \pi_{b,\min}^2\nu_{\min}(1-\gamma_c)/200$ and $\omega_t\leq (1-\gamma)^3(1-\gamma_c)\nu_{\min}\alpha_t/20$, we obtain the result.
\end{proof}

\subsection{Combining the Actor and Critic}\label{app:convergence}

\subsubsection{Proof of Lemma \ref{lem:coupled_drift}}\label{app:cross}
We begin by stating the complete result with the exact stepsizes and constants.
\begin{lemma}\label{big:coupled}
    Under the same conditions as Proposition \ref{big_critic}, the following inequality holds for all $t\geq K$:
    \begin{align*}
    \mathbb{E}[\infnorm{Q^*-Q^{\pi_{t+1}}}^2 + W_{t+1}] \leq& \left(1-\frac{\omega_t(1-\gamma)}{2}\right)\mathbb{E}[\infnorm{Q^*-Q^{\pi_{t}}}^2 + W_{t}] + \frac{6\omega_t}{(1-\gamma)^3}\mathbb{E}\chi_t^2\\
    &+\frac{108\alpha_t\alpha_{t-z_t,t-1}}{(1-\gamma)^2\pi_{b,\min}^2\nu_{\min}}.
    \end{align*}
\end{lemma}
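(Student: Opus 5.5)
We add Corollary~\ref{cor:actor} (in expectation) to Proposition~\ref{big_critic} and check that the cross terms are dominated. Writing $V_t=\infnorm{Q^*-Q^{\pi_t}}^2$ and $\xi_t=\infnorm{Q_t-Q^{\pi_t}}$, taking expectations in Corollary~\ref{cor:actor} gives
\begin{align*}
\mathbb{E}V_{t+1}\le \big(1-\omega_t(1-\gamma)\big)\mathbb{E}V_t+\frac{6\omega_t}{(1-\gamma)^3}\mathbb{E}\xi_t^2+\frac{5\omega_t}{(1-\gamma)^3}\mathbb{E}\chi_t^2 .
\end{align*}
The first step converts $\xi_t^2$ into the critic Lyapunov function. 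Since $\|Q\|_\nu^2\ge \nu_{\min}\|Q\|_\infty^2$, we have $\xi_t^2\le \|Q_t-Q^{\pi_t}\|_\nu^2/\nu_{\min}=2W_t/\nu_{\min}$. Hence the critic coupling error in the actor recursion is at most $\frac{12\omega_t}{(1-\gamma)^3\nu_{\min}}\mathbb{E}W_t$.

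Next we add the resulting inequality to Proposition~\ref{big_critic}, which is valid for $t\ge K$ under the same stepsize conditions. The coefficient of $\mathbb{E}V_t$ becomes
\begin{align*}
1-\omega_t(1-\gamma)+\tfrac{1-\gamma}{2}\omega_t=1-\tfrac{\omega_t(1-\gamma)}{2}.
\end{align*}
This is the first half of the cross-domination: the actor-associated error in the critic drift consumes half of the actor's negative drift. The coefficient of $\mathbb{E}\chi_t^2$ is $\frac{5\omega_t}{(1-\gamma)^3}+\frac{(1-\gamma)\omega_t}{2}\le \frac{6\omega_t}{(1-\gamma)^3}$, because $(1-\gamma)/2\le 1\le (1-\gamma)^{-3}$. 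The stochastic term $\frac{108\alpha_t\alpha_{t-z_t,t-1}}{(1-\gamma)^2\pi_{b,\min}^2\nu_{\min}}$ carries over unchanged.

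The remaining step, and the only one needing care, is the coefficient of $\mathbb{E}W_t$. It equals
\begin{align*}
1-\alpha_t(1-\gamma_c)+\frac{12\omega_t}{(1-\gamma)^3\nu_{\min}},
\end{align*}
and we need it to be at most $1-\frac{\omega_t(1-\gamma)}{2}$. This holds provided
\begin{align*}
\alpha_t(1-\gamma_c)\ge \omega_t\Big(\frac{12}{(1-\gamma)^3\nu_{\min}}+\frac{1-\gamma}{2}\Big).
\end{align*}
The assumed condition $\omega_t\le \frac{(1-\gamma)^3(1-\gamma_c)\nu_{\min}}{20}\alpha_t$ gives
\begin{align*}
\frac{12\omega_t}{(1-\gamma)^3\nu_{\min}}\le \tfrac{12}{20}\alpha_t(1-\gamma_c),\qquad
\frac{(1-\gamma)\omega_t}{2}\le \tfrac{1}{40}\alpha_t(1-\gamma_c),
\end{align*}
where the second bound uses $(1-\gamma)^4\nu_{\min}\le 1$. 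The sum is below $\alpha_t(1-\gamma_c)$, so this is the critic half of the cross-domination, $D_{1,t}<\kappa_2\alpha_t$ in the notation of \eqref{eq:desired_drift}.

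With both coefficients bounded by $1-\frac{\omega_t(1-\gamma)}{2}$ and $W_t,V_t\ge0$, the claimed inequality follows. The main obstacle is this constant bookkeeping: the conversion from the $\infty$-norm to the $\nu$-norm costs a factor $\nu_{\min}^{-1}$, which the ratio condition must absorb. If the given constant $20$ proved too tight, the fallback is to shrink $\tilde C_r$ accordingly.
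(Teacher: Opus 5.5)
Your proposal is correct and follows essentially the same route as the paper's proof. Both add the expected Corollary~\ref{cor:actor} to Proposition~\ref{big_critic}, bound $\xi_t^2\le 2W_t/\nu_{\min}$, and use $\omega_t\le(1-\gamma)^3(1-\gamma_c)\nu_{\min}\alpha_t/20$ to show that the $\mathbb{E}W_t$ coefficient is at most $1-\omega_t(1-\gamma)/2$; the paper does this last step in two stages, first obtaining $1-\tfrac{2}{5}\alpha_t(1-\gamma_c)$ and then comparing, which is equivalent to your single check $\tfrac{12}{20}+\tfrac{1}{40}<1$.
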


\begin{proof}[Proof of Lemma \ref{big:coupled}]
We add the critic drift (Proposition \ref{big_critic}) to the squared actor
drift inequality from Corollary \ref{cor:actor} (after taking expectation) to obtain:
\begin{align*}
&\mathbb{E}[\|Q^*-Q^{\pi_{t+1}}\|_\infty^2 + W_{t+1}]\\
&\le
\big(1-\omega_t(1-\gamma)\big)
\mathbb{E}\|Q^*-Q^{\pi_t}\|_\infty^2
+
\frac{6\omega_t}{(1-\gamma)^3}
\mathbb{E}\|Q_t-Q^{\pi_t}\|_\infty^2
+
\frac{5\omega_t}{(1-\gamma)^3}
\mathbb{E}\chi_t^2
\\
&\quad
+
\big(1-\alpha_t(1-\gamma_c)\big)
\mathbb{E}W_t
+
\frac{\omega_t(1-\gamma)}{2}
\mathbb{E}\|Q^*-Q^{\pi_t}\|_\infty^2
+
\frac{\omega_t(1-\gamma)}{2}
\mathbb{E}\chi_t^2 + \frac{108\alpha_t\alpha_{t-z_t,t-1}}
{(1-\gamma)^2\pi_{b,\min}^2\nu_{\min}}\\
&\leq \left(1-\frac{\omega_t(1-\gamma)}{2}\right)\mathbb{E}\infnorm{Q^*-Q^{\pi_t}}^2 + \frac{6\omega_t}{(1-\gamma)^3}\mathbb{E}\chi_t^2\\
&\quad + \left(1-\alpha_t(1-\gamma_c)+\frac{12\omega_t}{(1-\gamma)^3\nu_{\min}}\right)\mathbb{E}W_t + \frac{108\alpha_t\alpha_{t-z_t,t-1}}
{(1-\gamma)^2\pi_{b,\min}^2\nu_{\min}}\\
&\leq \left(1-\frac{\omega_t(1-\gamma)}{2}\right)\mathbb{E}\infnorm{Q^*-Q^{\pi_t}}^2 + \left(1-\frac{2\alpha_t(1-\gamma_c)}{5}\right)\mathbb{E}W_t\tag{$\omega_t\leq (1-\gamma)^3(1-\gamma_c)\nu_{\min}\alpha_t/20$}\\
&\quad+\frac{6\omega_t}{(1-\gamma)^3}\mathbb{E}\chi_t^2+\frac{108\alpha_t\alpha_{t-z_t,t-1}}
{(1-\gamma)^2\pi_{b,\min}^2\nu_{\min}}.
\end{align*}
Due to the stepsize condition $\omega_t\leq (1-\gamma)^3(1-\gamma_c)\nu_{\min}\alpha_t/20$, the coefficient of $\mathbb{E}\infnorm{Q^*-Q^{\pi_t}}^2$ in the above is larger than that of $\mathbb{E}W_t$. Using the same ($1-\omega_t(1-\gamma)/2$) as the common coefficient for both $\mathbb{E}W_t$ and $\mathbb{E}\infnorm{Q^*-Q^{\pi_t}}^2$ gives the result.
\end{proof}

\subsubsection{Solving the Recursion}\label{app:main}
The following lemma will be useful. Recall that $K = \min\{k\in\mathbb{N}\mid k\geq z_k\}$.
\begin{lemma}\label{lem:W_T_bound}
    For any stepsize sequence satisfying $\alpha_{0,K-1}\leq \pi_{b,\min}\sqrt{\nu_{\min}}/4$, the following inequality holds almost surely:
    \begin{align*}
        \infnorm{Q^*-Q^{\pi_K}}^2+W_K \leq \frac{3}{(1-\gamma)^2}.
    \end{align*}
\end{lemma}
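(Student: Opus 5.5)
We bound the two summands separately. Start with the actor term. Rewards lie in $[0,1]$, so $0\le Q^\pi\le \mathbf{1}/(1-\gamma)$ entrywise for every policy $\pi$. Hence $\infnorm{Q^*-Q^{\pi_K}}\le 1/(1-\gamma)$, and therefore $\infnorm{Q^*-Q^{\pi_K}}^2\le 1/(1-\gamma)^2$ almost surely. It then suffices to show
\[
W_K=\tfrac12\|Q_K-Q^{\pi_K}\|_\nu^2\le \tfrac{2}{(1-\gamma)^2}\quad\text{almost surely.}
\]
Since $\nu$ is a probability vector, $\|\cdot\|_\nu\le\infnorm{\cdot}$. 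Using $\infnorm{Q_K-Q^{\pi_K}}\le \infnorm{Q_K}+1/(1-\gamma)$, the goal reduces to the pathwise bound $\infnorm{Q_K}\le 1/(1-\gamma)$, or any bound of comparable size.

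Next we control the critic iterates on $[0,K]$ pathwise. From the update, $Q_{t+1}=Q_t+\alpha_t(F_\mathrm{IS}(Q_t,Y_t,\pi_{t+1})-Q_t)$ changes only one coordinate. In that coordinate the new value is $(1-\alpha_t)Q_t(S_t,A_t)+\alpha_t(\mathcal{R}+\gamma\rho_{t+1}Q_t(S_{t+1},A_{t+1}))$, with $\rho_{t+1}\le 1/\pi_{b,\min}$. This gives
\[
\infnorm{Q_{t+1}}\le \infnorm{Q_t}+\alpha_t\bigl(1+\gamma\infnorm{Q_t}/\pi_{b,\min}\bigr).
\]
Equivalently, by Lemma~\ref{lem:F}(2), $\infnorm{F_\mathrm{IS}(Q,y,\pi)}\le 1+\infnorm{Q}/\pi_{b,\min}$. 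Write $c=1/\pi_{b,\min}$ and $x_t=1+c\infnorm{Q_t}$. Then $x_{t+1}\le (1+c\alpha_t)x_t$. With $Q_0=0$ this yields $x_K\le \exp(c\,\alpha_{0,K-1})$ and hence $\infnorm{Q_K}\le (\exp(c\,\alpha_{0,K-1})-1)/c$.

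Now we apply the hypothesis. It gives $c\,\alpha_{0,K-1}\le \sqrt{\nu_{\min}}/4\le 1/4$, so $e^{x}-1\le 2x$ applies. We get
\[
\infnorm{Q_K}\le 2\alpha_{0,K-1}\le \pi_{b,\min}\sqrt{\nu_{\min}}/2\le 1/2\le 1/(1-\gamma).
\]
Plugging this into the reduction gives $\|Q_K-Q^{\pi_K}\|_\nu\le 2/(1-\gamma)$, so $W_K\le 2/(1-\gamma)^2$. Combined with the actor bound, the total is at most $3/(1-\gamma)^2$.

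There is an alternative route that uses the $\sqrt{\nu_{\min}}$ factor natively. Here one works with $\|\cdot\|_\nu$ directly and uses the first Lipschitz bound of Lemma~\ref{lem:F}(2), with constant $1/(\pi_{b,\min}\sqrt{\nu_{\min}})$, while tracking $\|Q_t-Q^{\pi_t}\|_\nu$. That route also needs the drift of $Q^{\pi_t}$, which is controlled by the bound on $\infnorm{Q^{\pi_t}-Q^{\pi_{t+1}}}$ from Appendix~\ref{app:terms}.

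The only delicate step is the Grönwall-type growth bound on $\infnorm{Q_t}$. Off-policy importance weights can amplify the iterates by the factor $1/\pi_{b,\min}$ per step. This is exactly why a small cumulative stepsize $\alpha_{0,K-1}$ is required, and the argument above shows that the stated condition is more than sufficient.
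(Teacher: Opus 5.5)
Your proof is correct and follows essentially the paper's route: the paper imports the pathwise bound $\|Q_K-Q_0\|_\infty\le 2\alpha_{0,K-1}\le 1$ from \cite[Lemma A.2]{chen2024lyapunov}, which is exactly the Gr\"onwall-type estimate you derive by hand. It then combines this with $0\le Q^{\pi}\le 1/(1-\gamma)$ via $\frac{1}{2}\|a-b\|_\nu^2\le\|a\|_\nu^2+\|b\|_\nu^2$, rather than your triangle inequality, to reach the same constant. Your inline derivation tacitly uses $\alpha_t\le 1$ for $t<K$, which the hypothesis guarantees since $\alpha_t\le\alpha_{0,K-1}\le 1/4$.
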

\begin{remark}
    The condition $\alpha_{0,K-1}\leq \pi_{b,\min}\sqrt{\nu_{\min}}/4$ holds when the stepsize condition $\alpha_{t-z_t, t-1}\leq \pi_{b,\min}^2\nu_{\min}(1-\gamma_c)/200$ appearing in Theorem \ref{thm:main} for the IS-based critic, holds; by substituting $t=K$, for large enough $h$.
\end{remark}
\begin{proof}[Proof of Lemma \ref{lem:W_T_bound}]
    Under the condition $\alpha_{0,K-1}\leq \pi_{b,\min}\sqrt{\nu_{\min}}/4$, it is a direct consequence of \cite[Lemma A.2]{chen2024lyapunov}, that the following holds almost surely:
    \begin{align*}
        \infnorm{Q_K-Q_0} \leq \frac{2\alpha_{0,K-1}}{\pi_{b,\min}}\infnorm{Q_0} + 2\alpha_{0,K-1}
        =2\alpha_{0,K-1} \tag{$Q_0=0$}
        \leq 1.
    \end{align*}
It hence follows that
\begin{align*}
    \infnorm{Q^*-Q^{\pi_K}}^2 + W_K &\leq \frac{1}{(1-\gamma)^2} + \frac{1}{2}\|Q_K-Q^{\pi_K}\|_\nu^2\\
    &\leq \frac{1}{(1-\gamma)^2} + \|Q_K\|_\nu^2 + \|Q^{\pi_K}\|_\nu^2\\
    &\leq \frac{2}{(1-\gamma)^2} + \|Q_K-Q_0\|_\nu^2 \tag{$Q_0=0$}\\
    &\leq \frac{3}{(1-\gamma)^2}.
\end{align*}
\end{proof}

To prove Theorem \ref{thm:main} for the IS-based critic, we first verify the conditions of Lemma~\ref{big:coupled}. We begin by noting that, due to geometric mixing, $z_t\leq \log(4/\omega_t)/\log(1/\sigma_b)+1$ is at most logarithmic in $t$ for any of the stepsize sequences considered. It follows that $\alpha_{t-z_t,t-1}\leq \alpha z_t/(t+h-z_t)^\eta$ is uniformly bounded over all $t\in \mathbb{N}$. Therefore, the condition $\alpha_{t-z_t,t-1}\leq \pi_{b,\min}^2\nu_{\min}(1-\gamma_c)/200$ can be ensured by choosing $\alpha$ small enough for constant stepsizes, and by choosing $h$ large enough for diminishing stepsizes with $\eta\in(0,1]$.

We now prove the existence of the constant threshold $\tilde{C}_r$. By choosing $\alpha$ and $\omega$ such that
\begin{align*}
C_r
&\leq \tilde{C}_{r}
:=
\frac{(1-\gamma)^3(1-\gamma_c)\nu_{\min}}{20},
\end{align*}
the stepsize condition $\omega_t \leq (1-\gamma)^3(1-\gamma_c)\nu_{\min}\alpha_t/20$ is satisfied for any of the stepsize sequences considered in Theorem~\ref{thm:main}. Thus, the stepsize conditions of Lemma~\ref{big:coupled}, the coupled drift lemma, hold for all $t$. Moreover, since $\nu_{\min}\geq (1-\gamma)\pi_{b,\min}\mu_{b,\min}/(mn)$ (cf. Lemma~\ref{lem:F}), the threshold $\tilde{C}_r$ is a constant depending only on $(n,m,p,\gamma,\pi_b)$, as claimed in Theorem~\ref{thm:main}.

We now simplify the coupled drift from Lemma \ref{big:coupled_ETD}. It follows directly from \cite[Lemma 5.1]{chen2025approximate} that under Condition \ref{cond:temp} with parameter $\tau\ge 0$, we have that $\chi_t\le \tau/(t+h)^{\eta/2}=\tau\sqrt{\omega_t/\omega}$ for all $t$. Now note that for constant stepsizes, $\alpha_{t-z_t,t-1}=\alpha z_t\leq2\alpha_tz_t$. For diminishing stepsizes, let $h$ be large enough so that $z_t\leq (t+h)/2$ for all $t\geq K$ (possible since $z_t$ is logarithmic in $t$). This implies that
\begin{align*}
    \alpha_{t-z_t,t-1} \leq \frac{\alpha z_t}{(t-z_t+h)^\eta}
    \leq\frac{\alpha z_t}{\big((t+h)/2\big)^\eta}
    \leq 2\alpha_tz_t.
\end{align*}
The coupled drift from Lemma \ref{big:coupled} hence reduces to
\begin{align*}
\mathbb{E}\left[\|Q^*-Q^{\pi_{t+1}}\|_\infty^2 + W_{t+1}\right]
&\leq
\left(1-\frac{\omega_t(1-\gamma)}{2}\right)
\mathbb{E}\left[\|Q^*-Q^{\pi_t}\|_\infty^2 + W_t\right] \\
&\quad+
\frac{6\tau^2\omega_t^2}{(1-\gamma)^3\omega}
+
\frac{216mn\alpha_t^2z_t}
{(1-\gamma)^3\pi_{b,\min}^3\mu_{b,\min}},
\end{align*}
for all $t\geq K$. Applying the above recursively yields the following for all $T\geq K$:
\begin{align*}
\mathbb{E}\left[\infnorm{Q^*-Q^{\pi_T}}^2+W_T\right]
\leq&\,
\mathbb{E}\left[\infnorm{Q^*-Q^{\pi_K}}^2+W_K\right]
\prod_{t=K}^{T-1}\left(1-\frac{\omega_t(1-\gamma)}{2}\right)\\
&+ \frac{6\tau^2}{(1-\gamma)^3\omega}
\sum_{t=K}^{T-1}\omega_t^2
\prod_{u=t+1}^{T-1}\left(1-\frac{\omega_u(1-\gamma)}{2}\right) \\
&+ \frac{216mn}{(1-\gamma)^3\pi_{b,\min}^3\mu_{b,\min}}
\sum_{t=K}^{T-1}\alpha_t^2z_t
\prod_{u=t+1}^{T-1}\left(1-\frac{\omega_u(1-\gamma)}{2}\right).
\end{align*}
Define $\Pi_{i,j}:=\prod_{u=i}^j\big(1-\omega_u(1-\gamma)/2\big)$. From Lemma~\ref{lem:W_T_bound} and the fact that $z_t\leq z_T$ for all $t\leq T$, it follows that
\begin{align*}
\mathbb{E}\left[\|Q^*-Q^{\pi_{T}}\|_\infty^2 + W_{T}\right]
\leq&\,
\frac{3\Pi_{K,T-1}}{(1-\gamma)^2}
+
\frac{216mnz_{T}}{(1-\gamma)^3\pi_{b,\min}^3\mu_{b,\min}}
\sum_{t=K}^{T-1}\alpha_t^2\Pi_{t+1,T-1}\\
&+
\frac{6\tau^2}{(1-\gamma)^3\omega}
\sum_{t=K}^{T-1}\omega_t^2\Pi_{t+1,T-1}.
\end{align*}
Note that $\omega_t=C_r\alpha_t$ and recall the notation $M_\mathrm{IS} = mn(1-\gamma)^{-3}\pi_{b,\min}^{-3}\mu_{b,\min}^{-1}$. Substituting this into the above gives
\begin{align}
\mathbb{E}\left[\infnorm{Q^*-Q^{\pi_T}}^2+W_T\right]
\leq E_T
:=
\frac{3\Pi_{K,T-1}}{(1-\gamma)^2}
+
\left(\frac{6\tau^2}{(1-\gamma)^3\omega}
+
\frac{216M_\mathrm{IS}z_T}{C_r^2}\right)
\sum_{t=K}^{T-1}\omega_t^2\Pi_{t+1,T-1}.
\label{eq:master}
\end{align}

The remainder of the proof follows by bounding the term $E_T$ for the specific
stepsize sequences.

\paragraph{(1) Constant stepsizes.}
When $\alpha_t = \alpha$ and $\omega_t = \omega$ for all $t$, we have
\begin{align*}
E_T
&\leq
\frac{3}{(1-\gamma)^2}
\left(1-\frac{\omega(1-\gamma)}{2}\right)^{T-K}
+
\left(\frac{6\tau^2}{(1-\gamma)^3\omega}
+
\frac{216M_\mathrm{IS}z_\omega}{C_r^2}\right)
\sum_{t=K}^{T-1}\omega^2
\left(1-\frac{\omega(1-\gamma)}{2}\right)^{T-1-t}\\
&\leq
\frac{3}{(1-\gamma)^2}
\left(1-\frac{\omega(1-\gamma)}{2}\right)^{T-K}
+
\frac{12\tau^2}{(1-\gamma)^4}
+
\frac{432M_\mathrm{IS}\omega z_\omega}{(1-\gamma)C_r^2},
\end{align*}
where the last step follows from the geometric sum
$\sum_{t=0}^\infty\big(1-\omega(1-\gamma)/2\big)^t
= {2}/{\big(\omega(1-\gamma)\big)}$. Substituting the above into
\eqref{eq:master} proves Theorem~\ref{thm:main} (1).

\paragraph{(2) Harmonic stepsizes.}
When $\alpha_t = \alpha/(t+h)$ and $\omega_t=\omega/(t+h)$, by
\cite[Lemma A.7]{chen2024lyapunov}, we have
\begin{align*}
E_T
\leq&\,
\frac{3}{(1-\gamma)^2}
\left(\frac{K+h}{T+h}\right)^{\omega(1-\gamma)/2}\\
&+
\left(\frac{6\tau^2}{(1-\gamma)^3\omega}
+
\frac{216M_\mathrm{IS}z_T}{C_r^2}\right)
\times
\begin{dcases}
\frac{8\omega^2}{\big(2-\omega(1-\gamma)\big)(T+h)^{\omega(1-\gamma)/2}},
& \omega < \frac{2}{1-\gamma},\\
\frac{\omega^2\log(T+h)}{T+h},
& \omega = \frac{2}{1-\gamma},\\
\frac{8e\omega^2}{\big(\omega(1-\gamma)-2\big)(T+h)},
& \omega > \frac{2}{1-\gamma}.
\end{dcases}
\end{align*}
Substituting the above into \eqref{eq:master} proves
Theorem~\ref{thm:main} (2).

\paragraph{(3) Polynomial stepsizes.}
When $\alpha_t = \alpha/(t+h)^\eta$ and
$\omega_t = \omega/(t+h)^\eta$ for $\eta\in (0,1)$, by
\cite[Lemma A.8]{chen2024lyapunov}, we have
\begin{align*}
E_T
\leq&\,
\frac{3}{(1-\gamma)^2}
\exp\left[
-\frac{\omega(1-\gamma)}{2(1-\eta)}
\Big((T+h)^{1-\eta}-(K+h)^{1-\eta}\Big)
\right]\\&+
\left(\frac{6\tau^2}{(1-\gamma)^3\omega}
+
\frac{216M_\mathrm{IS}z_T}{C_r^2}\right)
\frac{8\omega}{(T+h)^\eta}.
\end{align*}
Substituting the above into \eqref{eq:master} proves
Theorem~\ref{thm:main} (3).

\subsubsection{Proof of Corollary \ref{cor:complexity}}\label{app:complexity}
Let $\epsilon>0$ be sufficiently small. In order to ensure $\mathbb{E}\infnorm{Q^*-Q^{\pi_t}} < \epsilon$, it suffices to ensure $\mathbb{E}\infnorm{Q^*-Q^{\pi_t}}^2< \epsilon^2$ due to Jensen's inequality. To that end, we consider harmonic stepsizes with
$\omega = 3/(1-\gamma) > 2/(1-\gamma)$ and
\begin{align*}
\alpha
=
\frac{20\omega}{(1-\gamma)^3(1-\gamma_c)\nu_{\min}}
=
\frac{60}{(1-\gamma)^4(1-\gamma_c)\nu_{\min}},
\end{align*}
which ensures $C_r\leq\tilde{C}_r$. Further, for sufficiently large $h$, the
condition $\alpha_{t-z_t,t-1}\leq(1-\gamma_c)\pi_{b,\min}^2\nu_{\min}/200$ is also
satisfied. Therefore, by Theorem~\ref{thm:main}, for all $T \geq K$,
\begin{align*}
\mathbb{E}\infnorm{Q^*-Q^{\pi_T}}^2
\leq&\,
\frac{3}{(1-\gamma)^2}\left(\frac{K+h}{T+h}\right)^{3/2}
+
\frac{48e\tau^2\omega}{(1-\gamma)^3(T+h)} \\
&+
\underbrace{
\frac{9\times 60^2\times 216\times 8e \times mn}
{(1-\gamma)^{11}(1-\gamma_c)^2\pi_{b,\min}^3\mu_{b,\min}\nu_{\min}^2}
\frac{z_T}{T+h}
}_{D_T}.
\end{align*}
Observe that $D_T$, the third term above, is asymptotically larger than the
other two terms since $z_T$ is logarithmic in $T$. To ensure
$\mathbb{E}\infnorm{Q^*-Q^{\pi_T}}^2=\mathcal{O}(\epsilon^{2})$,
it therefore suffices to have $D_T = \mathcal{O}(\epsilon^{2})$.

We now simplify $D_T$. From the definition of $\gamma_c$ in Lemma~\ref{lem:F},
it follows that
$1-\gamma_c\geq {(1-\gamma)\pi_{b,\min}\mu_{b,\min}}/{2}$. Lemma~\ref{lem:F}
also gives $\nu_{\min}\geq (1-\gamma)\pi_{b,\min}\mu_{b,\min}/(mn)$. Further,
by geometric mixing, we have that
\begin{align*}
z_T
&\leq
\frac{\log(4/\omega_T)}{\log(1/\sigma_b)}+1  \\
&=
\frac{\log\big(4(1-\gamma)(T+h)/3\big)}{\log(1/\sigma_b)}+1  \\
&=
\mathcal{O}\left(\frac{\log T}{\log(1/\sigma_b)}\right).
\end{align*}
Substituting these bounds into the expression for $D_T$, we obtain
\begin{align*}
D_T
=
\mathcal{O}\left(
\frac{m^3n^3}
{(1-\gamma)^{15}\pi_{b,\min}^7\mu_{b,\min}^5\log(1/\sigma_b)}
\frac{\log T}{T}
\right).
\end{align*}
Denoting the coefficient of $(\log T)/T$ in the above by $\Lambda$, it suffices to
have $(\log T)/T = \mathcal{O}\left(\epsilon^2/\Lambda\right)$. This is ensured by
\begin{align*}
T
&=
\mathcal{O}\left(\frac{\Lambda}{\epsilon^2}\log\left(\frac{\Lambda}{\epsilon^2}\right)\right)  \\
&=
\mathcal{O}\left(\Lambda\frac{\log(1/\epsilon)}{\epsilon^2}\right)  \\
&=
\mathcal{O}\left(
\frac{m^3n^3}
{(1-\gamma)^{15}\pi_{b,\min}^7\mu_{b,\min}^5\log(1/\sigma_b)}
\frac{\log(1/\epsilon)}{\epsilon^2}
\right).
\end{align*}

\subsection{Proofs of All Technical Lemmas}\label{app:terms}

\subsubsection{Proof of Lemma \ref{lem:Bellman_Affine}}\label{app:bell}
    By definition of the Bellman operator, for any $(s,a)$, we have
\begin{align*}
    [\mathcal{H}_\pi Q](s,a)
    =\,& \mathcal{R}(s,a)
    +\gamma \sum_{s'\in\mathcal{S}} p(s'\mid s,a)
    \sum_{a'\in\mathcal{A}}
    \left((1-\alpha)\pi_1(a'\mid s')+\alpha\pi_2(a'\mid s')\right)Q(s',a')\\
    =\,& (1-\alpha)\left[\mathcal{R}(s,a)
    +\gamma \sum_{s'\in\mathcal{S}} p(s'\mid s,a)
    \sum_{a'\in\mathcal{A}}\pi_1(a'\mid s')Q(s',a')\right]\\
    &+\alpha\left[\mathcal{R}(s,a)
    +\gamma \sum_{s'\in\mathcal{S}} p(s'\mid s,a)
    \sum_{a'\in\mathcal{A}}\pi_2(a'\mid s')Q(s',a')\right]\\
    =\,& (1-\alpha)[\mathcal{H}_{\pi_1}Q](s,a)
    +\alpha[\mathcal{H}_{\pi_2}Q](s,a).
\end{align*}

\subsubsection{Proof of Lemma \ref{lem:F}}\label{app:F_IS}
The following lemma will be useful in proving Lemma~\ref{lem:F}; its proof is presented in Appendix~\ref{ap:pf:bounds}.
\begin{lemma}\label{bounds}
The following inequalities hold.
\begin{enumerate}[(1)]\label{lem:running_bounds}
    \item For any $Q\in\mathbb{R}^{mn}$, $y\in\mathcal{Y}, u\in\mathcal{U}$, and $\pi \in {(\Delta^{m})}^n$, we have
    \begin{align*}
        \infnorm{F_{\mathrm{IS}}(Q, y, \pi)} \leq 1 + \frac{\|Q\|_\infty}{\pi_{b,\min}},\quad
        \|F_{\mathrm{IS}}(Q, y, \pi)\|_\nu \leq 1 + \frac{\|Q\|_\nu}{\pi_{b,\min}\sqrt{\nu_{\min}}}.
    \end{align*}
    \item For any $Q\in\mathbb{R}^{mn}$ and $\pi \in {(\Delta^{m})}^n$, we have
    \begin{align*}
        \|\bar{F}(Q, \pi)\|_\nu \leq 1+\|Q\|_\nu.
    \end{align*}
    \item For any $t_1,t_2>0$ (assuming $t_1<t_2$) satisfying $\alpha_{t_1,t_2-1} \leq \pi_{b,\min}\sqrt{\nu_{\min}}/4$, we have
\begin{align*}   
    \|Q_{t_1}-Q_{t_2}\|_\nu \leq \,&4\alpha_{t_1,t_2-1} + \frac{4\alpha_{t_1,t_2-1}\|Q_{t_2}\|_\nu}{\pi_{b,\min}\sqrt{\nu_{\min}}},\\
    \|Q_{t_1}-Q_{t_2}\|_\infty \leq \,&4\alpha_{t_1,t_2-1} + \frac{4\alpha_{t_1,t_2-1}\|Q_{t_2}\|_\infty}{\pi_{b,\min}},\\
    \|Q_{t_1}-Q_{t_2}\|_\nu\leq \,&\frac{4\alpha_{t_1,t_2-1}}{\pi_{b,\min}\sqrt{\nu_{\min}}}\|Q_{t_2}-Q^{\pi_{t_2}}\|_\nu + \frac{6\alpha_{t_1,t_2-1}}{(1-\gamma)\pi_{b,\min}\sqrt{\nu_{\min}}},\\
    \|Q_{t_1}-Q_{t_2}\|_\infty\leq \,&\frac{4\alpha_{t_1,t_2-1}}{\pi_{b,\min}}\|Q_{t_2}-Q^{\pi_{t_2}}\|_\infty + \frac{6\alpha_{t_1,t_2-1}}{(1-\gamma)\pi_{b,\min}},\\
    \|Q_{t_1}\|_\nu \leq \,&\frac{3}{1-\gamma} + 2\|Q_{t_2}-Q^{\pi_{t_2}}\|_\nu,\\
    \|Q_{t_1}\|_\infty \leq \,&\frac{3}{1-\gamma} + 2\|Q_{t_2}-Q^{\pi_{t_2}}\|_\infty,
\end{align*}
    \item $\max_{s\in\mathcal{S}}\mathrm{d_{TV}}\big(\pi_{t_1}(s),\pi_{t_2}(s)\big) \leq \omega_{t_1,t_2-1}$.
    \item For any $t_1,t_2>0$ (assuming $t_1<t_2$), we have $\infnorm{Q^{\pi_{t_1}}-Q^{\pi_{t_2}}} \leq \frac{2\omega_{t_1,t_2-1}}{(1-\gamma)^2}$.
\end{enumerate}
\end{lemma}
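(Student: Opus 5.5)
We prove the five items of Lemma~\ref{bounds} in order, since later items reuse earlier ones.

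\textbf{Item (1).} The entry of $F_{\mathrm{IS}}(Q,y,\pi)$ at $(s_1,a_1)$ is bounded by $|\mathcal{R}|+\gamma\,\frac{\pi(a_2|s_2)}{\pi_b(a_2|s_2)}|Q(s_2,a_2)|\le 1+\|Q\|_\infty/\pi_{b,\min}$, since $\pi(a_2|s_2)\le 1$. Every other entry equals $Q(s,a)$ and is bounded by $\|Q\|_\infty$. This gives the $\ell_\infty$ bound.

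For the $\nu$-norm, write $F_{\mathrm{IS}}(Q,y,\pi)=Q+e_{(s_1,a_1)}\bigl(c-Q(s_1,a_1)\bigr)$ with $|c|\le 1+\|Q\|_\infty/\pi_{b,\min}$. Then bound the $\nu$-norm of this vector directly, using $\|Q\|_\infty\le\|Q\|_\nu/\sqrt{\nu_{\min}}$ and $\sqrt{\nu(s_1,a_1)}\le1$. There is some slack in the constants here; if needed, I would simply pass through $\|\cdot\|_\nu\le\|\cdot\|_\infty$.

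\textbf{Item (2).} By Lemma~\ref{lem:F}(1), $\|\bar F(Q,\pi)-Q^\pi\|_\nu\le\gamma_c\|Q-Q^\pi\|_\nu$. Hence
\[
\|\bar F(Q,\pi)\|_\nu\le \gamma_c\|Q\|_\nu+(1+\gamma_c)\|Q^\pi\|_\nu .
\]
This is not quite $1+\|Q\|_\nu$. So instead I compute $\bar F$ explicitly: $\bar F(Q,\pi)=(I-D)Q+D(\mathcal{R}+\gamma P_\pi Q)$, with $D=\mathrm{diag}(\mu_b\pi_b)$. Then argue the bound entrywise as a convex combination, using $\|\mathcal{R}\|\le1$ and the fact that $\nu$ is chosen to make $P_\pi$ nonexpansive in the weighted sense. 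This is the delicate part, and it relies on how $\nu$ was constructed in Lemma~\ref{lem:F}.

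\textbf{Item (3).} This is the standard argument of \cite[Lemma A.2]{chen2024lyapunov}. Each critic step satisfies $\|Q_{k+1}-Q_k\|\le\alpha_k\bigl(1+\|Q_k\|/\pi_{b,\min}\bigr)$, up to a $1/\sqrt{\nu_{\min}}$ factor in the $\nu$-norm, by item (1). Write $\|Q_k\|\le\|Q_{t_2}\|+\|Q_k-Q_{t_2}\|$ and sum from $t_1$ to $t_2-1$. The smallness condition $\alpha_{t_1,t_2-1}\le\pi_{b,\min}\sqrt{\nu_{\min}}/4$ makes the resulting Grönwall-type factor at most $2$. This yields the first two inequalities.

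The next two follow by substituting $\|Q_{t_2}\|\le\|Q_{t_2}-Q^{\pi_{t_2}}\|+1/(1-\gamma)$. The last two follow from the triangle inequality $\|Q_{t_1}\|\le\|Q_{t_2}\|+\|Q_{t_1}-Q_{t_2}\|$, using the coefficient bound $4\alpha/(\pi_{b,\min}\sqrt{\nu_{\min}})\le1$.

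\textbf{Item (4).} From the actor update, $\pi_{k+1}-\pi_k=\omega_k(\tilde\pi_k-\pi_k)$, and the TV distance of this difference is at most $\omega_k$ at every state. Summing over $k$ gives the bound.

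\textbf{Item (5).} Use the resolvent identity
\[
Q^{\pi_1}-Q^{\pi_2}=\gamma(I-\gamma P_{\pi_1})^{-1}(P_{\pi_1}-P_{\pi_2})Q^{\pi_2}.
\]
Here $\|(I-\gamma P_{\pi_1})^{-1}\|_\infty\le 1/(1-\gamma)$ and $\|Q^{\pi_2}\|_\infty\le1/(1-\gamma)$. The remaining factor satisfies $\|(P_{\pi_1}-P_{\pi_2})Q\|_\infty\le 2\max_s\mathrm{d_{TV}}\,\|Q\|_\infty$. Combining these with item (4) gives $2\omega_{t_1,t_2-1}/(1-\gamma)^2$.

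I expect item (2), together with the constant bookkeeping in (1), to be the main obstacle; items (3)–(5) are routine.
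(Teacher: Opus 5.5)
\textbf{The gap is item (2): your proposal never actually proves it.} Anchoring the contraction at the fixed point $Q^\pi$ leaves the term $(1+\gamma_c)\|Q^\pi\|_\nu$, which can be of order $1/(1-\gamma)$. Your fallback rests on ``$\nu$ makes $P_\pi$ nonexpansive.'' Lemma~\ref{lem:F} does not provide this; it only says that $\bar F(\cdot,\pi)$ is a $\gamma_c$-contraction in $\|\cdot\|_\nu$. The property is also false for a weight vector that must serve every $\pi$. Take one state, two actions, and $\pi$ choosing $a_1$ deterministically. The state--action transition matrix then sends $e_{(s,a_1)}$ to $\mathbf{1}$, whose $\nu$-norm is $1>\sqrt{\nu(s,a_1)}=\|e_{(s,a_1)}\|_\nu$.

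The missing step is to anchor the contraction at $0$ rather than at $Q^\pi$. The vector $\bar F(0,\pi)=D\mathcal{R}$ has entries $\mu_b(s)\pi_b(a|s)\mathcal{R}(s,a)\in[0,1]$, and $\nu$ is a probability vector, so $\|\cdot\|_\nu\le\|\cdot\|_\infty$. Hence
\[
\|\bar F(Q,\pi)\|_\nu\le\|\bar F(0,\pi)\|_\nu+\|\bar F(Q,\pi)-\bar F(0,\pi)\|_\nu\le\|D\mathcal{R}\|_\infty+\gamma_c\|Q\|_\nu\le 1+\|Q\|_\nu .
\]
This is exactly the paper's argument, and it uses nothing about how $\nu$ was built.

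\textbf{The remaining items follow the paper's route.}
\begin{itemize}
\item In (1), your fallback through $\|\cdot\|_\nu\le\|\cdot\|_\infty\le\|\cdot\|_\nu/\sqrt{\nu_{\min}}$ is what the paper does.
\item (3) is the same growth argument from the cited Lyapunov paper, followed by the triangle inequality.
\item (4) is the same telescoping.
\item Your resolvent identity in (5) is the matrix form of the paper's rearrangement of $\|Q^{\pi_{t_1}}-Q^{\pi_{t_2}}\|_\infty\le\gamma\|Q^{\pi_{t_1}}-Q^{\pi_{t_2}}\|_\infty+\|\mathcal{H}_{\pi_{t_1}}Q^{\pi_{t_2}}-\mathcal{H}_{\pi_{t_2}}Q^{\pi_{t_2}}\|_\infty$.
\end{itemize}

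One correction in (3). The increment is $\alpha_k\bigl(F_{\mathrm{IS}}(Q_k,Y_k,\pi_{k+1})-Q_k\bigr)$, so the per-step bound is $\alpha_k(1+2\|Q_k\|_\infty/\pi_{b,\min})$, not $\alpha_k(1+\|Q_k\|/\pi_{b,\min})$. Take $L=2/(\pi_{b,\min}\sqrt{\nu_{\min}})$; the hypothesis then gives $L\alpha_{t_1,t_2-1}\le 1/2$. The backward recursion $(1-L\alpha_k)(\|Q_k\|_\nu+1/L)\le\|Q_{k+1}\|_\nu+1/L$ then yields the factor $2$, with constants within those stated.
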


We now return to the proof of Lemma~\ref{lem:F}.
\begin{enumerate}[(1)]
    \item By the tower property of conditional expectations and the Markov property, we have
    \begin{align}\label{eq:expression:barF}
        \bar{F}(Q,\pi)=(I-D)Q+D\mathcal{H}_\pi(Q),
    \end{align}
    where $D\in\mathbb{R}^{mn\times mn}$ is the diagonal matrix with diagonal entries $\{\mu_b(s)\pi_b(a\mid s)\}_{(s,a)\in\mathcal{S}\times \mathcal{A}}$. The existence of the weighted $\ell_2$ norm $\|\cdot\|_\nu$ and the contraction property then follow from \cite[Theorem 2.1]{chen2021finite}. In view of \eqref{eq:expression:barF}, $Q^\pi$ is clearly a fixed point of $\bar{F}(\cdot,\pi)$. The fact that it is the unique fixed point follows from $\bar{F}(\cdot,\pi)$ being a contraction mapping and the Banach fixed-point theorem \cite{banach1922operations}.

    \item For any $Q,Q'\in\mathbb{R}^{mn}$, $y=(s_1,a_1,s_2,a_2)\in\mathcal{Y}$, and $\pi$, we have for any $(s,a)$ that
    \begin{align*}
        |[F_{\mathrm{IS}}(Q,y,\pi)](s,a)-[F_{\mathrm{IS}}(Q',y,\pi)](s,a)|
        \leq \,&(1-\mathds{1}_{\{(s,a)=(s_1,a_1)\}})|Q(s,a)-Q'(s,a)|\\
        &+\frac{\mathds{1}_{\{(s,a)=(s_1,a_1)\}}}{\pi_{b,\min}}|Q(s_2,a_2)-Q'(s_2,a_2)|\\
        \leq \,&\frac{1}{\pi_{b,\min}}\|Q-Q'\|_\infty.
    \end{align*}
    Therefore, we have
    \begin{align*}
        \|F_{\mathrm{IS}}(Q,y,\pi)-F_{\mathrm{IS}}(Q',y,\pi)\|_\nu
        \leq\,& \|F_{\mathrm{IS}}(Q,y,\pi)-F_{\mathrm{IS}}(Q',y,\pi)\|_\infty\\
        \leq \,&\frac{\|Q-Q'\|_\infty}{\pi_{b,\min}}\\
        \leq\,& \frac{\|Q-Q'\|_\nu}{\pi_{b,\min}\sqrt{\nu_{\min}}}.
    \end{align*}

    \item For any two policies $\pi$ and $\pi'$, we have the following where $\widehat{P}_\pi$ and $\widehat{P}_{\pi'}$ denote their state-action transition probability matrices.
    \begin{align*}
        \|\bar{F}(Q,\pi)-\bar{F}(Q,\pi')\|_\infty
        =\,&\|D(\mathcal{H}_\pi(Q)-\mathcal{H}_{\pi'}(Q))\|_\infty\\
        =\,&\gamma \|D(\widehat{P}_\pi-\widehat{P}_{\pi'})Q\|_\infty\\
        \leq \,&\gamma \|D\|_\infty\|\widehat{P}_\pi-\widehat{P}_{\pi'}\|_\infty\|Q\|_\infty\\
        \leq \,&\|\widehat{P}_\pi-\widehat{P}_{\pi'}\|_\infty\|Q\|_\infty.
    \end{align*}
    Since
    \begin{align*}
        \|\widehat{P}_\pi-\widehat{P}_{\pi'}\|_\infty
        \leq \max_{s,a}\sum_{s'} p(s'\mid s,a)\|\pi(s')-\pi'(s')\|_1
        \leq 2\max_{s'}\mathrm{d_{TV}}(\pi(s'),\pi'(s')),
    \end{align*}
    we further obtain
    \begin{align*}
        \|\bar{F}(Q,\pi)-\bar{F}(Q,\pi')\|_\infty
        \leq 2\|Q\|_\infty\max_{s'}\mathrm{d_{TV}}(\pi(s'),\pi'(s')).
    \end{align*}
    The result then follows by combining the above inequality with Lemma \ref{bounds} (4). 

    \item For any $Q\in\mathbb{R}^{mn}$ and $y=(s_1, a_1, s_2, a_2)\in\mathcal{Y}$, we have 
    \begin{align*}
    &[F_{\mathrm{IS}}(Q, y, \pi_{t_1})-F_{\mathrm{IS}}(Q, y, \pi_{t_2})](s,a) \\&= \frac{\mathds{1}_{\{(s,a)=(s_1, a_1)\}}\gamma}{\pi_{b}(a_2\mid s_2)}\big(\pi_{t_1}(a_2\mid s_2)-\pi_{t_2}(a_2\mid s_2)\big)Q(s_2, a_2).
    \end{align*}
    Therefore, we have
    \begin{align*}
        \|F_{\mathrm{IS}}(Q, y, \pi_{t_1})-F_{\mathrm{IS}}(Q, y, \pi_{t_2})\|_\nu &\leq \infnorm{F_{\mathrm{IS}}(Q, y, \pi_{t_1})-F_{\mathrm{IS}}(Q, y, \pi_{t_2})} \\
        &\leq \frac{\gamma}{\pi_{b,\min}}\infnorm{\pi_{t_1}-\pi_{t_2}}\infnorm{Q}\\
        &\leq \frac{2\max_{s}\mathrm{d_{TV}}(\pi_{t_1}(s), \pi_{t_2}(s))}{\pi_{b,\min}}\|Q\|_\infty\\
        &\leq\frac{2\max_{s}\mathrm{d_{TV}}(\pi_{t_1}(s), \pi_{t_2}(s))}{\pi_{b,\min}\sqrt{\nu_{\min}}}\|Q\|_\nu.
    \end{align*}
    The result then follows by combining the above with Lemma \ref{bounds} (4).
\end{enumerate}

\subsubsection{Proof of Lemma \ref{lem:noise}}\label{app:noise}
For any $t\geq K$, we have
\begin{align}
    &\left\langle Q_t - Q^{\pi_t}, F_\mathrm{IS}(Q_t, Y_t, \pi_{t+1}) - \bar{F}(Q_t, \pi_t) \right\rangle_\nu \nonumber\\
    =& \underbrace{\left\langle Q_{t-z_t} - Q^{\pi_{t-z_t}}, F_\mathrm{IS}(Q_{t-z_t}, Y_t, \pi_{t-z_t}) - \bar{F}(Q_{t-z_t}, \pi_{t-z_t}) \right\rangle_\nu}_{T_{21}}\nonumber\\
    &+ \underbrace{\left\langle Q_{t} - Q^{\pi_{t}}, F_\mathrm{IS}(Q_t, Y_t, \pi_{t+1}) - F_\mathrm{IS}(Q_{t-z_t}, Y_t, \pi_{t+1}) \right\rangle_\nu}_{T_{22}}\nonumber\\
    &+\underbrace{\left\langle Q_{t} - Q^{\pi_{t}}, \bar{F}(Q_{t-z_t}, \pi_{t-z_t}) - \bar{F}(Q_{t}, \pi_{t-z_t}) \right\rangle_\nu}_{T_{23}}\nonumber\\
    &+ \underbrace{\left\langle Q_{t} - Q^{\pi_{t}}, F_\mathrm{IS}(Q_{t-z_t}, Y_t ,\pi_{t+1}) - F_\mathrm{IS}(Q_{t-z_t}, Y_t, \pi_{t-z_t}) \right\rangle_\nu}_{T_{24}}\nonumber\\
    &+\underbrace{\left\langle Q_{t} - Q^{\pi_{t}}, \bar{F}(Q_{t}, \pi_{t-z_t}) - \bar{F}(Q_{t}, \pi_t) \right\rangle_\nu}_{T_{25}}\nonumber\\
    &+\underbrace{\left\langle (Q_{t} - Q^\pi_{t})-(Q_{t-z_t} - Q^{\pi_{t-z_t}}), F_\mathrm{IS}(Q_{t-z_t}, Y_t, \pi_{t+1}) - \bar{F}(Q_{t-z_t}, \pi_{t-z_t})
    \right\rangle_\nu}_{T_{26}}.\label{eq:T2_decomposition}
\end{align}

We next bound each term on the right-hand side of the previous inequality.

\paragraph{The term $T_{21}$.}
    Let $\mathcal{F}_t$ be the sigma-algebra generated by $\{(S_i,A_i)\}_{0\leq i\leq t}$. 
Since $Q_{t-z_t}$ and $\pi_{t-z_t}$ are $\mathcal{F}_{t-z_t}$-measurable, we apply the tower property to obtain
\begin{align}
\mathbb{E}T_{21}
=\,&
\mathbb{E}\left[\langle
Q_{t-z_t}-Q^{\pi_{t-z_t}},
F_{\mathrm{IS}}(Q_{t-z_t}, Y_t, \pi_{t-z_t})-\bar F_{\mathrm{IS}}(Q_{t-z_t}, \pi_{t-z_t})
\rangle_\nu\right] \nonumber\\
=\,&
\mathbb{E}\left[\langle
Q_{t-z_t}-Q^{\pi_{t-z_t}},
\mathbb{E}[F_{\mathrm{IS}}(Q_{t-z_t}, Y_t, \pi_{t-z_t})|\mathcal{F}_{t-z_t}]
-\bar F_{\mathrm{IS}}(Q_{t-z_t}, \pi_{t-z_t})
\rangle_\nu\right]\nonumber\\
\leq \,&\mathbb{E}\left[\|Q_{t-z_t}-Q^{\pi_{t-z_t}}\|_\nu
\|\mathbb{E}[F_{\mathrm{IS}}(Q_{t-z_t}, Y_t, \pi_{t-z_t})|\mathcal{F}_{t-z_t}]
-\bar F_{\mathrm{IS}}(Q_{t-z_t}, \pi_{t-z_t})\|_\nu\right]\label{eq:T21_decomposition}
\end{align}
To bound $\|Q_{t-z_t}-Q^{\pi_{t-z_t}}\|_\nu$, using Lemma \ref{bounds} Items (3) and (5), we have
\begin{align} 
\|Q_{t-z_t}-Q^{\pi_{t-z_t}}\|_\nu
\le\,&
\|Q_t-Q^{\pi_t}\|_\nu
+
\|Q_t-Q_{t-z_t}\|_\nu
+
\|Q^{\pi_t}-Q^{\pi_{t-z_t}}\|_\nu\nonumber\\
\leq \,&\|Q_t-Q^{\pi_t}\|_\nu+\frac{4\alpha_{t-z_t,t-1}}{\pi_{b,\min}\sqrt{\nu_{\min}}}\|Q_{t}-Q^{\pi_{t}}\|_\nu + \frac{6\alpha_{t-z_t,t-1}}{(1-\gamma)\pi_{b,\min}\sqrt{\nu_{\min}}}+\frac{2\omega_{t-z_t,t-1}}{(1-\gamma)^2}\nonumber\\
\leq \,&2\|Q_t-Q^{\pi_t}\|_\nu
+
\frac{2}{1-\gamma},\label{eq:T21:term1}
\end{align}
where the last inequality follows from $\alpha_{t-z_t,t-1}\leq \pi_{b,\min}^2\nu_{\min}(1-\gamma_c)/200\leq \pi_{b,\min}\sqrt{\nu_{\min}}/4$, and $\omega_{t-z_t,t-1}\leq\alpha_{t-z_t,t-1}\leq(1-\gamma)/2$. 

As for the term $\|\mathbb{E}[F_{\mathrm{IS}}(Q_{t-z_t}, Y_t, \pi_{t-z_t})|\mathcal{F}_{t-z_t}]
-\bar F_{\mathrm{IS}}(Q_{t-z_t}, \pi_{t-z_t})\|_\nu$ on the right-hand side of \eqref{eq:T21_decomposition}, we have
\begin{align*}
    &\|\mathbb{E}[F_{\mathrm{IS}}(Q_{t-z_t}, Y_t, \pi_{t-z_t})|\mathcal{F}_{t-z_t}]
-\bar F_{\mathrm{IS}}(Q_{t-z_t}, \pi_{t-z_t})\|_\nu\\
\leq \,&\max_{s\in\mathcal{S}}\sum_{y=(s_1,a_1,s_2,a_2)\in\mathcal{Y}}\left|P^{z_t-1}(s,s_1)-\mu_b(s_1)\right|\pi_b(a_1|s_1)p(s_2\mid s_1,a_1)\pi_b(a_2|s_2)\|F_{\mathrm{IS}}(Q_{t-z_t},y,\pi_{t-z_t})\|_\nu\\
\leq \,&\left(1+\frac{\|Q_{t-z_t}\|_\nu}{\pi_{b,\min}\sqrt{\nu_{\min}}}\right)\max_{s\in\mathcal{S}}\sum_{s_1\in\mathcal{S}}\left|P^{z_t-1}(s,s_1)-\mu_b(s_1)\right|\tag{Lemma \ref{bounds} (1)}\\
\leq \,&\left(1+\frac{\|Q_{t-z_t}\|_\nu}{\pi_{b,\min}\sqrt{\nu_{\min}}}\right)\omega_t.
\end{align*}
Here, the last inequality follows from the definition of $z_t$, where we recall that $z_t = \min\{k\geq 1\mid \max_{s\in\mathcal{S}}\mathrm{d_{TV}}\big(P_{\pi_b}^{k-1}(s,\cdot), \mu_b(\cdot)\big)\leq\omega_t/2\}$, which exists and is finite due to geometric mixing.

To proceed, observe that
\begin{align*}
    \|Q_{t-z_t}\|_\nu\leq \,&\|Q_{t-z_t}-Q_t\|_\nu+\|Q_t-Q^{\pi_t}\|_\nu+\|Q^{\pi_t}\|_\nu\\
    \leq \,&\frac{4\alpha_{t-z_t,t-1}}{\pi_{b,\min}\sqrt{\nu_{\min}}}\|Q_{t}-Q^{\pi_{t}}\|_\nu + \frac{6\alpha_{t-z_t,t-1}}{(1-\gamma)\pi_{b,\min}\sqrt{\nu_{\min}}}+\|Q_t-Q^{\pi_t}\|_\nu+\frac{1}{1-\gamma}\tag{Lemma \ref{bounds} (3)}\\
    \leq \,&2\|Q_t-Q^{\pi_t}\|_\nu+\frac{5}{2(1-\gamma)},
\end{align*}
where the last inequality follows from $\alpha_{t-z_t,t-1}\leq \pi_{b,\min}\sqrt{\nu_{\min}}/4$. Therefore, we have
\begin{align}
    \|\mathbb{E}[F_{\mathrm{IS}}(Q_{t-z_t}, Y_t, \pi_{t-z_t})|\mathcal{F}_{t-z_t}]
-\bar F_{\mathrm{IS}}(Q_{t-z_t}, \pi_{t-z_t})\|_\nu\leq \frac{\omega_t}{\pi_{b,\min}\sqrt{\nu_{\min}}}\left(2\|Q_t-Q^{\pi_t}\|_\nu+\frac{3}{1-\gamma}\right).\label{eq:T21:term2}
\end{align}
Finally, substituting \eqref{eq:T21:term1} and \eqref{eq:T21:term2} in \eqref{eq:T21_decomposition}, we obtain
\begin{align}
    \mathbb{E}T_{21}\leq \,&\frac{\omega_t}{\pi_{b,\min}\sqrt{\nu_{\min}}}\mathbb{E}\left[\left(2\|Q_t-Q^{\pi_t}\|_\nu
+
\frac{3}{1-\gamma}\right)^2\right]\nonumber\\
\leq \,&\frac{2\omega_t}{\pi_{b,\min}\sqrt{\nu_{\min}}}\mathbb{E}\left[4\|Q_t-Q^{\pi_t}\|_\nu^2
+
\frac{9}{(1-\gamma)^2}\right]\tag{$(a+b)^2\leq 2(a^2+b^2)$ for any $a,b\in\mathbb{R}$}\nonumber\\
= \,&\frac{16\omega_t}{\pi_{b,\min}\sqrt{\nu_{\min}}}\mathbb{E}W_t
+\frac{18\omega_t}{\pi_{b,\min}\sqrt{\nu_{\min}}(1-\gamma)^2}.\label{eq:T21_bound}
\end{align}

\paragraph{The term $T_{22}+T_{23}$.}
By the Cauchy-Schwarz inequality, we have
\begin{align}
T_{22}+T_{23}
\leq \,&
\|Q_t-Q^{\pi_t}\|_\nu
\|\bar{F}(Q_{t-z_t}, \pi_{t-z_t})-\bar{F}(Q_t, \pi_{t-z_t})\|_\nu
\nonumber\\
&+\|Q_t-Q^{\pi_t}\|_\nu
\|F_{\mathrm{IS}}(Q_t, Y_t, \pi_{t+1})-F_{\mathrm{IS}}(Q_{t-z_t}, Y_t, \pi_{t+1})\|_\nu\label{eq:T22+T23}
\end{align}
Since $\bar{F}(\cdot,\pi_{t-z_t})$ is a contraction mapping with respect to $\|\cdot\|_\nu$ (cf. Lemma \ref{lem:F} (1)), we have
\begin{align*}
    \|\bar{F}(Q_{t-z_t}, \pi_{t-z_t})-\bar{F}(Q_t, \pi_{t-z_t})\|_\nu\leq \|Q_t-Q_{t-z_t}\|_\nu.
\end{align*}
Since $F_{\mathrm{IS}}(\cdot,y,\pi)$ is Lipschitz continuous with respect to $\|\cdot\|_\nu$ (cf. Lemma \ref{lem:F} (2)), we have 
\begin{align*}
    \|F_{\mathrm{IS}}(Q_t, Y_t, \pi_{t+1})-F_{\mathrm{IS}}(Q_{t-z_t}, Y_t, \pi_{t+1})\|_\nu\leq \,&\frac{1}{\pi_{b,\min}\sqrt{\nu_{\min}}}
\|Q_t-Q_{t-z_t}\|_\nu.
\end{align*}
Substituting the previous two inequalities in \eqref{eq:T22+T23}, we obtain
\begin{align}
    T_{22}+T_{23}\leq \,&\frac{2}{\pi_{b,\min}\sqrt{\nu_{\min}}}\|Q_t-Q^{\pi_t}\|_\nu
\|Q_t-Q_{t-z_t}\|_\nu\nonumber\\
\leq \,&\frac{4\alpha_{t-z_t,t-1}}{\pi_{b,\min}^2\nu_{\min}}\|Q_t-Q^{\pi_t}\|_\nu\left(2\|Q_{t}-Q^{\pi_{t}}\|_\nu + \frac{3}{1-\gamma}\right)\tag{Lemma \ref{bounds} (3)}\nonumber\\
= \,&\frac{4\alpha_{t-z_t,t-1}}{\pi_{b,\min}^2\nu_{\min}}\left(2\|Q_{t}-Q^{\pi_{t}}\|_\nu^2 + \frac{3}{1-\gamma}\|Q_t-Q^{\pi_t}\|_\nu\right)\nonumber\\
\leq \,&\frac{4\alpha_{t-z_t,t-1}}{\pi_{b,\min}^2\nu_{\min}}\left(2\|Q_{t}-Q^{\pi_{t}}\|_\nu^2 +\|Q_t-Q^{\pi_t}\|_\nu^2+\frac{9}{4(1-\gamma)^2}\right)\tag{The AM-GM inequality}\nonumber\\
=\,&\frac{24\alpha_{t-z_t,t-1}}{\pi_{b,\min}^2\nu_{\min}}W_t+\frac{9\alpha_{t-z_t,t-1}}{\pi_{b,\min}^2\nu_{\min}(1-\gamma)^2}.\label{eq:T2223_bound}
\end{align}

\paragraph{The term $T_{24}+T_{25}$.}
By the Cauchy-Schwarz inequality, we have
\begin{align}
    T_{24}+T_{25}\leq \,&\|Q_t-Q^{\pi_t}\|_\nu
\|\bar{F}(Q_t, \pi_{t-z_t}) - \bar{F}(Q_t, \pi_t)\|_\nu\nonumber\\
&+\|Q_t-Q^{\pi_t}\|_\nu\|{F}(Q_{t-z_t}, Y_t, \pi_{t+1}) - {F}(Q_{t-z_t}, Y_t, \pi_{t-z_t}) \|_\nu\label{eq:T2425:decomposition}
\end{align}
For the term $\|\bar{F}(Q_t, \pi_{t-z_t}) - \bar{F}(Q_t, \pi_t)\|_\nu$ on the right-hand side of \eqref{eq:T2425:decomposition}, we have by Lemma \ref{lem:F} (3) that
\begin{align*}
    \|\bar{F}(Q_t, \pi_{t-z_t}) - \bar{F}(Q_t, \pi_t)\|_\nu\leq\,&\|\bar{F}(Q_t, \pi_{t-z_t}) - \bar{F}(Q_t, \pi_t)\|_\infty\\
    \leq \,&2\omega_{t-z_t,t-1}\|Q_t\|_\infty\\
    \leq \,&2\omega_{t-z_t,t-1}\left(\|Q_t-Q^{\pi_t}\|_\infty+\|Q^{\pi_t}\|_\infty\right)\\
    \leq \,&2\omega_{t-z_t,t-1}\left(\frac{1}{\sqrt{\nu_{\min}}}\|Q_t-Q^{\pi_t}\|_\nu+\frac{1}{1-\gamma}\right)
\end{align*}
For the term $\|{F}(Q_{t-z_t}, Y_t, \pi_{t+1}) - {F}(Q_{t-z_t}, Y_t, \pi_{t-z_t}) \|_\nu$ on the right-hand side of \eqref{eq:T2425:decomposition}, we have by Lemma \ref{lem:F} (4) and Lemma \ref{bounds} (3) that
\begin{align*}
    \|{F}(Q_{t-z_t}, Y_t, \pi_{t+1}) - {F}(Q_{t-z_t}, Y_t, \pi_{t-z_t}) \|_\nu\leq \,& \|{F}(Q_{t-z_t}, Y_t, \pi_{t+1}) - {F}(Q_{t-z_t}, Y_t, \pi_{t-z_t}) \|_\infty\\
    \leq&\, \frac{2\omega_{t-z_t, t}}{\pi_{b,\min}}\|Q_{t-z_t}\|_\infty\\
    \leq \,&\frac{2\omega_{t-z_t, t}}{\pi_{b,\min}}\left(2\|Q_t-Q^{\pi_t}\|_\infty + \frac{3}{1-\gamma}\right)\\
    \leq \,&\frac{2\omega_{t-z_t, t}}{\pi_{b,\min}}\left(\frac{2\|Q_t-Q^{\pi_t}\|_\nu}{\sqrt{\nu_{\min}}} + \frac{3}{1-\gamma}\right).
\end{align*}
Substituting the previous two inequalities in \eqref{eq:T2425:decomposition}, we obtain
\begin{align}
    T_{24}+T_{25}\leq \,&\frac{2\omega_{t-z_t, t}}{\pi_{b,\min}}\|Q_t-Q^{\pi_t}\|_\nu\left[\frac{3}{\sqrt{\nu_{\min}}}\|Q_t-Q^{\pi_t}\|_\nu+\frac{4}{1-\gamma}\right]\nonumber\\
    =\,&\frac{2\omega_{t-z_t, t}}{\pi_{b,\min}}\left[\frac{3}{\sqrt{\nu_{\min}}}\|Q_t-Q^{\pi_t}\|_\nu^2+\frac{4}{1-\gamma}\|Q_t-Q^{\pi_t}\|_\nu\right]\nonumber\\
    \leq \,&\frac{2\omega_{t-z_t, t}}{\pi_{b,\min}}\left[\frac{3}{\sqrt{\nu_{\min}}}\|Q_t-Q^{\pi_t}\|_\nu^2+\frac{1}{\sqrt{\nu_{\min}}}\|Q_t-Q^{\pi_t}\|_\nu^2+\frac{4\sqrt{\nu_{\min}}}{(1-\gamma)^2}\right]\tag{The AM-GM inequality}\nonumber\\
    \leq \,&\frac{16\omega_{t-z_t, t}}{\pi_{b,\min}\sqrt{\nu_{\min}}}W_t+\frac{8\omega_{t-z_t, t}}{\pi_{b,\min}(1-\gamma)^2}.\label{eq:T2425_bound}
\end{align}

\paragraph{The term $T_{26}$.}
By the Cauchy–Schwarz inequality, we have
\begin{align}\label{eq:T25_breakup}
    T_{26}
\le\,&
\left\|
(Q_t - Q^{\pi_t}) - (Q_{t-z_t}-Q^{\pi_{t-z_t}})
\right\|_\nu
\;
\left\|
F_{\mathrm{IS}}(Q_{t-z_t}, Y_t, \pi_{t+1}) - \bar F_{\mathrm{IS}}(Q_{t-z_t}, \pi_{t-z_t})
\right\|_\nu\nonumber\\
\leq \,&\big(\|
Q_t - Q_{t-z_t}\|_\nu+\|Q^{\pi_t}-Q^{\pi_{t-z_t}})
\|_\nu\big)
\;
\left\|F_{\mathrm{IS}}(Q_{t-z_t}, Y_t, \pi_{t+1}) - \bar F_{\mathrm{IS}}(Q_{t-z_t}, \pi_{t-z_t})\right\|_\nu
\end{align}
For the first term on the right-hand side of \eqref{eq:T25_breakup}, we have by Lemma \ref{bounds} (3) and (5) that
\begin{align}
\|Q_t-Q_{t-z_t}\|_\nu + \|Q^{\pi_t}-Q^{\pi_{t-z_t}}\|_\nu 
\le\,&
\frac{4\alpha_{t-z_t,t-1}}{\pi_{b,\min}\sqrt{\nu_{\min}}}
\|Q_t-Q^{\pi_t}\|_\nu
+
\frac{6\alpha_{t-z_t,t-1}}{(1-\gamma)\pi_{b,\min}\sqrt{\nu_{\min}}}
+
\frac{2\omega_{t-z_t,t-1}}{(1-\gamma)^2}.\label{eq:T25_first_factor}
\end{align}
For the second term on the right-hand side of \eqref{eq:T25_breakup}, we have by Lemma \ref{bounds} (1) and (2) that
\begin{align}
\|F_{\mathrm{IS}}(Q_{t-z_t}, Y_t, \pi_{t+1}) - \bar F_{\mathrm{IS}}(Q_{t-z_t}, \pi_{t-z_t})\|_\nu &\leq \|F_{\mathrm{IS}}(Q_{t-z_t}, Y_t, \pi_{t+1})\|_\nu+\|\bar{F}(Q_{t-z_t}, \pi_{t-z_t})\|_\nu \nonumber\\
&\le 2 + \frac{\|Q_{t-z_t}\|_\nu}{\pi_{b,\min}\sqrt{\nu_{\min}}} + \|Q_{t-z_t}\|_\nu \nonumber\\
&\le 2 + \frac{3\|Q_{t-z_t}\|_\nu}{2\pi_{b,\min}\sqrt{\nu_{\min}}}\tag{$\pi_{b,\min}\leq 1/2$}\nonumber\\
&\le 2 + \frac{3}{2\pi_{b,\min}\sqrt{\nu_{\min}}}\left(\frac{3}{1-\gamma}+2\|Q_t-Q^{\pi_t}\|_\nu\right)\nonumber\\
&\leq \frac{3\|Q_t-Q^{\pi_t}\|_\nu}{\pi_{b,\min}\sqrt{\nu_{\min}}} + \frac{5}{(1-\gamma)\pi_{b,\min}\sqrt{\nu_{\min}}}. \label{T25_second_factor}
\end{align}
Substituting the bounds \eqref{eq:T25_first_factor} and \eqref{T25_second_factor} in \eqref{eq:T25_breakup},  since $\omega_t\leq \alpha_t(1-\gamma)$ and $\pi_{b,\min}\leq 1/2$, we have
\begin{align}
    T_{26}\leq \,&\left(\frac{4\alpha_{t-z_t,t-1}}{\pi_{b,\min}\sqrt{\nu_{\min}}}
\|Q_t-Q^{\pi_t}\|_\nu
+
\frac{6\alpha_{t-z_t,t-1}}{(1-\gamma)\pi_{b,\min}\sqrt{\nu_{\min}}}
+
\frac{2\omega_{t-z_t,t-1}}{(1-\gamma)^2}\right)\nonumber\\
&\times \left(\frac{3\|Q_t-Q^{\pi_t}\|_\nu}{\pi_{b,\min}\sqrt{\nu_{\min}}} + \frac{5}{(1-\gamma)\pi_{b,\min}\sqrt{\nu_{\min}}}\right)\nonumber\\
\leq \,&\alpha_{t-z_t,t-1}\left(\frac{4}{\pi_{b,\min}\sqrt{\nu_{\min}}}
\|Q_t-Q^{\pi_t}\|_\nu
+
\frac{7}{(1-\gamma)\pi_{b,\min}\sqrt{\nu_{\min}}}\right)^2\nonumber\\
\leq \,&\frac{64\alpha_{t-z_t,t-1}}{\pi_{b,\min}^2\nu_{\min}}
W_t
+
\frac{98\alpha_{t-z_t,t-1}}{(1-\gamma)^2\pi_{b,\min}^2\nu_{\min}},\label{eq:T26_bound}
\end{align}
where the last line follows from $(a+b)^2\leq 2(a^2+b^2)$.

\paragraph{Combining everything together.} Substituting \eqref{eq:T21_bound}, \eqref{eq:T2223_bound}, \eqref{eq:T2425_bound} and \eqref{eq:T26_bound} in \eqref{eq:T2_decomposition}, we obtain
\begin{align*}
    \mathbb{E}\left\langle Q_t - Q^{\pi_t}, F_{\mathrm{IS}}(Q_t, Y_t, \pi_{t+1}) - \bar{F}(Q_t, \pi_t) \right\rangle\leq \,&\frac{16\omega_t}{\pi_{b,\min}\sqrt{\nu_{\min}}}\mathbb{E}W_t
+\frac{18\omega_t}{\pi_{b,\min}\sqrt{\nu_{\min}}(1-\gamma)^2}\\
&+\frac{24\alpha_{t-z_t,t-1}}{\pi_{b,\min}^2\nu_{\min}}\mathbb{E}W_t+\frac{9\alpha_{t-z_t,t-1}}{\pi_{b,\min}^2\nu_{\min}(1-\gamma)^2}\\
&+\frac{16\omega_{t-z_t, t}}{\pi_{b,\min}\sqrt{\nu_{\min}}}\mathbb{E}W_t+\frac{8\omega_{t-z_t, t}}{\pi_{b,\min}(1-\gamma)^2}\\
&+\frac{64\alpha_{t-z_t,t-1}}{\pi_{b,\min}^2\nu_{\min}}
\mathbb{E}W_t
+
\frac{98\alpha_{t-z_t,t-1}}{(1-\gamma)^2\pi_{b,\min}^2\nu_{\min}}\\
\leq \,&\frac{90\alpha_{t-z_t,t-1}}{\pi_{b,\min}^2\nu_{\min}}
\mathbb{E}W_t
+
\frac{100\alpha_{t-z_t,t-1}}{(1-\gamma)^2\pi_{b,\min}^2\nu_{\min}},
\end{align*}
where the last inequality follows from
\begin{align*}
    \omega_t\leq \frac{\alpha_{t-1,t-z_t}}{16\pi_{b,\min}\sqrt{\nu_{\min}}},
\end{align*}
which itself is a consequence of the stepsize condition $\omega_t\leq (1-\gamma)^3(1-\gamma_c)\nu_{\min}\alpha_t/20$. It now follows that
\begin{align*}
    \mathbb{E}T_2\leq \frac{90\alpha_t\alpha_{t-z_t,t-1}}{\pi_{b,\min}^2\nu_{\min}}
\mathbb{E}W_t
+
\frac{100\alpha_t\alpha_{t-z_t,t-1}}{(1-\gamma)^2\pi_{b,\min}^2\nu_{\min}}.
\end{align*}

\subsubsection{Proof of Lemma \ref{lem:target}}\label{app:target}

The following lemma will be useful in proving Lemma~\ref{lem:target}; its proof is presented in Appendix~\ref{ap:pf:lem:target_shift}.
\begin{lemma}\label{lem:target_shift}
For the policy iterates $\{\pi_t\}$ generated by Algorithm \ref{alg}, we have
\begin{align*} 
\|Q^{\pi_t}-Q^{\pi_{t+1}}\|_\infty
\le
\frac{\omega_t}{1-\gamma}
\big(
\|Q^*-Q^{\pi_t}\|_\infty
+
2\|Q_t-Q^{\pi_t}\|_\infty
+
\chi_t
\big),\quad\forall\,t\geq 0.
\end{align*}
\end{lemma}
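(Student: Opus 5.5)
We bound the two one-sided errors $Q^{\pi_t}-Q^{\pi_{t+1}}$ and $Q^{\pi_{t+1}}-Q^{\pi_t}$ separately, entrywise, and then combine them into the sup-norm estimate.

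\emph{Lower side.} Step 1 of the proof of Proposition~\ref{prop:actor} already gives $Q^{\pi_t}-Q^{\pi_{t+1}}\le \delta_t\mathbf{1}$. By \eqref{eq:delta_bound} and $\gamma\le 1$,
\[
\delta_t\le \frac{\omega_t}{1-\gamma}\big(2\xi_t+\chi_t\big),
\qquad \xi_t=\|Q_t-Q^{\pi_t}\|_\infty .
\]
This is at most the claimed right-hand side.

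\emph{Upper side.} Here the term $\|Q^*-Q^{\pi_t}\|_\infty$ enters. Write
\[
Q^{\pi_{t+1}}-Q^{\pi_t}=\mathcal{H}_{\pi_{t+1}}Q^{\pi_{t+1}}-\mathcal{H}_{\pi_{t+1}}Q^{\pi_t}+\mathcal{H}_{\pi_{t+1}}Q^{\pi_t}-\mathcal{H}_{\pi_t}Q^{\pi_t}.
\]
By Lemma~\ref{lem:Bellman_Affine}, the last difference equals
\[
\omega_t\big(\mathcal{H}_{\tilde\pi_t}Q^{\pi_t}-Q^{\pi_t}\big)\le \omega_t\big(\mathcal{H}Q^{\pi_t}-Q^{\pi_t}\big).
\]
Since $Q^*=\mathcal{H}Q^*$ and $Q^{\pi_t}\le Q^*$, monotonicity of $\mathcal{H}$ gives $\mathcal{H}Q^{\pi_t}\le Q^*$. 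Hence
\[
\mathcal{H}Q^{\pi_t}-Q^{\pi_t}\le Q^*-Q^{\pi_t}\le \|Q^*-Q^{\pi_t}\|_\infty\mathbf{1}.
\]
For the first difference, set $\epsilon_t:=\max_{s,a}\big(Q^{\pi_{t+1}}-Q^{\pi_t}\big)(s,a)$. Monotonicity and translation invariance of $\mathcal{H}_{\pi_{t+1}}$ then give
\[
\mathcal{H}_{\pi_{t+1}}Q^{\pi_{t+1}}-\mathcal{H}_{\pi_{t+1}}Q^{\pi_t}\le\gamma\epsilon_t\mathbf{1}.
\]
Taking the maximum over entries yields $\epsilon_t\le \omega_t\|Q^*-Q^{\pi_t}\|_\infty+\gamma\epsilon_t$, so
\[
\epsilon_t\le \frac{\omega_t}{1-\gamma}\|Q^*-Q^{\pi_t}\|_\infty .
\]
This self-bounding rearrangement mirrors the derivation of \eqref{eq:delta_bound}. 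It is legitimate because $\epsilon_t$ is finite (both $Q$-functions lie in $[0,1/(1-\gamma)]$) and $\gamma<1$.

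\emph{Combining.} The two sides give
\[
-\frac{\omega_t}{1-\gamma}\big(2\xi_t+\chi_t\big)\mathbf{1}\le Q^{\pi_{t+1}}-Q^{\pi_t}\le \frac{\omega_t}{1-\gamma}\|Q^*-Q^{\pi_t}\|_\infty\mathbf{1}.
\]
Therefore
\[
\|Q^{\pi_t}-Q^{\pi_{t+1}}\|_\infty\le \frac{\omega_t}{1-\gamma}\max\big\{\|Q^*-Q^{\pi_t}\|_\infty,\,2\xi_t+\chi_t\big\},
\]
and bounding the maximum by the sum gives the stated inequality. The only delicate step is the upper side: a naive Lipschitz bound on $\pi\mapsto Q^\pi$ would only give $\mathcal{O}(\omega_t/(1-\gamma)^2)$ with no decay. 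The comparison $\mathcal{H}Q^{\pi_t}\le Q^*$ is what ties the increment to the actor Lyapunov function instead.
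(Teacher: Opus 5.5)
Your proof is correct and follows the paper's argument essentially step for step. The lower side is read off from \eqref{eq:delta_bound}, and the upper side uses the same chain $\mathcal{H}_{\pi_{t+1}}Q^{\pi_t}=(1-\omega_t)Q^{\pi_t}+\omega_t\mathcal{H}_{\tilde\pi_t}Q^{\pi_t}\le Q^{\pi_t}+\omega_t(Q^*-Q^{\pi_t})$, followed by the same self-bounding rearrangement. Your final combination, which bounds the sup norm by the larger of the two nonnegative one-sided bounds, is slightly tidier than the paper's step $\max(\delta_t,\delta_t')\le\delta_t+\delta_t'$, since that step tacitly requires both one-sided maxima to be nonnegative.
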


Returning to the proof of Lemma \ref{lem:target}, observe that
\begin{align} 
T_3
=\,&\langle Q_t - Q^{\pi_t}, Q^{\pi_t}-Q^{\pi_{t+1}}\rangle_\nu\nonumber\\
\leq \,&\|Q_t - Q^{\pi_t}\|_\nu
\|Q^{\pi_t}-Q^{\pi_{t+1}}\|_\nu\nonumber\\
\leq \,&
\|Q_t - Q^{\pi_t}\|_\nu
\|Q^{\pi_t}-Q^{\pi_{t+1}}\|_\infty\nonumber\\
\leq \,&\frac{\omega_t}{1-\gamma}
\|Q_t-Q^{\pi_t}\|_\nu
\left(2\|Q_t-Q^{\pi_t}\|_\infty+
\|Q^*-Q^{\pi_t}\|_\infty+
\chi_t
\right)\tag{Lemma \ref{lem:target_shift}}\nonumber\\
\leq \,&\frac{\omega_t}{1-\gamma}
\|Q_t-Q^{\pi_t}\|_\nu
\left(\frac{2}{\sqrt{\nu_{\min}}}\|Q_t-Q^{\pi_t}\|_\nu+
\|Q^*-Q^{\pi_t}\|_\infty+
\chi_t
\right)\nonumber\\
=\,&\frac{4\omega_t}{(1-\gamma)\sqrt{\nu_{\min}}}W_t+\frac{\omega_t}{1-\gamma}
\|Q_t-Q^{\pi_t}\|_\nu
\|Q^*-Q^{\pi_t}\|_\infty+\frac{\omega_t}{1-\gamma}
\|Q_t-Q^{\pi_t}\|_\nu
\chi_t.\label{T25}
 \end{align}

To proceed, note that by the AM-GM inequality, we have for any $C_1,C_2>0$ that
\begin{align*}
\|Q_t-Q^{\pi_t}\|_\nu
\|Q^*-Q^{\pi_t}\|_\infty
&\le
\frac{C_1}{2}
\|Q_t-Q^{\pi_t}\|_\nu^2
+
\frac{1}{2C_1}
\|Q^*-Q^{\pi_t}\|_\infty^2,
\\
\|Q_t-Q^{\pi_t}\|_\nu
\chi_t
&\le
\frac{C_2}{2}
\|Q_t-Q^{\pi_t}\|_\nu^2
+
\frac{1}{2C_2}
\chi_t^2.
\end{align*}
Substituting the above in \eqref{T25} yields
\begin{align*}
T_3
\le\,&
\frac{\omega_t}{1-\gamma}
\Bigg(
\left(
\frac{2}{\sqrt{\nu_{\min}}}
+
\frac{C_1}{2}
+
\frac{C_2}{2}
\right)
\|Q_t-Q^{\pi_t}\|_\nu^2
+
\frac{1}{2C_1}
\|Q^*-Q^{\pi_t}\|_\infty^2
+
\frac{1}{2C_2}
\chi_t^2
\Bigg)\\
=\,&
\frac{\omega_t}{1-\gamma}
\Bigg(
\left(
\frac{4}{\sqrt{\nu_{\min}}}
+
C_1
+
C_2
\right)
W_t
+
\frac{1}{2C_1}
\|Q^*-Q^{\pi_t}\|_\infty^2
+
\frac{1}{2C_2}
\chi_t^2
\Bigg)\\
=\,&\frac{8\omega_t}{(1-\gamma)^3\sqrt{\nu_{\min}}}W_t+\frac{1-\gamma}{4}\omega_t\|Q^*-Q^{\pi_t}\|_\infty^2+\frac{1-\gamma}{4}\omega_t\chi_t^2,
\end{align*}
where the last inequality follows by choosing  $C_1=C_2=2/(1-\gamma)^2$.

\subsubsection{Proof of Lemma \ref{lem:residual}}\label{app:res}

Observe that
\begin{equation}
T_4
=
\frac{1}{2}
\|(Q_{t+1}-Q_t)+(Q^{\pi_t}-Q^{\pi_{t+1}})\|_\nu^2
\le
\|Q_{t+1}-Q_t\|_\nu^2
+
\|Q^{\pi_t}-Q^{\pi_{t+1}}\|_\nu^2.\label{26}
\end{equation}
For the first term on the right-hand side of \eqref{26}, the IS-based critic update, that
\begin{align}
\|Q_{t+1}-Q_t\|_\nu^2
=\,&
\alpha_t^2\nu_{(S_t,A_t)}
\big(\mathcal{R}(S_t,A_t)
+
\gamma\rho_{t+1}Q_t(S_{t+1},A_{t+1})
-
Q_t(S_t,A_t)\big)^2\nonumber\\
\leq \,&\alpha_t^2
\left(1
+
\frac{1}{\pi_{b,\min}}\|Q_t\|_\infty
+
\|Q_t\|_\infty\right)^2\nonumber\\
\leq \,&\alpha_t^2
\left(1
+
\frac{3}{2\pi_{b,\min}}\|Q_t\|_\infty
\right)^2\tag{$\pi_{b,\min}\leq 1/2$}\nonumber\\
\leq \,&\alpha_t^2
\left(1
+
\frac{3}{2\pi_{b,\min}}\|Q_t-Q^{\pi_t}\|_\infty
+\frac{3}{2\pi_{b,\min}}\|Q^{\pi_t}\|_\infty\right)^2\nonumber\\
\leq \,&\alpha_t^2
\left(1
+
\frac{3}{2\pi_{b,\min}\sqrt{\nu_{\min}}}\|Q_t-Q^{\pi_t}\|_\nu
+\frac{3}{2\pi_{b,\min}(1-\gamma)}\right)^2\nonumber\\
\leq \,&\alpha_t^2
\left(
\frac{3}{2\pi_{b,\min}\sqrt{\nu_{\min}}}\|Q_t-Q^{\pi_t}\|_\nu
+\frac{2}{\pi_{b,\min}(1-\gamma)}\right)^2\tag{$\pi_{b,\min}\leq 1/2$}\nonumber\\
\leq \,&
\frac{9\alpha_t^2}{\pi_{b,\min}^2\nu_{\min}}W_t
+\frac{8\alpha_t^2}{\pi_{b,\min}^2(1-\gamma)^2}.\label{28}
\end{align}

For the second term on the right-hand side of \eqref{26}, we have by Lemma \ref{lem:target_shift} that
\begin{align}
\|Q^{\pi_t}-Q^{\pi_{t+1}}\|_\nu^2
\le\,&
\|Q^{\pi_t}-Q^{\pi_{t+1}}\|_\infty^2\nonumber\\
\le\,&
\frac{\omega_t^2}{(1-\gamma)^2}\left(\|Q^*-Q^{\pi_t}\|_\infty
+
2\|Q_t-Q^{\pi_t}\|_\infty
+
\chi_t\right)^2\nonumber\\
\le\,&
\frac{\omega_t^2}{(1-\gamma)^2}\left(\|Q^*-Q^{\pi_t}\|_\infty
+
\frac{2}{\sqrt{\nu_{\min}}}\|Q_t-Q^{\pi_t}\|_\nu
+
\chi_t\right)^2\nonumber\\
=\,&\frac{24\omega_t^2}{\nu_{\min}(1-\gamma)^2}W_t+\frac{3\omega_t^2}{(1-\gamma)^2}\|Q^*-Q^{\pi_t}\|_\infty^2
+
\frac{3\omega_t^2\chi_t^2}{(1-\gamma)^2},\label{29}
\end{align}
where the last inequality follows from $(a+b+c)^2\leq 3(a^2+b^2+c^2)$ for any $a,b,c\in\mathbb{R}$.

Substituting the two bounds \eqref{28} and \eqref{29} in \eqref{26} yields the desired result.

\subsubsection{Proof of Lemma \ref{bounds}}\label{ap:pf:bounds}
\begin{enumerate}[(1)]
    \item By definition of the operator $F_\mathrm{IS}$, for any $Q \in \mathbb{R}^{mn}$, $y = (s_1,a_1,s_2,a_2) \in \mathcal{Y}$, and policy $\pi\in{(\Delta^m)}^n$, we have for any $(s,a)$ that
\begin{align*}
    \left|[F_{\mathrm{IS}}(Q, y, \pi)](s,a)\right|\leq \,&(1-\mathds{1}_{\{(s,a)= (s_1,a_1)\}})|Q(s,a)|\\
    &+\mathds{1}_{\{(s,a)=(s_1,a_1)\}}\left|\mathcal{R}(s_1,a_1) + \gamma\frac{\pi(a_2\mid s_2)}{\pi_b(a_2\mid s_2)}Q(s_2,a_2)\right|\\
    \leq \,&(1-\mathds{1}_{\{(s,a)= (s_1,a_1)\}})\|Q\|_\infty+\mathds{1}_{\{(s,a)=(s_1,a_1)\}}\left(1+ \frac{1}{\pi_{b,\min}}\|Q\|_\infty\right)\\
    \leq \,&1+ \frac{1}{\pi_{b,\min}}\|Q\|_\infty.
\end{align*}
It follows that
\begin{align*}
    \|F_{\mathrm{IS}}(Q, y, \pi)\|_\infty \leq 1+ \frac{1}{\pi_{b,\min}}\|Q\|_\infty.
\end{align*}
Since $\|\cdot\|_\nu \leq \|\cdot\|_\infty \leq \|\cdot\|_\nu/\sqrt{\nu_{\min}}$, we have
\begin{align*}
    \|F_{\mathrm{IS}}(Q, y, \pi)\|_\nu \leq 1+ \frac{1}{\pi_{b,\min}\sqrt{\nu_{\min}}}\|Q\|_\nu.
\end{align*}

\item 
For any policy $\pi$, since $\bar{F}(\cdot,\pi)$ is a contraction mapping with respect to $\|\cdot\|_\nu$ (cf. Lemma \ref{lem:F}), we have
\begin{align*} 
\|\bar{F}(Q, \pi)\|_\nu
&\le
\|\bar{F}(0, \pi)\|_\nu
+
\|\bar{F}(Q, \pi_{t+1})-\bar{F}(0, \pi)\|_\nu\\
&\le \|\bar{F}(0, \pi)\|_\nu+
\|Q\|_\nu\\
&\leq 1+\|Q\|_\nu,
\end{align*}
where the last inequality follows from $\|\bar{F}(0, \pi)\|_\nu\leq   1$.
\item 
The first two inequalities are direct consequences of \cite[Lemma 4]{chen2024lyapunov}. Since
\begin{align*} 
\|Q_{t_1} - Q_{t_2}\|_\nu
\le\,&
4\alpha_{t_1,t_2-1}
+
\frac{4\alpha_{t_1,t_2-1}}{\pi_{b,\min}\sqrt{\nu_{\min}}}
\|Q_{t_2}\|_\nu\\
\le\,&
4\alpha_{t_1,t_2-1}
+
\frac{4\alpha_{t_1,t_2-1}}{\pi_{b,\min}\sqrt{\nu_{\min}}}
\|Q_{t_2} - Q^{\pi_{t_2}}\|_\nu
+
\frac{4\alpha_{t_1,t_2-1}}{\pi_{b,\min}\sqrt{\nu_{\min}}}
\|Q^{\pi_{t_2}}\|_\nu\\
\leq \,&
\frac{4\alpha_{t_1,t_2-1}}{\pi_{b,\min}\sqrt{\nu_{\min}}}
\|Q_{t_2} - Q^{\pi_{t_2}}\|_\nu
+
\frac{6\alpha_{t_1,t_2-1}}{\pi_{b,\min}\sqrt{\nu_{\min}}(1-\gamma)}, \tag{$\pi_{b,\min} \leq 1/2$}
\end{align*}
we obtain the third inequality. The fourth inequality follows by an identical argument.

Using the triangle inequality, we have 
\begin{align*} 
\|Q_{t_1}\|_\nu
\le\,&
\|Q_{t_1}-Q_{t_2}\|_\nu
+
\|Q_{t_2}-Q^{\pi_{t_2}}\|_\nu
+
\|Q^{\pi_{t_2}}\|_\nu\\
\leq \,&\frac{4\alpha_{t_1,t_2-1}}{\pi_{b,\min}\sqrt{\nu_{\min}}}
\|Q_{t_2} - Q^{\pi_{t_2}}\|_\nu
+
\frac{6\alpha_{t_1,t_2-1}}{\pi_{b,\min}\sqrt{\nu_{\min}}(1-\gamma)}
+\|Q_{t_2} - Q^{\pi_{t_2}}\|_\nu
+\frac{1}{1-\gamma}\\
\leq \,&2\|Q_{t_2} - Q^{\pi_{t_2}}\|_\nu
+\frac{3}{1-\gamma},
\end{align*}
where the last inequality follows from $\alpha_{t_1,t_2-1}\leq \pi_{b,\min}\sqrt{\nu_{\min}}/4$. This proves the fifth inequality. The last inequality follows by an identical argument.

 \item By the definition of the total variation distance, we have for any $s\in\mathcal{S}$ that
\begin{align*}
    \mathrm{d_{TV}}\big(\pi_{t_1}(s),\pi_{t_2}(s)\big) = \frac{1}{2}\max_s\|\pi_{t_1}(s)-\pi_{t_2}(s)\|_1 \leq \frac{1}{2}\max_s\sum_{t=t_1}^{t_2-1}\|\pi_{t+1}(s)-\pi_t(s)\|_1.
\end{align*}
Since $\|\pi_{t+1}(s)-\pi_t(s)\|_1=\omega_t\|\tilde{\pi}_t(s)-\pi_t(s)\|_1\leq 2\omega_t$,
we have 
\begin{align*}
    \max_{s\in\mathcal{S}}\mathrm{d_{TV}}(\pi_{t_1}(s),\pi_{t_2}(s)) \leq \omega_{t_1,t_2-1}.
\end{align*}
\item Using the Bellman equations for $Q^{\pi_{t_1}}$ and $Q^{\pi_{t_2}}$, we have
\begin{align*}
\|Q^{\pi_{t_1}} - Q^{\pi_{t_2}}\|_\infty
&=
\|\mathcal H_{\pi_{t_1}}Q^{\pi_{t_1}}
-
\mathcal H_{\pi_{t_2}}Q^{\pi_{t_2}}\|_\infty \\
&\le \|\mathcal H_{\pi_{t_1}}Q^{\pi_{t_1}}
-
\mathcal H_{\pi_{t_1}}Q^{\pi_{t_2}}\|_\infty + 
\|\mathcal H_{\pi_{t_1}}Q^{\pi_{t_2}}
-
\mathcal H_{\pi_{t_2}}Q^{\pi_{t_2}}\|_\infty
\\
&\le 
\gamma
\|Q^{\pi_{t_1}} - Q^{\pi_{t_2}}\|_\infty
+
\|\mathcal H_{\pi_{t_1}}Q^{\pi_{t_2}}
-
\mathcal H_{\pi_{t_2}}Q^{\pi_{t_2}}\|_\infty.
\end{align*}
Rearranging terms, we obtain the following where $\widehat{P}_\pi$ denotes the state-action transition probability matrix under policy $\pi$:
\begin{align*}
    \|Q^{\pi_{t_1}} - Q^{\pi_{t_2}}\|_\infty
\le\,&
\frac{1}{1-\gamma}
\|\mathcal H_{\pi_{t_1}}Q^{\pi_{t_2}}
-
\mathcal H_{\pi_{t_2}}Q^{\pi_{t_2}}\|_\infty\\
=\,&\frac{\gamma}{1-\gamma}
\|(\widehat{P}_{\pi_{t_1}}-\widehat{P}_{\pi_{t_2}})Q^{\pi_{t_2}}\|_\infty\\
\leq \,&\frac{1}{(1-\gamma)^2}
\|\widehat{P}_{\pi_{t_1}}-\widehat{P}_{\pi_{t_2}}\|_\infty
\end{align*}

Since
\begin{align*}
    \|\widehat{P}_{\pi_{t_1}}-\widehat{P}_{\pi_{t_2}}\|_\infty=\,&\max_{s,a}\sum_{s'}p(s'\mid s,a)\sum_{a'}\left|\pi_{t_1}(a'\mid s')-\pi_{t_2}(a'\mid s')\right|\\
    = \,&2\max_{s,a}\sum_{s'}p(s'\mid s,a)\mathrm{d_{TV}}(\pi_{t_1}(s'),\pi_{t_2}(s'))\\
    \leq \,&2\omega_{t_1-1,t_2},
\end{align*}
where the last inequality follows from Item (6) of this lemma, we have
\begin{align*}
    \|Q^{\pi_{t_1}} - Q^{\pi_{t_2}}\|_\infty\leq \frac{2\omega_{t_1,t_2-1}}{(1-\gamma)^2}.
\end{align*}

\end{enumerate}

\subsubsection{Proof of Lemma \ref{lem:target_shift}}\label{ap:pf:lem:target_shift}
Recall that $\delta_t
=\max_{s,a}\big(Q^{\pi_{t}}(s,a)-Q^{\pi_{t+1}}(s,a)\big)$. Let $\delta_t'=
\max_{s,a}
\big(Q^{\pi_{t+1}}(s,a)
-
Q^{\pi_t}(s,a)\big)$. Then, we have $\|Q^{\pi_t}-Q^{\pi_{t+1}}\|_\infty= \max(\delta_t,\delta_t')$.

For $\delta_t$, we have shown in \eqref{eq:delta_bound} that
\begin{align}\label{eq1:lem:target_shift}
\delta_t
\le
\frac{2\omega_t}{1-\gamma}
\|Q_t-Q^{\pi_t}\|_\infty
+
\omega_t\frac{\chi_t}{1-\gamma}.
\end{align}

To bound $\delta_t'$, by the monotonicity and translation invariance of the Bellman operator $\mathcal{H}_{\pi_{t+1}}$, we have
\begin{align}
Q^{\pi_{t+1}}
=
\mathcal{H}_{\pi_{t+1}} Q^{\pi_{t+1}}
\le \mathcal{H}_{\pi_{t+1}}(Q^{\pi_t} + \delta_t' \mathbf{1})
=
\mathcal{H}_{\pi_{t+1}} Q^{\pi_t}
+
\gamma \delta_t' \mathbf{1}. \label{eq:initial_improvement_bound}
\end{align}
To further bound $\mathcal{H}_{\pi_{t+1}} Q^{\pi_t}$, observe that
\begin{align*}
   \mathcal{H}_{\pi_{t+1}} Q^{\pi_t}
   =\,& (1 - \omega_t) Q^{\pi_t} + \omega_t \mathcal{H}_{\tilde{\pi}_t} Q^{\pi_t} \tag{Lemma \ref{lem:Bellman_Affine}}\\
   \leq\,& (1 - \omega_t) Q^{\pi_t} + \omega_t \mathcal{H} Q^{\pi_t} \tag{$\mathcal{H}_{\tilde{\pi}_t} Q^{\pi_t} \leq \mathcal{H} Q^{\pi_t}$}\\
   \leq\,& (1 - \omega_t) Q^{\pi_t} + \omega_t \mathcal{H} Q^* \tag{$\mathcal{H} Q^{\pi_t} \leq \mathcal{H} Q^*$}\\
   =\,& Q^{\pi_t} + \omega_t (Q^* - Q^{\pi_t}) \tag{$\mathcal{H} Q^* = Q^*$}.
\end{align*}
Combining the previous inequality with \eqref{eq:initial_improvement_bound}, yields $\delta_t' \leq \omega_t \|Q^* - Q^{\pi_t}\|_\infty + \gamma \delta_t'$.
Rearranging terms, we obtain
\begin{align}\label{eq2:lem:target_shift}
    \delta_t' \leq \frac{\omega_t \|Q^* - Q^{\pi_t}\|_\infty}{1 - \gamma}.
\end{align}

In view of \eqref{eq1:lem:target_shift} and \eqref{eq2:lem:target_shift}, we have
\begin{align*}
    \|Q^{\pi_t}-Q^{\pi_{t+1}}\|_\infty
    = \max(\delta_t,\delta_t')
    \leq \delta_t + \delta_t'
    = \frac{\omega_t}{1-\gamma}\left(\|Q^*-Q^{\pi_t}\|_\infty
    +
    2\|Q_t-Q^{\pi_t}\|_\infty
    +
    \chi_t\right).
\end{align*}

\section{Proof of Theorem \ref{thm:main} with the ETD-Based Critic}\label{app:ETD}

\subsection{Analysis of the Actor}\label{app:actor_ETD}
We begin by stating the actor drift inequalities.
\begin{proposition}
\label{prop:actor_ETD}
The following inequality holds for all $t$:
\begin{align*}
&\infnorm{Q^* - Q^{\pi_{t+1}}}
\le
\underbrace{\big(1 - \omega_t(1-\gamma)\big) \infnorm{Q^* - Q^{\pi_t}}}_{\text{actor drift}} +
\underbrace{\frac{2\omega_t}{1-\gamma} \infnorm{Q_t - Q^{\pi_t}}}_{\text{critic coupling error}}
+
\underbrace{\frac{\omega_t}{1-\gamma}\chi_t}_{\text{temperature error}}.
\end{align*} 
\end{proposition}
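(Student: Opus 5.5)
The key observation is that the actor update in Algorithm~\ref{alg} (Line~3) does not depend on which critic is used. The policy $\pi_{t+1}=(1-\omega_t)\pi_t+\omega_t\tilde{\pi}_t$ with $\tilde{\pi}_t=G(\pi_t,Q_t,\tau_t)$ is determined by $\pi_t$, the current estimate $Q_t$, and $\tau_t$ alone. The proof of Proposition~\ref{prop:actor} in Appendix~\ref{app:actor} never uses the structure of the critic update. It treats $Q_t$ as an arbitrary vector in $\mathbb{R}^{mn}$ and works only with the quantities $\xi_t=\infnorm{Q_t-Q^{\pi_t}}$ and $\chi_t=\infnorm{\mathcal{H}Q_t-\mathcal{H}_{\tilde{\pi}_t}Q_t}$. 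The plan is therefore to re-run that argument verbatim with $Q_t$ now produced by the ETD critic, and to note that every step is deterministic and holds pathwise.

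Concretely, I would proceed in two steps, mirroring Appendix~\ref{app:actor}.

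\emph{Step 1: approximate monotone improvement.} Write $Q^{\pi_{t+1}}=\mathcal{H}_{\pi_{t+1}}Q^{\pi_{t+1}}\ge\mathcal{H}_{\pi_{t+1}}(Q^{\pi_t}-\delta_t\mathbf{1})$. Apply Lemma~\ref{lem:Bellman_Affine} to expand $\mathcal{H}_{\pi_{t+1}}=(1-\omega_t)\mathcal{H}_{\pi_t}+\omega_t\mathcal{H}_{\tilde{\pi}_t}$. Then use $\mathcal{H}Q^{\pi_t}\ge Q^{\pi_t}$ together with the three-term split
\begin{align*}
\mathcal{H}Q^{\pi_t}-\mathcal{H}_{\tilde{\pi}_t}Q^{\pi_t}\le(2\gamma\xi_t+\chi_t)\mathbf{1},
\end{align*}
which relies on the contraction of $\mathcal{H}$ and of $\mathcal{H}_{\tilde{\pi}_t}$. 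This gives $\delta_t\le\omega_t(2\gamma\xi_t+\chi_t)/(1-\gamma)$.

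\emph{Step 2: contraction.} Subtract the monotonicity inequality from $Q^*$ and bound
\begin{align*}
Q^*-\mathcal{H}_{\tilde{\pi}_t}Q^{\pi_t}\le\gamma\infnorm{Q^*-Q^{\pi_t}}\mathbf{1}+(2\gamma\xi_t+\chi_t)\mathbf{1}.
\end{align*}
Since $Q^*-Q^{\pi_{t+1}}\ge0$, take the $\ell_\infty$ norm and insert the bound on $\delta_t$. Using $\gamma\le1$ and $1+\gamma/(1-\gamma)=1/(1-\gamma)$, the coefficients collapse to $2\omega_t/(1-\gamma)$ for $\xi_t$ and $\omega_t/(1-\gamma)$ for $\chi_t$.

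I do not expect a genuine obstacle. The only point to check is that nothing in the IS-based argument secretly used a property of the IS critic, such as boundedness of $Q_t$ or a particular form of $\pi_{t+1}$ through $\rho_{t+1}$. It does not: $\pi_{t+1}$ is a convex combination of policies, so Lemma~\ref{lem:Bellman_Affine} applies. For the $\epsilon$-greedy map, $\chi_t$ is well defined for any finite $Q_t$. Since the inequality is deterministic, no expectation or mixing argument is needed, and the proposition follows by citing the proof of Proposition~\ref{prop:actor} with the ETD iterates substituted.
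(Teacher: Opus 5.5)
Your proposal is correct and matches the paper's route. The paper proves Proposition~\ref{prop:actor_ETD} by noting that the actor analysis does not depend on the critic and deferring to the proof of Proposition~\ref{prop:actor}, whose two steps are the ones you reproduce: approximate monotone improvement giving $\delta_t\le\omega_t(2\gamma\xi_t+\chi_t)/(1-\gamma)$, then the contractive recursion using $Q^*-Q^{\pi_{t+1}}\ge 0$. Your check that nothing in that argument uses a property specific to the IS critic is exactly what this reduction requires.
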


The following corollary follows, and is needed to combine the actor drift with a drift inequality for the critic.

\begin{corollary}\label{cor:actor_ETD}
    The following inequality holds for any $t\geq 0$:
    \begin{align*} \infnorm{Q^*-Q^{\pi_{t+1}}}^2 \leq \big(1-\omega_t(1-\gamma)\big)\infnorm{Q^*-Q^{\pi_t}}^2 + \frac{6\omega_t}{(1-\gamma)^3}\infnorm{Q_t-Q^{\pi_t}}^2 + \frac{5\omega_t}{(1-\gamma)^3}\chi^2_t. \end{align*}
\end{corollary}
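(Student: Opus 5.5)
This follows from Proposition~\ref{prop:actor_ETD} exactly as Corollary~\ref{cor:actor} follows from Proposition~\ref{prop:actor}. The actor inequality is purely deterministic and does not depend on how $Q_t$ is produced; only the value of $\xi_t=\infnorm{Q_t-Q^{\pi_t}}$ enters. Write $V_t=\infnorm{Q^*-Q^{\pi_t}}^2$. Both sides of Proposition~\ref{prop:actor_ETD} are nonnegative, provided $\omega_t(1-\gamma)\le 1$, which holds for sufficiently small stepsizes. Hence we may square the inequality to get
\[
V_{t+1}\le\Big[\big(1-\omega_t(1-\gamma)\big)\sqrt{V_t}+\tfrac{2\omega_t}{1-\gamma}\xi_t+\tfrac{\omega_t}{1-\gamma}\chi_t\Big]^2 .
\]
Expanding gives three square terms and three cross terms.

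Next we control the cross terms with weighted AM--GM, using the same weights as in the IS case:
\[
\sqrt{V_t}\,\xi_t\le \tfrac{(1-\gamma)^2}{6}V_t+\tfrac{3}{2(1-\gamma)^2}\xi_t^2,\qquad
\sqrt{V_t}\,\chi_t\le \tfrac{(1-\gamma)^2}{6}V_t+\tfrac{3}{2(1-\gamma)^2}\chi_t^2,\qquad
\xi_t\chi_t\le\tfrac14\xi_t^2+\chi_t^2 .
\]
The weights $(1-\gamma)^2/6$ are chosen so that the combined $V_t$ coefficient from the two cross terms is $\omega_t(1-\gamma)\big(1-\omega_t(1-\gamma)\big)$.

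We then collect coefficients. For $V_t$ this gives $(1-\omega_t(1-\gamma))^2+(1-\omega_t(1-\gamma))\omega_t(1-\gamma)=1-\omega_t(1-\gamma)$. For $\xi_t^2$ it gives $\big(6\omega_t-\omega_t^2(1-\gamma)\big)/(1-\gamma)^3\le 6\omega_t/(1-\gamma)^3$. For $\chi_t^2$ it gives $\big(3\omega_t+2\omega_t^2(1-\gamma)\big)/(1-\gamma)^3\le 5\omega_t/(1-\gamma)^3$, where the last step uses $\omega_t(1-\gamma)\le1$. These are exactly the constants claimed.

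There is no real obstacle. The only point needing care is the nonnegativity of $1-\omega_t(1-\gamma)$, which is used both to justify squaring and to bound the cross-term factor $1-\omega_t(1-\gamma)$ by $1$. This is guaranteed by the small-stepsize assumption.
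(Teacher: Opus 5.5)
Your proposal is correct and follows the paper's proof essentially verbatim: square the inequality of Proposition~\ref{prop:actor_ETD}, bound the three cross terms by AM--GM with exactly the same weights, and collect coefficients to obtain $1-\omega_t(1-\gamma)$, $6\omega_t/(1-\gamma)^3$, and $5\omega_t/(1-\gamma)^3$. Your condition $\omega_t(1-\gamma)\le 1$ is left implicit in the paper but is needed there too, both to multiply the AM--GM bounds by $1-\omega_t(1-\gamma)$ and to get $3\omega_t+2\omega_t^2(1-\gamma)\le 5\omega_t$; the squaring step itself needs no such condition, since the left-hand side is nonnegative.
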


Since our actor analysis methodology is agnostic of the critic choice, the statements for Proposition \ref{prop:actor_ETD} and Corollary \ref{cor:actor_ETD}, along with their proofs, are identical to those of Proposition \ref{prop:actor} and Corollary \ref{cor:actor}, which are proved in Appendix \ref{app:actor}. We hence omit the proofs here.

\subsection{Analysis of the Critic}\label{app:critic_ETD}

We begin by reformulating the ETD-based critic update as a stochastic approximation algorithm. For any $t\geq 0$, let $\{U_t\}$ be a stochastic process defined as $U_t=(S_t,A_t,S_{t+1})$. It is clear that $\{U_t\}$ is a Markov chain with a finite state space, denoted by $\mathcal{U}$. Moreover, under Assumption~\ref{assum:behavior}, the Markov chain $\{U_t\}$ admits a unique stationary distribution, denoted by $\mu_U$, which satisfies $\mu_U(s_1,a_1,s_2)=\mu_b(s_1)\pi_b(a_1|s_1)p(s_2|s_1,a_1)$. Let $F_\mathrm{ETD}:\mathbb{R}^{mn}\times \mathcal{U}\times {(\Delta^{m})}^n\to \mathbb{R}^{mn}$ be an operator such that given inputs $Q \in \mathbb{R}^{mn}$,
$u = (s_1,a_1,s_2) \in \mathcal{U}$ and $\pi\in{(\Delta^{m})}^n$, the $(s,a)$-th component of the output is defined as
\begin{align*} 
[F_\mathrm{ETD}(Q,u,\pi)](s,a) =&\mathds{1}_{\{ (s,a)= (s_1,a_1)\}}\left(\mathcal{R}(s_1,a_1) + \gamma \sum_{a\in\mathcal{A}}\pi(a|s_2) Q(s_2,a)\right)+\mathds{1}_{\{ (s,a)\neq (s_1,a_1)\}}Q(s,a).
 \end{align*}
The critic update in Algorithm \ref{alg}, Line 8, under the ETD-based critic choice, 
can now be written compactly as
\begin{align*} 
Q_{t+1} = Q_t + \alpha_t \bigl(F_\mathrm{ETD}(Q_t, U_t, \pi_{t+1}) - Q_t\bigr).
 \end{align*}

Recall the operator $\bar{F}:\mathbb{R}^{mn}\times{(\Delta^{m})}^n\to\mathbb{R}^{mn}$ defined in Section \ref{subsec:analysis_critic} as $\bar{F}(Q,\pi) = \mathbb{E}_{Y\sim\mu_Y}F_\mathrm{IS}(Q,Y,\pi)$, where we recall that $\mu_Y$ is the stationary distribution of the Markov chain $\{Y_t\}$ defined as $Y_t =(U_t,A_{t+1})$, which satisfies $\mu_Y(s_1,a_1,s_2,a_2) = \mu_U(s_1,a_1,s_2)\pi_{b}(a_2\mid s_2)$. We claim that $\mathbb{E}_{U\sim\mu_U}F_\mathrm{ETD}(Q,U,\pi) = \bar{F}(Q,\pi)$, implying that the expected operators for both the updates coincide. To see this, note that for $U = (s_1,a_1,s_2), Y = (u,a_2)$, we have
\begin{align*}
    \bar{F}(Q,\pi) =\,& \mathbb{E}_{Y\sim\mu_Y}\left[\mathds{1}_{\{ (s,a)= (s_1,a_1)\}}\left(\mathcal{R}(s_1,a_1) + \gamma \frac{\pi(a_2|s_2)}{\pi_b(a_2|s_2)} Q(s_2,a_2)\right)+\mathds{1}_{\{ (s,a)\neq (s_1,a_1)\}}Q(s,a)\right]\\
    =\,& \mathbb{E}_{U\sim\mu_U}\Bigg[\mathds{1}_{\{ (s,a)= (s_1,a_1)\}}\left(\mathcal{R}(s_1,a_1) + \gamma \mathbb{E}_{a_2\sim\pi_{b}(\cdot\mid s_2)}\left[ \frac{\pi(a_2|s_2)}{\pi_b(a_2|s_2)} Q(s_2,a_2)\right]\right)\Bigg]\\
    &+\mathbb{E}_{U\sim\mu_{U}}\big[\mathds{1}_{\{ (s,a)\neq (s_1,a_1)\}}Q(s,a)\big]\\
    =\,& \mathbb{E}_{U\sim\mu_U}\Big[\mathds{1}_{\{ (s,a)= (s_1,a_1)\}}\big(\mathcal{R}(s_1,a_1) + \gamma \mathbb{E}_{a_2\sim\pi(\cdot\mid s_2)}Q(s_2,a_2)\big)\Big]\\
    &+\mathbb{E}_{U\sim\mu_{U}}\big[\mathds{1}_{\{ (s,a)\neq (s_1,a_1)\}}Q(s,a)\big]\\
    =\,&\mathbb{E}_{U\sim\mu_{U}}F_\mathrm{ETD}(Q,U,\pi).
\end{align*}

It is now clear that the ETD-based critic update is a Markovian stochastic approximation scheme for tracking the solution to the (time-varying) fixed-point equation $\bar{F}(Q,\pi_t)=Q$. 

Next, we present several key properties of $F_\mathrm{ETD}$ and $\bar{F}$ that facilitate the convergence analysis. In particular, we show that the equation $\bar{F}(Q,\pi_t)=Q$ admits $Q^{\pi_t}$ as its unique solution, and that $\bar{F}(\cdot,\pi)$ is a contractive operator with respect to a weighted $\ell_2$ norm. The proof of the following result is provided in Appendix~\ref{app:F_ETD}.

\begin{lemma}\label{lem:F_ETD}
There exists $\nu \in \Delta^{nm}$ with
$\nu_{\min} \ge (1-\gamma)\mu_{b,\min}\pi_{b,\min}/(nm)$
such that:
\begin{enumerate}[(1)]
\item $\bar{F}(\cdot, \pi)$ is $\gamma_c$-contractive in $\|\cdot\|_\nu$ for all $\pi\in{(\Delta^{m})}^n$, where $\gamma_c = \sqrt{1-(1-\gamma)\mu_{b,\min}\pi_{b,\min}}$. Moreover, the fixed-point equation $\bar{F}(Q,\pi)=Q$ admits a unique solution $Q^{\pi}$.
\item $F_\mathrm{ETD}(\cdot, u,\pi)$ is $1/\sqrt{\nu_{\min}}$-Lipschitz in the weighted $\ell_2$ norm $\|\cdot\|_\nu$ and $1$-Lipschitz in $\|\cdot\|_\infty$, uniformly for all $u\in\mathcal{U}$ and $\pi\in{(\Delta^{m})}^n$.
\item For all non-negative integers $t_1<t_2$, $\|\bar{F}(Q,\pi_{t_1}) - \bar{F}(Q,\pi_{t_2})\|_\infty \leq 2\omega_{t_1,t_2-1}\|Q\|_\infty$ for all $Q\in\mathbb{R}^{mn}$.
\item For all non-negative integers $t_1<t_2$, $\|F_\mathrm{ETD}(Q,u,\pi_{t_1}) - F_\mathrm{ETD}(Q,u,\pi_{t_2})\|_\infty \leq 2\omega_{t_1,t_2-1}\|Q\|_\infty$ for all $u\in\mathcal{U}$ and $Q\in\mathbb{R}^{mn}$.
\end{enumerate}
\end{lemma}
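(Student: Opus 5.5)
The plan is to mirror the proof of Lemma~\ref{lem:F}, since the expected operator is the same $\bar{F}$ and only the sampled operator changes.

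\textbf{Item (1).} We have already shown that $\mathbb{E}_{U\sim\mu_U}F_\mathrm{ETD}(Q,U,\pi)=\bar{F}(Q,\pi)$, so this operator is exactly the one in Lemma~\ref{lem:F}. Hence the representation $\bar{F}(Q,\pi)=(I-D)Q+D\mathcal{H}_\pi(Q)$ still holds, with $D=\mathrm{diag}(\mu_b(s)\pi_b(a|s))$. We take the same weight vector $\nu$ furnished by \cite[Theorem 2.1]{chen2021finite}. Then the bound on $\nu_{\min}$, the contraction ratio $\gamma_c$, and uniqueness of the fixed point $Q^\pi$ (via Banach) transfer verbatim from Lemma~\ref{lem:F} (1). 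No new work is needed here.

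\textbf{Item (2).} Fix $u=(s_1,a_1,s_2)$ and $\pi$. Every coordinate $(s,a)\neq(s_1,a_1)$ of $F_\mathrm{ETD}(Q,u,\pi)-F_\mathrm{ETD}(Q',u,\pi)$ equals $Q(s,a)-Q'(s,a)$. The coordinate $(s_1,a_1)$ equals $\gamma\sum_{a}\pi(a|s_2)(Q-Q')(s_2,a)$, a convex combination scaled by $\gamma$, so it is bounded by $\|Q-Q'\|_\infty$. This gives $1$-Lipschitzness in $\|\cdot\|_\infty$. Unlike the IS case, no $1/\pi_{b,\min}$ factor appears, because there is no importance ratio. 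For the weighted norm we chain $\|\cdot\|_\nu\le\|\cdot\|_\infty\le\|\cdot\|_\nu/\sqrt{\nu_{\min}}$. This uses $\nu\in\Delta^{nm}$, so that $\sum\nu=1$.

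\textbf{Item (3).} This statement concerns only $\bar{F}$, which is the same operator as before, so it is literally Lemma~\ref{lem:F} (3). Its proof relies only on the representation through $D$ and Lemma~\ref{bounds} (4). The latter is a property of the actor iterates and is independent of the critic.

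\textbf{Item (4).} The difference $F_\mathrm{ETD}(Q,u,\pi_{t_1})-F_\mathrm{ETD}(Q,u,\pi_{t_2})$ is supported on the single coordinate $(s_1,a_1)$, where it equals $\gamma\sum_a(\pi_{t_1}(a|s_2)-\pi_{t_2}(a|s_2))Q(s_2,a)$. This is bounded by $\gamma\|\pi_{t_1}(s_2)-\pi_{t_2}(s_2)\|_1\|Q\|_\infty=2\gamma\,\mathrm{d_{TV}}(\pi_{t_1}(s_2),\pi_{t_2}(s_2))\|Q\|_\infty$. Lemma~\ref{bounds} (4) then gives the bound $2\omega_{t_1,t_2-1}\|Q\|_\infty$.

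The only step needing any care is item (1): one must confirm that the cited contraction result applies to the common $\bar{F}$ with the same $\nu$. The identity $\mathbb{E}_{\mu_U}F_\mathrm{ETD}=\bar{F}$ established just above settles this, so no real obstacle is expected.
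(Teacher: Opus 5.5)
Your proposal is correct and follows the paper's proof essentially step for step. Items (1) and (3) are inherited from Lemma~\ref{lem:F} through the identity $\mathbb{E}_{U\sim\mu_U}F_\mathrm{ETD}(Q,U,\pi)=\bar{F}(Q,\pi)$, and items (2) and (4) come from the same single-coordinate computation at $(s_1,a_1)$ together with Lemma~\ref{bounds}~(4); the only small difference is that you write out the $\|\cdot\|_\nu$ half of item (2) via $\|\cdot\|_\nu\le\|\cdot\|_\infty\le\|\cdot\|_\nu/\sqrt{\nu_{\min}}$, which the paper leaves implicit.
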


We now state and prove the drift inequality for the ETD-based critic. Recall that $z_t = \min\{k\geq 1\mid \max_{s\in\mathcal{S}}\mathrm{d}_{\mathrm{TV}}\big(P_{\pi_b}^{k-1}(s,\cdot),\mu_b(\cdot)\big) \leq \omega_t/2\}$ and $K = \min\{t\in\mathbb{N}\mid t\geq z_t\}$.

\begin{proposition}\label{prop:critic_ETD}
    Under Assumption \ref{assum:behavior}, suppose that the stepsizes are non-increasing and satisfy
    \begin{align*}
        \omega_t\leq \frac{(1-\gamma)^3(1-\gamma_c)\nu_{\min}}{16}\alpha_t.
    \end{align*}
    Then, the following inequality holds for any $t\geq K$:
    \begin{align*}\mathbb{E}W_{t+1} &\leq \big(1-\alpha_t(1-\gamma_c)\big)\mathbb{E}W_t+ \frac{1-\gamma}{2}\omega_t\mathbb{E}\infnorm{Q^*-Q^{\pi_t}}^2 +\frac{1-\gamma}{2}\omega_t\mathbb{E}\chi_t^2+ 
    \frac{7\alpha_t\alpha_{t-z_t,t-1}}{(1-\gamma)^2}.\end{align*}
\end{proposition}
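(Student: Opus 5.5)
The plan is to mirror the proof of Proposition~\ref{big_critic} line by line, replacing $F_\mathrm{IS}$, $Y_t$, $\mu_Y$ by $F_\mathrm{ETD}$, $U_t$, $\mu_U$. Since $\mathbb{E}_{U\sim\mu_U}F_\mathrm{ETD}(Q,U,\pi)=\bar F(Q,\pi)$, the target $Q^{\pi_t}$ and the Lyapunov function $W_t=\|Q_t-Q^{\pi_t}\|_\nu^2/2$ are unchanged. We therefore start from the same decomposition \eqref{critic_decomposition}, $W_{t+1}=W_t+T_1+T_2+T_3+T_4$, with $F_\mathrm{IS}(Q_t,Y_t,\pi_{t+1})$ replaced by $F_\mathrm{ETD}(Q_t,U_t,\pi_{t+1})$.

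\textbf{Terms $T_1$ and $T_3$.} $T_1\le -2\alpha_t(1-\gamma_c)W_t$ holds verbatim by Lemma~\ref{lem:F_ETD}(1). $T_3$ involves only the actor and the norm $\|\cdot\|_\nu$, so Lemma~\ref{lem:target} (via Lemma~\ref{lem:target_shift}) applies unchanged. It gives
\[
T_3\le \frac{8\omega_t}{(1-\gamma)^3\sqrt{\nu_{\min}}}W_t+\frac{1-\gamma}{4}\omega_t\bigl(\|Q^*-Q^{\pi_t}\|_\infty^2+\chi_t^2\bigr).
\]

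\textbf{Term $T_4$.} I will redo Lemma~\ref{lem:residual} using the ETD structure. The key improvement is $|\Delta_t|\le 1+(1+\gamma)\|Q_t\|_\infty\le 1+2\|Q_t\|_\infty$, with no $1/\pi_{b,\min}$ factor. Hence $\|Q_{t+1}-Q_t\|_\nu^2\le \alpha_t^2(\mathcal{O}(1/\nu_{\min})W_t+\mathcal{O}((1-\gamma)^{-2}))$, and the $Q^{\pi_t}-Q^{\pi_{t+1}}$ part is bounded exactly as in \eqref{29}.

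\textbf{Bounded iterates.} The analogue of Lemma~\ref{bounds}(3) becomes simpler: $F_\mathrm{ETD}$ is $1$-Lipschitz in $\|\cdot\|_\infty$ with $\|F_\mathrm{ETD}(Q,u,\pi)\|_\infty\le 1+\|Q\|_\infty$. The ETD update is therefore a convex combination, giving $\|Q_t\|_\infty\le 1/(1-\gamma)$ almost surely from $Q_0=0$. I would establish this first, since it makes all iterates uniformly bounded and largely explains both the milder constant $(1-\gamma)^{-2}$ and the absence of any smallness condition on $\alpha_{t-z_t,t-1}$ in the statement. Consequently $\|Q_t-Q^{\pi_t}\|_\infty\le 2/(1-\gamma)$.

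\textbf{Term $T_2$ (main obstacle).} I use the six-term split \eqref{eq:T2_decomposition} with $Y_t\to U_t$, and now:
\begin{itemize}
\item $T_{21}$ is handled by conditioning on $\mathcal{F}_{t-z_t}$ and geometric mixing of $S_t$. The dependence is on $U_t=(S_t,A_t,S_{t+1})$, and the bound $\mathrm{d_{TV}}\le\omega_t/2$ at lag $z_t-1$ suffices.
\item $T_{22}+T_{23}$ use the Lipschitz property (Lemma~\ref{lem:F_ETD}(2)) together with $\|Q_t-Q_{t-z_t}\|_\infty\le \alpha_{t-z_t,t-1}\cdot 3/(1-\gamma)$.
\item $T_{24}+T_{25}$ use Lemma~\ref{lem:F_ETD}(3),(4) with $\|Q\|_\infty\le 1/(1-\gamma)$.
\item $T_{26}$ uses $\|\cdot\|_\nu\le\|\cdot\|_\infty$ and the a.s.\ bounds.
\end{itemize}
The delicate point is bookkeeping: I want the $W_t$-coefficient in $\mathbb{E}T_2$ to be at most a small multiple of $\alpha_t$ times something absorbable into $(1-\gamma_c)$, without imposing a condition on $\alpha_{t-z_t,t-1}$. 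To achieve this, I would bound $\|Q_t-Q^{\pi_t}\|_\nu\le 2/(1-\gamma)$ directly inside the cross terms. Each piece of $T_2$ then becomes $\alpha_t\alpha_{t-z_t,t-1}\cdot c/(1-\gamma)^2$ with no $W_t$ factor at all, using $\omega_t\le\alpha_t$ and $\omega_{t-z_t,t}\le 2\alpha_{t-z_t,t-1}$. The goal is to make the total constant, including the $\alpha_t^2$ terms from $T_4$ (absorbed via $\alpha_t\le\alpha_{t-z_t,t-1}$), at most $7$.

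\textbf{Assembling.} Summing the bounds, the $W_t$ coefficient is
\[
1-2\alpha_t(1-\gamma_c)+\mathcal{O}\!\left(\frac{\omega_t}{(1-\gamma)^3\sqrt{\nu_{\min}}}\right)+\mathcal{O}\!\left(\frac{\alpha_t^2}{\nu_{\min}}\right).
\]
The stepsize condition $\omega_t\le(1-\gamma)^3(1-\gamma_c)\nu_{\min}\alpha_t/16$ and small $\alpha_t$ reduce this to $1-\alpha_t(1-\gamma_c)$. The $\omega_t^2$ actor and temperature terms from $T_4$ merge into the $\tfrac{1-\gamma}{4}\omega_t$ terms, giving $\tfrac{1-\gamma}{2}\omega_t$. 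Finally, taking expectations completes the proof.
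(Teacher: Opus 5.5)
Your route is the paper's: the same decomposition $W_{t+1}=W_t+T_1+T_2+T_3+T_4$, with $T_1$ and $T_3$ carried over unchanged and $T_2$ split into the same six terms. The key idea is also the same: almost-sure boundedness of the ETD iterates removes every $W_t$ factor from $\mathbb{E}T_2$, so no smallness condition on $\alpha_{t-z_t,t-1}$ is needed. As written, however, two steps do not deliver the statement as posed.

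\textbf{The residual term.} You bound $T_4$ with $W_t$-dependent estimates: $\alpha_t^2\,\mathcal{O}(1/\nu_{\min})W_t$, plus the $24\omega_t^2W_t/(\nu_{\min}(1-\gamma)^2)$ term of \eqref{29}. You then absorb the $\mathcal{O}(\alpha_t^2/\nu_{\min})$ part into the drift ``for small $\alpha_t$''. That requires $\alpha_t\lesssim(1-\gamma_c)\nu_{\min}$, a hypothesis the proposition does not have. It is exactly the kind of condition you yourself noted the ETD statement avoids. Since you already have a.s.\ bounds, use them here too:
$T_4\le\|Q_{t+1}-Q_t\|_\infty^2+\|Q^{\pi_t}-Q^{\pi_{t+1}}\|_\infty^2\le\alpha_t^2/(1-\gamma)^2+4\omega_t^2/(1-\gamma)^4\le2\alpha_t^2/(1-\gamma)^2$,
by Lemma~\ref{bounds_ETD}(3),(5) and the stepsize ratio. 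Then $T_4$ contributes no $W_t$ and no actor terms. The only perturbation of the $W_t$ coefficient is $8\omega_t/((1-\gamma)^3\sqrt{\nu_{\min}})\le\alpha_t(1-\gamma_c)/2$.

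\textbf{The constant $7$.} Your stated bounds are $\|Q_t-Q^{\pi_t}\|_\nu\le2/(1-\gamma)$ and $\|Q_t-Q_{t-z_t}\|_\infty\le3\alpha_{t-z_t,t-1}/(1-\gamma)$. With these, $T_{22}+T_{23}$ alone is only bounded by $12\alpha_{t-z_t,t-1}/(1-\gamma)^2$ (before the factor $\alpha_t$). Even the sharper $|\Delta_t|\le2/(1-\gamma)$ that follows from $\|Q_t\|_\infty\le1/(1-\gamma)$ still gives $8$ there, so $7$ is out of reach.

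The missing observation is the box $Q_t(s,a)\in[0,1/(1-\gamma)]$, i.e.\ nonnegativity in addition to the upper bound. Nonnegativity propagates by the same induction because $Q_0=0$, $\mathcal{R}\ge0$ and $\alpha_t\le1$. Two side points on your convex-combination step: like Lemma~\ref{bounds_ETD}(2), it tacitly needs $\alpha_t\le1$, and it closes only with the coordinate bound $1+\gamma\|Q\|_\infty$, not $1+\|Q\|_\infty$.

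Since $Q^{\pi_t}$ lies in the same box, you get $\|Q_t-Q^{\pi_t}\|_\infty\le1/(1-\gamma)$ and $|\Delta_t|\le1/(1-\gamma)$. This yields:
\begin{itemize}
\item $T_{21}\le\omega_t/(1-\gamma)^2$;
\item $T_{22}+T_{23}\le2\alpha_{t-z_t,t-1}/(1-\gamma)^2$;
\item $T_{24}+T_{25}\le4\omega_{t-z_t,t}/(1-\gamma)^2$;
\item $T_{26}\le2\alpha_{t-z_t,t-1}/(1-\gamma)^2+4\omega_{t-z_t,t-1}/(1-\gamma)^3$.
\end{itemize}
To fold all $\omega$-terms into a single extra $\alpha_{t-z_t,t-1}/(1-\gamma)^2$, you need the full ratio condition $\omega_u\le(1-\gamma)^3(1-\gamma_c)\nu_{\min}\alpha_u/16$, not merely $\omega_t\le\alpha_t$ as you wrote. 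This gives $\mathbb{E}T_2\le5\alpha_t\alpha_{t-z_t,t-1}/(1-\gamma)^2$. Adding $T_4$ via $\alpha_t\le\alpha_{t-z_t,t-1}$ gives the stated $7$.
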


\begin{proof}[Proof of Proposition \ref{prop:critic_ETD}]
We have, by the binomial decomposition, that 
    \begin{align}
        W_{t+1} 
        =& W_t + \underbrace{\alpha_t\langle Q_t-Q^{\pi_t}, \bar{F}(Q_t, \pi_t) - Q_t \rangle_\nu}_{T_1:\text{ expected update term}}+ \underbrace{\alpha_t\langle Q_t-Q^{\pi_t}, F_\mathrm{ETD}(Q_t, U_t, \pi_{t+1}) - \bar{F}(Q_t, \pi_t) \rangle_\nu}_{T_2:\text{ Markovian noise term}} \nonumber\\
        & + \underbrace{\langle Q_t-Q^{\pi_t}, Q^{\pi_t}-Q^{\pi_{t+1}} \rangle_\nu}_{T_3:\text{ time-varying target term}}+ \underbrace{\frac{1}{2}\|(Q_{t+1}-Q_t) + (Q^{\pi_t}-Q^{\pi_{t+1}})\|_\nu^2}_{T_4:\text{ residuals}}. \label{critic_decomposition_ETD}
    \end{align}
    We now bound the terms $T_1, T_2, T_3$ and $T_4$. 
    
    For the term $T_1$, we have, by the fact that $Q^{\pi_t}$ is the fixed point of $\bar{F}(\cdot, \pi_t)$, that
\begin{align}
    T_1
    =\,&\alpha_t\langle Q_t-Q^{\pi_t}, \bar{F}(Q_t, \pi_t) - Q_t \rangle_\nu\nonumber\\
    =\,&\alpha_t\langle Q_t - Q^{\pi_t}, \bar{F}(Q_t, \pi_t) - \bar{F}(Q^{\pi_t}, \pi_t) \rangle_\nu - \alpha_t\|Q^{\pi_t} - Q_t\|_\nu^2\nonumber\\
    \leq\,&\alpha_t\|Q_t - Q^{\pi_t}\|_\nu \|\bar{F}(Q_t, \pi_t) - \bar{F}(Q^{\pi_t}, \pi_t)\|_\nu - \alpha_t\|Q^{\pi_t} - Q_t\|_\nu^2 \nonumber\\
    \leq\,&-\alpha_t(1 - \gamma_c)\|Q^{\pi_t} - Q_t\|_\nu^2\nonumber\\
    =\,&-2\alpha_t(1 - \gamma_c)W_t.\label{eq:T1_bound_ETD}
\end{align}

    For the terms $T_2$, $T_3$, and $T_4$, they are bounded in the following sequence of lemmas.

\begin{lemma}\label{lem:noise_ETD}
        The following inequality holds for all $t\geq K$:
        \begin{align*}
    \mathbb{E}T_2\leq \frac{5\alpha_t\alpha_{t-z_t,t-1}}{(1-\gamma)^2}.
\end{align*}
\end{lemma}
The proof of the above is presented in Appendix \ref{app:terms_ETD}.

\begin{lemma}\label{lem:target_ETD}
     The following inequality holds for all $t\geq 0$:
    \begin{align*}T_3 \leq \frac{8\omega_t}{(1-\gamma)^3\sqrt{\nu_{\min}}}W_t+\frac{1-\gamma}{4}\omega_t\|Q^*-Q^{\pi_t}\|_\infty^2+\frac{1-\gamma}{4}\omega_t\chi_t^2.
\end{align*}
\end{lemma}

The above lemma and its proof are identical to Lemma~\ref{lem:target}, which is proved in Appendix~\ref{app:target}. Hence, we omit the proof here.

\begin{lemma}\label{lem:residual_ETD}
        The following inequality holds for all $t\geq 0$:
\begin{align*}
    T_4\leq& \frac{2\alpha_t^2}{(1-\gamma)^2}.
\end{align*}
\end{lemma}
The proof of the above is presented in Appendix \ref{app:terms_ETD}.

We take expectation in \eqref{critic_decomposition_ETD} and substitute the bounds on $T_1,\mathbb{E}T_2,T_3,T_4$ to obtain
\begin{align*}
    \mathbb{E}W_{t+1} \leq& \mathbb{E}W_t - 2\alpha_t(1-\gamma_c)\mathbb{E}W_t + \frac{5\alpha_t\alpha_{t-z_t,t-1}}{(1-\gamma)^2}\\
    &+\frac{8\omega_t}{(1-\gamma)^3\sqrt{\nu_{\min}}}\mathbb{E}W_t+\frac{1-\gamma}{4}\omega_t\mathbb{E}\infnorm{Q^*-Q^{\pi_t}}^2+\frac{1-\gamma}{4}\omega_t\mathbb{E}\chi_t^2\\
    &+ \frac{2\alpha_t\alpha_{t-z_t,t-1}}{(1-\gamma)^2}\\
    =& \left(1-2\alpha_t(1-\gamma_c)+\frac{8\omega_t}{(1-\gamma)^3\sqrt{\nu_{\min}}}\right)\mathbb{E}W_t \\&+\frac{1-\gamma}{4}\omega_t\mathbb{E}\infnorm{Q^*-Q^{\pi_t}}^2+\frac{1-\gamma}{4}\omega_t\mathbb{E}\chi_t^2+ \frac{7\alpha_t\alpha_{t-z_t,t-1}}{(1-\gamma)^2}.
\end{align*}
The result follows by bounding the perturbation term (in the coefficient of $\mathbb{E}W_t$) in the above using the stepsize condition $\omega_t\leq (1-\gamma)^3(1-\gamma_c)\nu_{\min}\alpha_t/16$.
\end{proof}

\subsection{Combining the Actor and the Critic}\label{app:convergence_ETD}
We begin with a coupled drift inequality for the combined actor and critic error.

\begin{lemma}\label{big:coupled_ETD}
    Under the same conditions as Proposition \ref{prop:critic_ETD}, the following inequality holds for all $t\geq K$:
    \begin{align*}
    \mathbb{E}\left[\infnorm{Q^*-Q^{\pi_{t+1}}}^2 + W_{t+1}\right] \leq& \left(1-\frac{\omega_t(1-\gamma)}{2}\right)\mathbb{E}\left[\infnorm{Q^*-Q^{\pi_{t}}}^2 + W_{t}\right] + \frac{6\omega_t}{(1-\gamma)^3}\mathbb{E}\chi_t^2\\
    &+ \frac{7\alpha_t\alpha_{t-z_t,t-1}}{(1-\gamma)^2}.
    \end{align*}
\end{lemma}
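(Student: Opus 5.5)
We mirror the proof of Lemma~\ref{big:coupled} for the IS-based critic. The plan is to add the squared actor drift of Corollary~\ref{cor:actor_ETD}, in expectation, to the ETD critic drift of Proposition~\ref{prop:critic_ETD}. This gives
\begin{align*}
\mathbb{E}[V_{t+1}+W_{t+1}]
\le\,&\big(1-\omega_t(1-\gamma)\big)\mathbb{E}V_t+\frac{6\omega_t}{(1-\gamma)^3}\mathbb{E}\infnorm{Q_t-Q^{\pi_t}}^2+\frac{5\omega_t}{(1-\gamma)^3}\mathbb{E}\chi_t^2\\
&+\big(1-\alpha_t(1-\gamma_c)\big)\mathbb{E}W_t+\frac{(1-\gamma)\omega_t}{2}\big(\mathbb{E}V_t+\mathbb{E}\chi_t^2\big)+\frac{7\alpha_t\alpha_{t-z_t,t-1}}{(1-\gamma)^2},
\end{align*}
where $V_t=\infnorm{Q^*-Q^{\pi_t}}^2$.

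Next we collect terms. For the actor part, the coefficient of $\mathbb{E}V_t$ becomes $1-\omega_t(1-\gamma)/2$. The coefficient of $\mathbb{E}\chi_t^2$ is $5\omega_t/(1-\gamma)^3+(1-\gamma)\omega_t/2\le 6\omega_t/(1-\gamma)^3$. The stochastic term $7\alpha_t\alpha_{t-z_t,t-1}/(1-\gamma)^2$ passes through unchanged.

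For the critic coupling term we use norm equivalence. Since $\infnorm{Q}^2\le\|Q\|_\nu^2/\nu_{\min}=2W/\nu_{\min}$ when $W=\|Q\|_\nu^2/2$, we get $\frac{6\omega_t}{(1-\gamma)^3}\mathbb{E}\infnorm{Q_t-Q^{\pi_t}}^2\le \frac{12\omega_t}{(1-\gamma)^3\nu_{\min}}\mathbb{E}W_t$. The coefficient of $\mathbb{E}W_t$ is therefore $1-\alpha_t(1-\gamma_c)+12\omega_t/((1-\gamma)^3\nu_{\min})$.

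The main step is the cross-domination check, and this is where the ETD stepsize condition matters. Using $\omega_t\le(1-\gamma)^3(1-\gamma_c)\nu_{\min}\alpha_t/16$, we have $12\omega_t/((1-\gamma)^3\nu_{\min})\le \tfrac{3}{4}\alpha_t(1-\gamma_c)$. The coefficient of $\mathbb{E}W_t$ is then at most $1-\alpha_t(1-\gamma_c)/4$.

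It remains to compare this with the actor coefficient. The same condition gives $\omega_t(1-\gamma)/2\le \alpha_t(1-\gamma_c)/4$, since $(1-\gamma)^4\nu_{\min}/32\le 1/4$. Hence $1-\alpha_t(1-\gamma_c)/4\le 1-\omega_t(1-\gamma)/2$, and we can take $1-\omega_t(1-\gamma)/2$ as a common contraction factor for $\mathbb{E}[V_t+W_t]$, which yields the claim.

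The only delicate point is the constant bookkeeping. The constant $16$ in the ETD stepsize condition, rather than $20$, still suffices here because no additional $\alpha_t\alpha_{t-z_t,t-1}\mathbb{E}W_t$ perturbation appears in Proposition~\ref{prop:critic_ETD}, so the critic drift coefficient already includes the stated margin.
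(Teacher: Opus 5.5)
Your proposal is correct and follows the paper's proof essentially step for step. You add the expected squared actor drift to the ETD critic drift and convert the $\ell_\infty$ coupling term to $12\omega_t\mathbb{E}W_t/((1-\gamma)^3\nu_{\min})$; you then absorb it via $\omega_t\le(1-\gamma)^3(1-\gamma_c)\nu_{\min}\alpha_t/16$ to get a $1-\alpha_t(1-\gamma_c)/4$ factor, and dominate that by $1-\omega_t(1-\gamma)/2$. Your explicit check of this last comparison is something the paper only asserts.
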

\begin{proof}[Proof of Lemma \ref{big:coupled_ETD}]
We add the critic drift (Proposition \ref{prop:critic_ETD}) to the squared actor
drift inequality from Corollary \ref{cor:actor_ETD} (after taking expectation) to obtain:

\begin{align*}
&\mathbb{E}[\|Q^*-Q^{\pi_{t+1}}\|_\infty^2 + W_{t+1}]\\
&\le
\big(1-\omega_t(1-\gamma)\big)
\mathbb{E}\|Q^*-Q^{\pi_t}\|_\infty^2
+
\frac{6\omega_t}{(1-\gamma)^3}
\mathbb{E}\|Q_t-Q^{\pi_t}\|_\infty^2
+
\frac{5\omega_t}{(1-\gamma)^3}
\mathbb{E}\chi_t^2
\\
&\quad
+
\big(1-\alpha_t(1-\gamma_c)\big)
\mathbb{E}W_t
+
\frac{\omega_t(1-\gamma)}{2}
\mathbb{E}\|Q^*-Q^{\pi_t}\|_\infty^2
+
\frac{\omega_t(1-\gamma)}{2}
\mathbb{E}\chi_t^2+\frac{7\alpha_t\alpha_{t-z_t,t-1}}
{(1-\gamma)^2}\\
&\leq \left(1-\frac{\omega_t(1-\gamma)}{2}\right)\mathbb{E}\infnorm{Q^*-Q^{\pi_t}}^2 + \frac{6\omega_t}{(1-\gamma)^3}\mathbb{E}\chi_t^2\\
&\quad + \left(1-\alpha_t(1-\gamma_c)+\frac{12\omega_t}{(1-\gamma)^3\nu_{\min}}\right)\mathbb{E}W_t + \frac{7\alpha_t\alpha_{t-z_t,t-1}}
{(1-\gamma)^2}\\
&\leq \left(1-\frac{\omega_t(1-\gamma)}{2}\right)\mathbb{E}\infnorm{Q^*-Q^{\pi_t}}^2 + \left(1-\frac{\alpha_t(1-\gamma_c)}{4}\right)\mathbb{E}W_t\tag{$\omega_t\leq (1-\gamma)^3(1-\gamma_c)\nu_{\min}\alpha_t/16$}+\frac{6\omega_t}{(1-\gamma)^3}\mathbb{E}\chi_t^2+\frac{7\alpha_t\alpha_{t-z_t,t-1}}
{(1-\gamma)^2}.
\end{align*}

Due to the stepsize condition $\omega_t\leq (1-\gamma)^3(1-\gamma_c)\nu_{\min}\alpha_t/16$, the coefficient of $\mathbb{E}\infnorm{Q^*-Q^{\pi_t}}^2$ in the above is larger than that of $\mathbb{E}W_t$. Using ($1-\omega_t(1-\gamma)/2$) as the common coefficient for both $\mathbb{E}W_t$ and $\mathbb{E}\infnorm{Q^*-Q^{\pi_t}}^2$ gives the result.
\end{proof}

\subsubsection{Solving the Recursion}\label{app:main_ETD}

To prove Theorem~\ref{thm:main} for the ETD-based critic, we first verify the conditions for Lemma~\ref{big:coupled_ETD} by showing the existence of the constant threshold $\tilde C_r$. By choosing $\alpha,\omega$ such that 
\begin{align*}
C_r\leq \tilde{C}_r := \frac{(1-\gamma)^3(1-\gamma_c)\nu_{\min}}{16},
\end{align*}
the condition $\omega_t\leq(1-\gamma)^3(1-\gamma_c)\nu_{\min}\alpha_t/16$ is satisfied for any of the stepsize sequences considered in Theorem \ref{thm:main}. Thus, the stepsize conditions of Lemma \ref{big:coupled_ETD}, the coupled drift lemma, hold for all $t$. Moreover, note that since $\nu_{\min}\geq(1-\gamma)\pi_{b,\min}\mu_{b,\min}/(mn)$ (cf. Lemma \ref{lem:F_ETD}), the threshold $\tilde{C}_r$ is a constant depending only on $(n,m,p,\gamma,\pi_b)$, as claimed in Theorem \ref{thm:main}.

We now simplify the coupled drift from Lemma \ref{big:coupled_ETD}. It follows directly from \cite[Lemma 5.1]{chen2025approximate} that under Condition \ref{cond:temp} with parameter $\tau\ge 0$, we have that $\chi_t\le \tau/(t+h)^{\eta/2}=\tau\sqrt{\omega_t/\omega}$ for all $t$. Now note that for constant stepsizes, $\alpha_{t-z_t,t-1}=\alpha z_t\leq2\alpha_tz_t$. For diminishing stepsizes, let $h$ be large enough so that $z_t\leq (t+h)/2$ for all $t\geq K$ (possible since $z_t$ is logarithmic in $t$). This implies that
\begin{align*}
    \alpha_{t-z_t,t-1} \leq \frac{\alpha z_t}{(t-z_t+h)^\eta}
    \leq\frac{\alpha z_t}{\big((t+h)/2\big)^\eta}
    \leq 2\alpha_tz_t.
\end{align*}
The coupled drift from Lemma \ref{big:coupled_ETD} hence reduces to
\begin{align*}
\mathbb{E}\left[\|Q^*-Q^{\pi_{t+1}}\|_\infty^2 + W_{t+1}\right]
&\le
\left(1-\frac{\omega_t(1-\gamma)}{2}\right)
\mathbb{E}\left[\|Q^*-Q^{\pi_t}\|_\infty^2 + W_t\right]
+
\frac{6\tau^2\omega_t^2}{(1-\gamma)^3\omega}
+
\frac{14\alpha_t^2z_t}
{(1-\gamma)^2},
\end{align*}

for all $t\geq K$. Applying the above repeatedly, we have for all $T\geq K$, that
\begin{align*}
    \mathbb{E}\left[\infnorm{Q^*-Q^{\pi_T}}^2+W_T\right] \leq& \mathbb{E}\left[\infnorm{Q^*-Q^{\pi_K}}^2+W_K\right]\prod_{t=K}^{T-1}\left(1-\frac{\omega_t(1-\gamma)}{2}\right)\\
    &+ \frac{6\tau^2}{(1-\gamma)^3\omega}\sum_{t=K}^{T-1}\omega_t^2\prod_{u=t+1}^{T-1}\left(1-\frac{\omega_u(1-\gamma)}{2}\right) \\
    &+ \frac{14}{(1-\gamma)^2}\sum_{t=K}^{T-1}\alpha_t^2z_t\prod_{u=t+1}^{T-1}\left(1-\frac{\omega_u(1-\gamma)}{2}\right).
\end{align*}
We define notation $\Pi_{i,j}:=\prod_{u=i}^j\big(1-\omega_u(1-\gamma)/2\big)$. Since $z_t\leq z_T$ for all $t\leq T$, it follows that
\begin{align}
\mathbb{E}\left[\|Q^*-Q^{\pi_{T}}\|_\infty^2 + W_{T}\right]
\le&
\mathbb{E}\left[\infnorm{Q^*-Q^{\pi_K}}^2+W_K\right]\Pi_{K,T-1}  + \frac{7z_{T}}{(1-\gamma)^2}\sum_{t=K}^{T-1}\alpha_t^2\Pi_{t+1,T-1}\nonumber\\
&+ {\frac{6\tau^2}{(1-\gamma)^3\omega}\sum_{t=K}^{T-1}\omega_t^2\Pi_{t+1,T-1}}\nonumber\\
\leq&\frac{3\Pi_{K,T-1}}{2(1-\gamma)^2}  + \frac{14z_{T}}{(1-\gamma)^2}\sum_{t=K}^{T-1}\alpha_t^2\Pi_{t+1,T-1}+{\frac{6\tau^2}{(1-\gamma)^3\omega}\sum_{t=K}^{T-1}\omega_t^2\Pi_{t+1,T-1}}\label{eq:little_master_ETD},
\end{align}
where the last line follows since
\begin{align*}
    \infnorm{Q^*-Q^{\pi_K}}^2+W_K &= \infnorm{Q^*-Q^{\pi_K}}^2+\frac{1}{2}\|Q_K-Q^{\pi_K}\|_\nu^2\\
    &\leq \frac{1}{(1-\gamma)^2}+\frac{1}{2(1-\gamma)^2}\\
    &=\frac{3}{2(1-\gamma)^2},
\end{align*}
which in turn, holds since $Q_t(s,a)\in[0,1/(1-\gamma)]$ for all $(s,a)$ (cf. Lemma \ref{bounds_ETD} (2)).

Now note that $\omega_t=C_r\alpha_t$ and recall the notation $M_\mathrm{ETD} = (1-\gamma)^{-2}$. Substituting in \eqref{eq:little_master_ETD}, we have
\begin{equation}
    \mathbb{E}\left[\infnorm{Q^*-Q^{\pi_T}}^2+W_T\right] \leq E_T:= \frac{3\Pi_{K,T-1}}{2(1-\gamma)^2} + \left(\frac{6\tau^2}{(1-\gamma)^3\omega}+\frac{14M_\mathrm{ETD}z_T}{C_r^2}\right)\sum_{t=K}^{T-1}\omega_t^2\Pi_{t+1,T-1}.\label{eq:master_ETD}
\end{equation}

The remainder of the proof follows by bounding the term $E_T$ for the specific stepsizes.

\paragraph{(1) Constant stepsizes.}
When $\alpha_t = \alpha$ and $\omega_t = \omega$ for all $t$, we have,
\begin{align*}
    E_T &\leq \frac{3}{2(1-\gamma)^2}\left(1-\frac{\omega(1-\gamma)}{2}\right)^{T-K}+\left(\frac{6\tau^2}{(1-\gamma)^3\omega}+\frac{7M_\mathrm{ETD}z_\omega}{C_r^2}\right)\sum_{t=K}^{T-1}\omega^2\left(1-\frac{\omega(1-\gamma)}{2}\right)^{T-1-t}\\
    &\leq \frac{3}{2(1-\gamma)^2}\left(1-\frac{\omega(1-\gamma)}{2}\right)^{T-K}+\frac{12\tau^2}{(1-\gamma)^4}+\frac{28M_\mathrm{ETD}\omega z_\omega}{(1-\gamma)C_r^2},
\end{align*}
where the last step follows from the geometric sum $\sum_{t=0}^\infty\big(1-\omega(1-\gamma)/2\big)^t = {2}/{\big(\omega(1-\gamma)\big)}$. Substituting the above in \eqref{eq:master_ETD} proves Theorem \ref{thm:main} (1).

\paragraph{(2) Harmonic stepsizes.} When $\alpha_t = \alpha/(t+h)$ and $\omega_t=\omega/(t+h)$, we have by \cite[Lemma A.7]{chen2024lyapunov} that
\begin{align*}
    E_T \leq& \frac{3}{2(1-\gamma)^2}\left(\frac{K+h}{T+h}\right)^{\omega(1-\gamma)/2}\\
    &+ \left(\frac{6\tau^2}{(1-\gamma)^3\omega}+\frac{14M_\mathrm{ETD}z_T}{C_r^2}\right)\times\begin{dcases}
        \frac{8\omega^2}{\big(2-\omega(1-\gamma)\big)(T+h)^{\omega(1-\gamma)/2}} & \omega < \frac{2}{1-\gamma},\\
        \frac{\omega^2\log(T+h)}{T+h} & \omega = \frac{2}{1-\gamma},\\
        \frac{8e\omega^2}{\big(\omega(1-\gamma)-2\big)(T+h)} & \omega > \frac{2}{1-\gamma}.
    \end{dcases}
\end{align*} 
Substituting the above in \eqref{eq:master_ETD} proves Theorem \ref{thm:main} (2).

\paragraph{(3) Polynomial stepsizes.} When $\alpha_t = \alpha/(t+h)^\eta$ and $\omega_t = \omega/(t+h)^\eta$ for $\eta\in (0,1)$, we have by \cite[Lemma A.8]{chen2024lyapunov} that
\begin{align*}
    E_T\leq& \frac{3}{2(1-\gamma)^2}\exp\left[-\frac{\omega(1-\gamma)}{2(1-\eta)}\Big((T+h)^{1-\eta}-(K+h)^{1-\eta}\Big)\right] \\
    &+ \left(\frac{6\tau^2}{(1-\gamma)^3\omega}+\frac{14M_\mathrm{ETD}z_T}{C_r^2}\right)\frac{8\omega}{(T+h)^\eta}.
\end{align*}
Substituting the above in \eqref{eq:master_ETD} proves Theorem \ref{thm:main} (3).

\subsubsection{Proof of Corollary \ref{cor:complexity}}\label{app:complexity_ETD}
Let $\epsilon>0$ be sufficiently small. In order to ensure $\mathbb{E}\infnorm{Q^*-Q^{\pi_t}} < \epsilon$, it suffices to ensure $\mathbb{E}\infnorm{Q^*-Q^{\pi_t}}^2< \epsilon^2$ due to Jensen's inequality. To that end, we consider harmonic stepsizes with $\omega = 3/(1-\gamma) > 2/(1-\gamma)$ and
\begin{align*}
\alpha = \frac{16\omega}{(1-\gamma)^3(1-\gamma_c)\nu_{\min}} = \frac{48}{(1-\gamma)^4(1-\gamma_c)\nu_{\min}},
\end{align*}
which ensures $C_r\leq\tilde{C}_r$. We have by Theorem \ref{thm:main} that for all $T \geq K$,
\begin{align*}
    \mathbb{E}\infnorm{Q^*-Q^{\pi_T}}^2 \leq \frac{3}{(1-\gamma)^2}\left(\frac{K+h}{T+h}\right)^{3/2} + \frac{48e\tau^2\omega}{(1-\gamma)^3(T+h)}+ \underbrace{\frac{9\times 48^2 \times 216 \times 8e}{(1-\gamma)^{10}(1-\gamma_c)^2\nu_{\min}^2}\frac{z_T}{T+h}}_{D_T}.
\end{align*}
Observe also that $D_T$, the third term in the above, is asymptotically larger than the other two terms as $z_T$ is logarithmic in $T$. For having $\mathbb{E}\infnorm{Q^*-Q^{\pi_T}}^2=\mathcal{O}(\epsilon^{2})$, it hence suffices to have $D_T = \mathcal{O}(\epsilon^{2})$. 

We shall now simplify $D_T$. From the definition of $\gamma_c$ in Lemma \ref{lem:F_ETD}, it follows that $1-\gamma_c\geq {(1-\gamma)\pi_{b,\min}\mu_{b,\min}}/{2}$. Lemma \ref{lem:F_ETD} also gives us that $\nu_{\min}\geq (1-\gamma)\pi_{b,\min}\mu_{b,\min}/(mn)$. Further, we have by geometric mixing, that
\begin{align*}
z_T
&\leq
\frac{\log(4/\omega_T)}{\log(1/\sigma_b)}+1  \\
&=
\frac{\log\big(4(1-\gamma)(T+h)/3\big)}{\log(1/\sigma_b)}+1  \\
&=
\mathcal{O}\left(\frac{\log T}{\log(1/\sigma_b)}\right).
\end{align*}
Substituting these bounds in the expression for $D_T$, we have that
\begin{align*}
    D_T = \mathcal{O}\left(\frac{m^2n^2}{(1-\gamma)^{14}\pi_{b,\min}^4\mu_{b,\min}^4\log(1/\sigma_b)}\frac{\log T}{T}\right).
\end{align*}
Denoting the coefficient of $(\log T)/T$ in the above by $\Lambda$, it suffices to have $(\log T)/T = \mathcal{O}\left(\epsilon^2/\Lambda\right)$. This is ensured by
\begin{align*}
    T &= \mathcal{O}\left(\frac{\Lambda}{\epsilon^2}\log\left(\frac{\Lambda}{\epsilon^2}\right)\right) \\
    &= \mathcal{O}\left(\Lambda\frac{\log(1/\epsilon)}{\epsilon^2}\right) \\
    &= \mathcal{O}\left(\frac{m^2n^2}{(1-\gamma)^{14}\pi_{b,\min}^4\mu_{b,\min}^4\log(1/\sigma_b)}\frac{\log(1/\epsilon)}{\epsilon^2}\right).
\end{align*}

\subsection{Proofs of All Technical Lemmas}\label{app:terms_ETD}
The following two lemmas will be used extensively. 

\begin{lemma}\label{lem:target_shift_ETD}

For the policy iterates $\{\pi_t\}$ generated by Algorithm \ref{alg}, we have
\begin{align*} 
\|Q^{\pi_t}-Q^{\pi_{t+1}}\|_\infty
\le
\frac{\omega_t}{1-\gamma}
\big(
\|Q^*-Q^{\pi_t}\|_\infty
+
2\|Q_t-Q^{\pi_t}\|_\infty
+
\chi_t
\big),\quad\forall\,t\geq 0.
\end{align*}
\end{lemma}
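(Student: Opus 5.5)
This is the ETD analogue of Lemma~\ref{lem:target_shift}, and the same argument applies verbatim, since it uses only the actor update $\pi_{t+1}=(1-\omega_t)\pi_t+\omega_t\tilde{\pi}_t$ and the definitions of $Q_t$, $\chi_t$. Which critic produced $Q_t$ never enters. The plan is to split $\|Q^{\pi_t}-Q^{\pi_{t+1}}\|_\infty=\max(\delta_t,\delta_t')$, where $\delta_t=\max_{s,a}(Q^{\pi_t}-Q^{\pi_{t+1}})(s,a)$ and $\delta_t'=\max_{s,a}(Q^{\pi_{t+1}}-Q^{\pi_t})(s,a)$. We then bound each one-sided gap separately and use $\max(\delta_t,\delta_t')\le\delta_t+\delta_t'$. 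Since $\delta_t,\delta_t'$ could be negative, we first replace them with their positive parts, which only weakens the bound harmlessly.

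For $\delta_t$ we reuse Step~1 of the proof of Proposition~\ref{prop:actor} (equivalently Proposition~\ref{prop:actor_ETD}). We start from $Q^{\pi_{t+1}}\ge Q^{\pi_t}-\delta_t\mathbf{1}$ and apply $\mathcal{H}_{\pi_{t+1}}$, using its monotonicity and translation property. Lemma~\ref{lem:Bellman_Affine} gives $\mathcal{H}_{\pi_{t+1}}Q^{\pi_t}=(1-\omega_t)Q^{\pi_t}+\omega_t\mathcal{H}_{\tilde\pi_t}Q^{\pi_t}$. We then bound $\mathcal{H}Q^{\pi_t}-\mathcal{H}_{\tilde\pi_t}Q^{\pi_t}\le(2\gamma\|Q_t-Q^{\pi_t}\|_\infty+\chi_t)\mathbf{1}$ by inserting $Q_t$ and using the contraction of $\mathcal{H}$ and $\mathcal{H}_{\tilde\pi_t}$. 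Taking the maximum over entries and rearranging yields exactly \eqref{eq:delta_bound}: $\delta_t\le\omega_t(2\gamma\|Q_t-Q^{\pi_t}\|_\infty+\chi_t)/(1-\gamma)\le\frac{\omega_t}{1-\gamma}(2\|Q_t-Q^{\pi_t}\|_\infty+\chi_t)$.

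For $\delta_t'$ we argue in the opposite direction. From $Q^{\pi_{t+1}}=\mathcal{H}_{\pi_{t+1}}Q^{\pi_{t+1}}\le\mathcal{H}_{\pi_{t+1}}Q^{\pi_t}+\gamma\delta_t'\mathbf{1}$, Lemma~\ref{lem:Bellman_Affine} again expands $\mathcal{H}_{\pi_{t+1}}Q^{\pi_t}$. We then use the chain $\mathcal{H}_{\tilde\pi_t}Q^{\pi_t}\le\mathcal{H}Q^{\pi_t}\le\mathcal{H}Q^*=Q^*$, which holds because $Q^{\pi_t}\le Q^*$ and $\mathcal{H}$ is monotone. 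This gives $Q^{\pi_{t+1}}-Q^{\pi_t}\le\omega_t(Q^*-Q^{\pi_t})+\gamma\delta_t'\mathbf{1}$, hence $\delta_t'\le\omega_t\|Q^*-Q^{\pi_t}\|_\infty/(1-\gamma)$. Adding the two bounds gives the claim.

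No step here is a genuine obstacle. The only point needing care is that the whole argument is deterministic and pathwise, valid for every $t\ge0$, with no stepsize conditions. It must not use any boundedness of $Q_t$ specific to the IS critic; it uses only the contraction and monotonicity of the Bellman operators. So we simply cite the proof in Appendix~\ref{ap:pf:lem:target_shift} as applying unchanged.
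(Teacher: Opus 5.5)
Your proposal is correct and follows the paper's proof of Lemma~\ref{lem:target_shift} (Appendix~\ref{ap:pf:lem:target_shift}), which the paper reuses verbatim here, essentially line for line. You split $\|Q^{\pi_t}-Q^{\pi_{t+1}}\|_\infty$ into the two one-sided gaps, bound $\delta_t$ via \eqref{eq:delta_bound}, bound $\delta_t'$ via $\mathcal{H}_{\tilde\pi_t}Q^{\pi_t}\le\mathcal{H}Q^{\pi_t}\le\mathcal{H}Q^*=Q^*$, and add. Your positive-part remark is a small but genuine improvement: the paper's step $\max(\delta_t,\delta_t')\le\delta_t+\delta_t'$ is only valid when both gaps are nonnegative, and the conclusion survives because the derived upper bounds are nonnegative.
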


The statement and the proof of the above are identical to Lemma \ref{lem:target_shift}, which is proved in Appendix \ref{ap:pf:lem:target_shift}. We hence omit the proof here.

\begin{lemma}\label{bounds_ETD}
The following hold.
\begin{enumerate}[(1)]\label{lem:running_bounds_ETD}
    \item For any $Q\in\mathbb{R}^{mn}$, $y\in\mathcal{Y}, u\in\mathcal{U}$, and $\pi \in {(\Delta^{m})}^n$, we have
    \begin{align*}
        \infnorm{F_{\mathrm{ETD}}(Q, u, \pi)} \leq 1 + \gamma\|Q\|_\infty,\quad
        \|\bar{F}(Q, \pi)\|_\infty \leq 1 + \gamma\|Q\|_\infty.
    \end{align*}

    \item For all $t\geq 0$, $Q_t(s,a)\in[0,1/(1-\gamma)]$ for all $(s,a)$.

    \item For all $t_1,t_2\geq 0$ (assuming $t_1<t_2$), we have $\infnorm{Q_{t_1}-Q_{t_2}}\leq \frac{\alpha_{t_1,t_2-1}}{1-\gamma}$.
    
    \item $\max_{s\in\mathcal{S}}\mathrm{d_{TV}}\big(\pi_{t_1}(s),\pi_{t_2}(s)\big) \leq \omega_{t_1,t_2-1}$.
    
    \item For any $t_1,t_2\geq0$ (assuming $t_1<t_2$), we have $\infnorm{Q^{\pi_{t_1}}-Q^{\pi_{t_2}}} \leq \frac{2\omega_{t_1,t_2-1}}{(1-\gamma)^2}$.

\end{enumerate}
\end{lemma}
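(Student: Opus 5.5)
The plan is to prove the five items one at a time, directly from the ETD update. The only structural fact needed is that, for $\alpha_t\le 1$, the ETD critic update is a convex combination of bounded quantities. Items (4) and (5) involve only the actor, so they carry over from Lemma~\ref{bounds}.

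For item (1), I would write out $F_{\mathrm{ETD}}(Q,u,\pi)$ entrywise. The $(s_1,a_1)$ entry is $\mathcal{R}(s_1,a_1)+\gamma\sum_a\pi(a|s_2)Q(s_2,a)$, which has absolute value at most $1+\gamma\|Q\|_\infty$ because $\mathcal{R}\in[0,1]$ and $\pi(\cdot|s_2)$ is a probability vector. Every other entry equals $Q(s,a)$ and is bounded by $\|Q\|_\infty$. For $\bar F$, I would use the representation \eqref{eq:expression:barF}, $\bar F(Q,\pi)=(I-D)Q+D\mathcal{H}_\pi Q$, which is established in the proof of Lemma~\ref{lem:F} and shown above to coincide with the ETD mean operator. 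Each entry is then a convex combination, with weights $1-D_{(s,a)}$ and $D_{(s,a)}$, of $Q(s,a)$ and $[\mathcal{H}_\pi Q](s,a)$, and $|[\mathcal{H}_\pi Q](s,a)|\le 1+\gamma\|Q\|_\infty$.

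Item (1) is where I expect the main difficulty. The off-diagonal entries give only $\max(\|Q\|_\infty,\,1+\gamma\|Q\|_\infty)$. This equals $1+\gamma\|Q\|_\infty$ exactly when $\|Q\|_\infty\le 1/(1-\gamma)$, and the stated bound can fail for larger $\|Q\|$. So I would prove the bound in the max form and note that it reduces to the stated form in the regime $\|Q\|_\infty\le 1/(1-\gamma)$. By item (2), every iterate $Q_t$ to which the bound is applied lies in this regime.

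For item (2), I would argue by induction from $Q_0=\mathbf{0}$, assuming $\alpha_t\le 1$, which holds for sufficiently small stepsizes. Only the $(S_t,A_t)$ entry changes, and it becomes
\[
(1-\alpha_t)Q_t(S_t,A_t)+\alpha_t\Big(\mathcal{R}(S_t,A_t)+\gamma\sum_a\pi_{t+1}(a|S_{t+1})Q_t(S_{t+1},a)\Big).
\]
Suppose all entries of $Q_t$ lie in $[0,1/(1-\gamma)]$. Then the bracketed target lies in $[0,\,1+\gamma/(1-\gamma)]=[0,1/(1-\gamma)]$. The new entry is a convex combination of two numbers in this interval, so it stays in the interval.

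For item (3), I would bound a single step. We have $|Q_{t+1}(S_t,A_t)-Q_t(S_t,A_t)|=\alpha_t|\Delta_t|$. By item (2), $\Delta_t$ is the difference of two numbers in $[0,1/(1-\gamma)]$, so $|\Delta_t|\le 1/(1-\gamma)$. The triangle inequality over the steps $t_1,\dots,t_2-1$ then gives $\|Q_{t_1}-Q_{t_2}\|_\infty\le \alpha_{t_1,t_2-1}/(1-\gamma)$.

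Items (4) and (5) do not involve the critic at all. They use only $\pi_{t+1}=(1-\omega_t)\pi_t+\omega_t\tilde\pi_t$ and the Bellman equations. So I would reuse the proofs of Lemma~\ref{bounds} (4) and (5) verbatim: the per-step total variation bound $\omega_t$ yields (4), and the perturbation bound $\|Q^{\pi}-Q^{\pi'}\|_\infty\le\frac{\gamma}{1-\gamma}\|(\widehat P_\pi-\widehat P_{\pi'})Q^{\pi'}\|_\infty$, combined with $\|Q^{\pi'}\|_\infty\le 1/(1-\gamma)$, yields (5).
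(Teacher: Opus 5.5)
Your proposal is correct and follows essentially the same route as the paper: an entrywise bound for (1), induction from $Q_0=\mathbf{0}$ using the convex-combination form of the update for (2), the one-step bound $\alpha_t/(1-\gamma)$ plus the triangle inequality for (3), and reuse of the actor-only arguments of Lemma~\ref{bounds} for (4) and (5). Your caveat on (1) is justified: the paper's proof silently uses $\|Q\|_\infty\le 1+\gamma\|Q\|_\infty$, which fails once $\|Q\|_\infty>1/(1-\gamma)$ (it in fact concludes with the weaker $1+\|Q\|_\infty$), so your max-form bound is the right repair, since it reduces to the stated one on the iterates by (2) and thus covers every downstream use; likewise, your explicit assumption $\alpha_t\le 1$ states what the paper's induction in (2) uses implicitly.
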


The proof of Lemma \ref{bounds_ETD} is presented in Appendix \ref{app:iterates_ETD}.

\subsubsection{Proof of Lemma \ref{lem:noise_ETD}}\label{app:noise_ETD}

For any $t\geq K$, we have
\begin{align}
    &\left\langle Q_t - Q^{\pi_t}, F_\mathrm{ETD}(Q_t, U_t, \pi_{t+1}) - \bar{F}(Q_t, \pi_t) \right\rangle_\nu \nonumber\\
    =& \underbrace{\left\langle Q_{t-z_t} - Q^{\pi_{t-z_t}}, F_\mathrm{ETD}(Q_{t-z_t}, U_t, \pi_{t-z_t}) - \bar{F}(Q_{t-z_t}, \pi_{t-z_t}) \right\rangle_\nu}_{T_{21}}\nonumber\\
    &+ \underbrace{\left\langle Q_{t} - Q^{\pi_{t}}, F_\mathrm{ETD}(Q_t, U_t, \pi_{t+1}) - F_\mathrm{ETD}(Q_{t-z_t}, U_t, \pi_{t+1}) \right\rangle_\nu}_{T_{22}}\nonumber\\
    &+\underbrace{\left\langle Q_{t} - Q^{\pi_{t}}, \bar{F}(Q_{t-z_t}, \pi_{t-z_t}) - \bar{F}(Q_{t}, \pi_{t-z_t}) \right\rangle_\nu}_{T_{23}}\nonumber\\
    &+ \underbrace{\left\langle Q_{t} - Q^{\pi_{t}}, F_\mathrm{ETD}(Q_{t-z_t}, U_t ,\pi_{t+1}) - F_\mathrm{ETD}(Q_{t-z_t}, U_t, \pi_{t-z_t}) \right\rangle_\nu}_{T_{24}}\nonumber\\
    &+\underbrace{\left\langle Q_{t} - Q^{\pi_{t}}, \bar{F}(Q_{t}, \pi_{t-z_t}) - \bar{F}(Q_{t}, \pi_t) \right\rangle_\nu}_{T_{25}}\nonumber\\
    &+\underbrace{\left\langle (Q_{t} - Q^\pi_{t})-(Q_{t-z_t} - Q^{\pi_{t-z_t}}), F_\mathrm{ETD}(Q_{t-z_t}, U_t, \pi_{t+1}) - \bar{F}(Q_{t-z_t}, \pi_{t-z_t})
    \right\rangle_\nu}_{T_{26}}.\label{eq:T2_decomposition_ETD}
\end{align}

We next bound each term on the right-hand side of the previous inequality.

\paragraph{The term $T_{21}$.}
    Let $\mathcal{F}_t$ be the sigma-algebra generated by $\{(S_i,A_i)\}_{0\leq i\leq t}$. 
Since $Q_{t-z_t}$ and $\pi_{t-z_t}$ are $\mathcal{F}_{t-z_t}$-measurable, we apply the tower property to obtain
\begin{align}
\mathbb{E}T_{21}
=\,&
\mathbb{E}\left[\langle
Q_{t-z_t}-Q^{\pi_{t-z_t}},
F_{\mathrm{ETD}}(Q_{t-z_t}, U_t, \pi_{t-z_t})-\bar{F}(Q_{t-z_t}, \pi_{t-z_t})
\rangle_\nu\right] \nonumber\\
=\,&
\mathbb{E}\left[\langle
Q_{t-z_t}-Q^{\pi_{t-z_t}},
\mathbb{E}[F_\mathrm{ETD}(Q_{t-z_t}, U_t, \pi_{t-z_t})|\mathcal{F}_{t-z_t}]
-\bar{F}(Q_{t-z_t}, \pi_{t-z_t})
\rangle_\nu\right]\nonumber\\
\leq \,&\mathbb{E}\left[\|Q_{t-z_t}-Q^{\pi_{t-z_t}}\|_\nu
\|\mathbb{E}[F_\mathrm{ETD}(Q_{t-z_t}, U_t, \pi_{t-z_t})|\mathcal{F}_{t-z_t}]
-\bar{F}(Q_{t-z_t}, \pi_{t-z_t})\|_\nu\right]\label{eq:T21_decomposition_ETD}
\end{align}
Since $Q_{t-z_t}(s,a)\in[0,1/(1-\gamma)]$ for all $(s,a)$ (cf. Lemma \ref{bounds_ETD}, (2)), we have that $\|Q_{t-z_t}-Q^{\pi_{t-z_t}}\|_\nu \leq 1/(1-\gamma)$. 

As for the term $\|\mathbb{E}[F_{\mathrm{IS}}(Q_{t-z_t}, Y_t, \pi_{t-z_t})|\mathcal{F}_{t-z_t}]
-\bar F_{\mathrm{IS}}(Q_{t-z_t}, \pi_{t-z_t})\|_\nu$ on the right-hand side of \eqref{eq:T21_decomposition_ETD}, we have
\begin{align*}
    &\|\mathbb{E}[F_{\mathrm{ETD}}(Q_{t-z_t}, Y_t, \pi_{t-z_t})|\mathcal{F}_{t-z_t}]
-\bar{F}(Q_{t-z_t}, \pi_{t-z_t})\|_\nu\\
\leq \,&\max_{s\in\mathcal{S}}\sum_{y=(s_1,a_1,s_2,a_2)}\left|P^{z_t-1}(s,s_1)-\mu_b(s_1)\right|\pi_b(a_1|s_1)p(s_2\mid s_1,a_1)\pi_b(a_2|s_2)\|F_{\mathrm{ETD}}(Q_{t-z_t},y,\pi_{t-z_t})\|_\infty\\
\leq \,&\big(1+\gamma\|Q_{t-z_t}\|_\infty\big)\max_{s\in\mathcal{S}}\sum_{s_1\in\mathcal{S}}\left|P^{z_t-1}(s,s_1)-\mu_b(s_1)\right|\tag{Lemma \ref{bounds_ETD} (1)}\\
\leq& \frac{1}{1-\gamma} \max_{s\in\mathcal{S}}\sum_{s_1\in\mathcal{S}}\left|P^{z_t-1}(s,s_1)-\mu_b(s_1)\right|\tag{Lemma \ref{bounds_ETD} (2)}
\\
\leq \,&\frac{\omega_t}{1-\gamma},
\end{align*}
where the last inequality follows from the definition of $z_t$, where we recall that $z_t = \min\{k\geq 1\mid \max_{s\in\mathcal{S}}\mathrm{d_{TV}}\big(P_{\pi_b}^{k-1}(s,\cdot), \mu_b(\cdot)\big)\leq\omega_t/2\}$, which exists and is finite due to geometric mixing.

Substituting in \eqref{eq:T21_decomposition_ETD}, we have
\begin{align}
    \mathbb{E}T_{21}\leq \,&\frac{\omega_t}{(1-\gamma)^2}.\label{eq:T21_bound_ETD}
\end{align}

\paragraph{The term $T_{22}+T_{23}$.}
By the Cauchy-Schwarz inequality, we have
\begin{align}
T_{22}+T_{23}
\leq \,&
\|Q_t-Q^{\pi_t}\|_\nu
\|\bar{F}(Q_{t-z_t}, \pi_{t-z_t})-\bar{F}(Q_t, \pi_{t-z_t})\|_\nu
\nonumber\\
&+\|Q_t-Q^{\pi_t}\|_\nu
\|F_{\mathrm{ETD}}(Q_t, U_t, \pi_{t+1})-F_{\mathrm{ETD}}(Q_{t-z_t}, U_t, \pi_{t+1})\|_\nu\label{eq:T22+T23_ETD}
\end{align}
Since $\bar{F}(\cdot,\pi_{t-z_t})$ is a contraction mapping with respect to $\|\cdot\|_\infty$ (cf. Lemma \ref{lem:F_ETD} (1)), we have
\begin{align*}
    \|\bar{F}(Q_{t-z_t}, \pi_{t-z_t})-\bar{F}(Q_t, \pi_{t-z_t})\|_\nu &\leq \|\bar{F}(Q_{t-z_t}, \pi_{t-z_t})-\bar{F}(Q_t, \pi_{t-z_t})\|_\infty\\
    &\leq \|Q_t-Q_{t-z_t}\|_\infty\\
    &\leq \frac{\alpha_{t-z_t,t-1}}{1-\gamma} \tag{Lemma \ref{bounds_ETD}, (3)}.
\end{align*}
Since $F_{\mathrm{ETD}}(\cdot,y,\pi)$ is $1$-Lipschitz continuous with respect to $\|\cdot\|_\infty$ (cf. Lemma \ref{lem:F_ETD}), we similarly have 
\begin{align*}
    \|F_{\mathrm{ETD}}(Q_t, U_t, \pi_{t+1})-F_{\mathrm{ETD}}(Q_{t-z_t}, U_t, \pi_{t+1})\|_\nu\leq \frac{\alpha_{t-z_t,t-1}}{1-\gamma}.
\end{align*}
Substituting the previous two inequalities in \eqref{eq:T22+T23_ETD}, we obtain
\begin{align}
    T_{22}+T_{23}\leq \,&\frac{2\alpha_{t-z_t,t-1}}{1-\gamma}\|Q_t-Q^{\pi_t}\|_\nu \nonumber\\
\leq \,&\frac{2\alpha_{t-z_t,t-1}}{(1-\gamma)^2},\label{eq:T2223_bound_ETD}
\end{align}
where the last inequality follows since $Q^{\pi_t}(s,a)\in[0,1/(1-\gamma)]$ for all $(s,a)$ (cf. Lemma \ref{bounds_ETD}, (2)).

\paragraph{The term $T_{24}+T_{25}$.}
By the Cauchy-Schwarz inequality, we have
\begin{align}
    T_{24}+T_{25}\leq \,&\|Q_t-Q^{\pi_t}\|_\nu
\|\bar{F}(Q_t, \pi_{t-z_t}) - \bar{F}(Q_t, \pi_t)\|_\nu\nonumber\\
&+\|Q_t-Q^{\pi_t}\|_\nu\|F_\mathrm{ETD}(Q_{t-z_t}, U_t, \pi_{t+1}) - F_\mathrm{ETD}(Q_{t-z_t}, U_t, \pi_{t-z_t}) \|_\nu\label{eq:T2425:decomposition_ETD}
\end{align}
For the term $\|\bar{F}(Q_t, \pi_{t-z_t}) - \bar{F}(Q_t, \pi_t)\|_\nu$ on the right-hand side of \eqref{eq:T2425:decomposition_ETD}, we have by Lemma \ref{lem:F_ETD} (3), that
\begin{align*}
    \|\bar{F}(Q_t, \pi_{t-z_t}) - \bar{F}(Q_t, \pi_t)\|_\nu\leq\,&\|\bar{F}(Q_t, \pi_{t-z_t}) - \bar{F}(Q_t, \pi_t)\|_\infty\\
    \leq \,&2\omega_{t-z_t,t-1}\|Q_t\|_\infty\\
    \leq \,&\frac{2\omega_{t-z_t,t-1}}{1-\gamma},
\end{align*}
where the last inequality follows since $\infnorm{Q_t}\leq1/(1-\gamma)$ (cf. Lemma \ref{bounds_ETD}, (2)).

For the term $\|F_\mathrm{ETD}(Q_{t-z_t}, U_t, \pi_{t+1}) - F_\mathrm{ETD}(Q_{t-z_t}, U_t, \pi_{t-z_t}) \|_\nu$ on the right-hand side of \eqref{eq:T2425:decomposition_ETD}, we similarly have by Lemma \ref{lem:F_ETD} (4), that
\begin{align*}
    \|F_\mathrm{ETD}(Q_{t-z_t}, U_t, \pi_{t+1}) - F_\mathrm{ETD}(Q_{t-z_t}, U_t, \pi_{t-z_t}) \|_\nu\leq \frac{2\omega_{t-z_t,t}}{1-\gamma}.
\end{align*}

Substituting the previous two inequalities in \eqref{eq:T2425:decomposition_ETD}, we obtain
\begin{align}
    T_{24}+T_{25}\leq \,&\frac{2\omega_{t-z_t, t}}{1-\gamma}\|Q_t-Q^{\pi_t}\|_\nu\nonumber\\
    \leq & \frac{2\omega_{t-z_t,t}}{(1-\gamma)^2},\label{eq:T2425_bound_ETD}
\end{align}
where the last line follows since $Q_t(s,a)\in[0,1/(1-\gamma)]$ for all $(s,a)$ (cf. Lemma \ref{bounds_ETD} (2)).

\paragraph{The term $T_{26}$.}
By the Cauchy–Schwarz inequality, we have
\begin{align}\label{eq:T25_breakup_ETD}
    T_{26}
\le\,&
\left\|
(Q_t - Q^{\pi_t}) - (Q_{t-z_t}-Q^{\pi_{t-z_t}})
\right\|_\nu
\;
\left\|
F_\mathrm{ETD}(Q_{t-z_t}, Y_t, \pi_{t+1}) - \bar{F}(Q_{t-z_t}, \pi_{t-z_t})
\right\|_\nu\nonumber\\
\leq \,&\big(\|
Q_t - Q_{t-z_t}\|_\nu+\|Q^{\pi_t}-Q^{\pi_{t-z_t}})
\|_\nu\big)
\;
\left\|F_\mathrm{ETD}(Q_{t-z_t}, Y_t, \pi_{t+1}) - \bar{F}(Q_{t-z_t}, \pi_{t-z_t})\right\|_\nu
\end{align}
For the first term on the right-hand side of \eqref{eq:T25_breakup_ETD}, we have by Lemma \ref{bounds_ETD} (3) and (5) that
\begin{align}
\|Q_t-Q_{t-z_t}\|_\nu + \|Q^{\pi_t}-Q^{\pi_{t-z_t}}\|_\nu 
\le\,&\|Q_t-Q_{t-z_t}\|_\infty + \|Q^{\pi_t}-Q^{\pi_{t-z_t}}\|_\infty\nonumber\\
\leq &\frac{\alpha_{t-z_t,t-1}}{1-\gamma}
+
\frac{2\omega_{t-z_t,t-1}}{(1-\gamma)^2}.\label{eq:T25_first_factor_ETD}
\end{align}
For the second term on the right-hand side of \eqref{eq:T25_breakup_ETD}, we have by Lemma \ref{bounds_ETD} (1), that
\begin{align}
&\|F_\mathrm{ETD}(Q_{t-z_t}, U_t, \pi_{t+1}) - \bar{F}(Q_{t-z_t}, \pi_{t-z_t})\|_\nu \\&\leq \|F_\mathrm{ETD}(Q_{t-z_t}, U_t, \pi_{t+1})\|_\infty+\|\bar{F}(Q_{t-z_t}, \pi_{t-z_t})\|_\infty \nonumber\\
&\le 2 + 2\gamma\infnorm{Q_{t-z_t}}\nonumber\\
&\leq \frac{2}{1-\gamma}, \label{T25_second_factor_ETD}
\end{align}
where the last line follows since $\infnorm{Q_{t-z_t}}\leq 1/(1-\gamma)$ (cf. Lemma \ref{bounds_ETD} (2)).

Substituting the bounds \eqref{eq:T25_first_factor_ETD} and \eqref{T25_second_factor_ETD} in \eqref{eq:T25_breakup_ETD}, we have
\begin{align}
    T_{26}\leq \,&\frac{2\alpha_{t-z_t,t-1}}{(1-\gamma)^2}+\frac{4\omega_{t-z_t,t-1}}{(1-\gamma)^3}.\label{eq:T26_bound_ETD}
\end{align}

\paragraph{Combining everything together.} Substituting \eqref{eq:T21_bound_ETD}, \eqref{eq:T2223_bound_ETD}, \eqref{eq:T2425_bound_ETD} and \eqref{eq:T26_bound_ETD} in \eqref{eq:T2_decomposition_ETD}, we obtain
\begin{align*}
    \mathbb{E}\left\langle Q_t - Q^{\pi_t}, F_\mathrm{ETD}(Q_t, Y_t, \pi_{t+1}) - \bar{F}(Q_t, \pi_t) \right\rangle\leq \,&\frac{\omega_t}{(1-\gamma)^2} + \frac{2\alpha_{t-z_t,t-1}}{(1-\gamma)^2} + \frac{2\omega_{t-z_t,t}}{(1-\gamma)^2}\\
&+\frac{2\alpha_{t-z_t,t-1}}{(1-\gamma)^2}+\frac{2\omega_{t-z_t,t-1}}{(1-\gamma)^3}\\
&\leq \frac{5\alpha_{t-z_t,t-1}}{(1-\gamma)^2},
\end{align*}
where the last inequality follows from the stepsize condition $\omega_t\leq (1-\gamma)^3(1-\gamma_c)\nu_{\min}\alpha_t/16$. It now follows that
\begin{align*}
    \mathbb{E}T_2\leq \frac{5\alpha_t\alpha_{t-z_t,t-1}}{(1-\gamma)^2}.
\end{align*}

\subsubsection{Proof of Lemma \ref{lem:residual_ETD}}\label{app:res_ETD}

Observe that
\begin{align*}
T_4
&=
\frac{1}{2}
\|(Q_{t+1}-Q_t)+(Q^{\pi_t}-Q^{\pi_{t+1}})\|_\nu^2\\
&\le
\|Q_{t+1}-Q_t\|_\infty^2
+
\|Q^{\pi_t}-Q^{\pi_{t+1}}\|_\infty^2\\
&\leq \frac{\alpha_t^2}{(1-\gamma)^2} + \frac{4\omega_t^2}{(1-\gamma)^4} \tag{Lemma \ref{bounds_ETD} (3) and (5)}\\
&\leq \frac{2\alpha_t^2}{(1-\gamma)^2} \tag{$\omega_t\leq (1-\gamma)^3(1-\gamma_c)\nu_{\min}\alpha_t/16$}.
\end{align*}

\subsubsection{Proof of Lemma \ref{lem:running_bounds_ETD}}
\label{app:iterates_ETD}

\begin{enumerate}[(1)]
    \item 
By definition of the operator $F_\mathrm{ETD}$, for any $Q \in \mathbb{R}^{mn}$, $u = (s_1,a_1,s_2) \in \mathcal{U}$, and $\pi$, we have for any $(s,a)$ that
\begin{align*}
    \left|[F_{\mathrm{ETD}}(Q, y, \pi)](s,a)\right|\leq \,&(1-\mathds{1}_{\{(s,a)= (s_1,a_1)\}})|Q(s,a)|\\
    &+\mathds{1}_{\{(s,a)=(s_1,a_1)\}}\left|\mathcal{R}(s_1,a_1) + \gamma\sum_{a\in\mathcal{A}}\pi(a_\mid s_2)Q(s_2,a)\right|\\
    \leq \,&(1-\mathds{1}_{\{(s,a)= (s_1,a_1)\}})\|Q\|_\infty+\mathds{1}_{\{(s,a)=(s_1,a_1)\}}\left(1+ \gamma\|Q\|_\infty\right)\\
    \leq \,&1+ \gamma\|Q\|_\infty.
\end{align*}
It follows that
\begin{align*}
    \|F_{\mathrm{ETD}}(Q, u, \pi)\|_\infty \leq 1+ \|Q\|_\infty.
\end{align*}

Since $\bar{F}(Q,\pi)=\mathbb{E}_{U\sim\mu_U}F_\mathrm{ETD}(Q,U,\pi)$, it follows immediately from the above that 
\begin{align*}\|\bar{F}(Q,\pi)\|_\infty \leq 1+ \|Q\|_\infty.\end{align*}

\item We proceed via induction. Clearly $Q_{0}(s,a)=0\in[0,1/(1-\gamma)]$ for all $(s,a)$. If $Q_{t}(s,a)\in[0,1/(1-\gamma)]$ for all $(s,a)$ for some $t\geq 0$, then by the ETD-based critic update in Algorithm \ref{alg}, Lines 8 \& 10, we have that $Q_{t+1}(s,a) = Q_t(s,a)\in[0,1/(1-\gamma)]$ for all $(s,a)\neq (S_t, A_t)$ and 
\begin{align*}
    Q_{t+1}(S_t,A_t) &= (1-\alpha_t)Q_{t+1}(S_t,A_t) + \alpha_t\left(\mathcal{R}(S_t,A_t) + \gamma\sum_{a\in\mathcal{A}}\pi_{t+1}(a\mid S_{t+1})Q_t(S_{t+1},a)\right)\\
    &\leq \frac{1-\alpha_t}{1-\gamma}+\alpha_t\left(1+\frac{\gamma}{1-\gamma}\right)\\
    &=\frac{1}{1-\gamma}.
\end{align*}
Moreover, since $\mathcal{R}(S_t,A_t), \pi_{t+1}(a\mid S_{t+1}), Q(s, a), \alpha_t \geq 0$ for all $(s,a)$, it follows that $Q_{t+1}(S_t,A_t) \geq 0$. This concludes the induction. 

\item Note that for any $t\geq 0$, we have 
\begin{align*}
    \infnorm{Q_{t+1}-Q_t} = \alpha_t\left|\mathcal{R}(S_t,A_t)+\gamma\sum_{a\in\mathcal{A}}\pi(a\mid S_{t+1})Q_t(S_{t+1},a)-Q_t(S_t,A_t)\right|\leq \frac{\alpha_t}{1-\gamma},
\end{align*}
because $\mathcal{R}(S_t,A_t)\in[0,1]$ and $Q_t(s,a)\in[0,1/(1-\gamma)]$ for all $(s,a)$. Repeatedly applying from $t_1$ to $t_2-1$ yields the result.

\end{enumerate}

Items (4), (5), and their proofs are identical to Lemma \ref{lem:running_bounds} (4), (5), which are proved in Appendix \ref{ap:pf:bounds}. We hence omit the proofs here.

\subsubsection{Proof of Lemma \ref{lem:F_ETD}}\label{app:F_ETD}

\begin{enumerate}[(1)]

    \item The result and its proof are identical to Lemma \ref{lem:F} (1), which is proved in Appendix \ref{app:F_IS}. The proof is hence omitted. 

    \item 
    For any $Q,Q'\in\mathbb{R}^{mn}, u = (s_1,a_1,s_2)\in\mathcal{U}$ and policy $\pi\in{(\Delta^m)}^n$, we have for any $(s,a)$ that
    \begin{align*}
        &|[F_{\mathrm{ETD}}(Q,u,\pi)](s,a)-[F_{\mathrm{ETD}}(Q',u,\pi)](s,a)|\\
        &\leq \,(1-\mathds{1}_{\{(s,a)=(s_1,a_1)\}})|Q(s,a)-Q'(s,a)|\\
        &\quad+\mathds{1}_{\{(s,a)=(s_1,a_1)\}}\sum_{a\in\mathcal{A}}\pi_{t+1}(a\mid s_2)|Q(s_2,a)-Q'(s_2,a)|\\
        &\leq \,\|Q-Q'\|_\infty.
    \end{align*}
    Therefore, we have
    \begin{align*}
        \|F_{\mathrm{ETD}}(Q,u,\pi)-F_{\mathrm{ETD}}(Q',u,\pi)\|_\infty
        \leq \|Q-Q'\|_\infty.
    \end{align*}

    \item The result and its proof are identical to Lemma \ref{lem:F} (3), which is proved in Appendix \ref{app:F_IS}. The proof is hence omitted. 

    \item For any $Q\in\mathbb{R}^{mn}$ and $u = (s_1,a_1,s_2)\in\mathcal{U}$, we have 
    \begin{align*}
    &[F_{\mathrm{ETD}}(Q, u, \pi_{t_1})-F_{\mathrm{ETD}}(Q, u, \pi_{t_2})](s,a)\\ 
    &= \mathds{1}_{\{(s,a)=(s_1, a_1)\}}\gamma\left(\sum_{a\in\mathcal{A}}\pi_{t_1}(a\mid s_2)Q(s_2, a)-\sum_{a\in\mathcal{A}}\pi_{t_2}(a\mid s_2)Q(s_2, a)\right).
    \end{align*}
    Therefore, we have
    \begin{align*}
        \infnorm{F_{\mathrm{ETD}}(Q, u, \pi_{t_1})-F_{\mathrm{ETD}}(Q, u, \pi_{t_2})} 
        &\leq \gamma\|\pi_{t_1}-\pi_{t_2}\|_1\infnorm{Q}\\
        &\leq 2\max_{s}\mathrm{d_{TV}}\big(\pi_{t_1}(s), \pi_{t_2}(s)\big)\infnorm{Q}.
    \end{align*}
    The result then follows by combining the above with Lemma \ref{bounds_ETD} (4).

\end{enumerate}

\section{Discussion of Our Assumption}\label{app:assum}

In this section, we first justify the minimality of Assumption~\ref{assum:behavior} and discuss related assumptions in the literature.

\begin{proposition}\label{prop:minimal}
Assumption~\ref{assum:behavior} is minimal for state-space exploration in the following sense: if no policy induces an irreducible state trajectory on the full state space $\mathcal S$, then no algorithm based on a single trajectory of Markovian samples can visit every state infinitely often.
\end{proposition}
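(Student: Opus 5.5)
We argue by contraposition. Suppose some (possibly history-dependent, randomized) algorithm generates a single trajectory $\{(S_t,A_t)\}_{t\ge0}$ that, with positive probability, visits every state infinitely often. We will construct a stationary deterministic or randomized policy $\pi$ whose induced chain on $\mathcal S$ is irreducible. This contradicts the hypothesis that no such policy exists.

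\textbf{Step 1: a reachability graph.} Define a directed graph $\mathcal G$ on $\mathcal S$. Put an edge $s\to s'$ whenever there exists an action $a$ with $p(s'\mid s,a)>0$. Any sample trajectory satisfies $S_{t+1}\sim p(\cdot\mid S_t,A_t)$ almost surely, so every transition actually realized is an edge of $\mathcal G$. Fix a realization that visits every state infinitely often; such realizations exist with positive probability. For any ordered pair $(s,s')$, pick a visit to $s$ and a later visit to $s'$. The segment of the trajectory between them is a directed path from $s$ to $s'$ in $\mathcal G$. Hence $\mathcal G$ is strongly connected. The only care needed here is that ``almost surely'' transitions lie in $\mathcal G$. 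Since there are countably many time steps and finitely many pairs, we can discard a null set.

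\textbf{Step 2: from strong connectivity to an irreducible stationary policy.} Take $\pi$ to be the uniform policy, $\pi(a\mid s)=1/m$ for all $(s,a)$. Then $P_\pi(s,s')=\frac1m\sum_a p(s'\mid s,a)>0$ exactly when $s\to s'$ is an edge of $\mathcal G$. The transition graph of $P_\pi$ therefore coincides with $\mathcal G$, which is strongly connected. This means the chain induced by $\pi$ is irreducible, contradicting the assumption that no policy induces an irreducible chain. The same argument shows that any full-support policy works, consistent with the remark following Assumption~\ref{assum:behavior}.

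\textbf{Main obstacle.} The substantive point is the formalization in Step 1: ``no algorithm can visit every state infinitely often'' must be read as ``with probability one, some state is visited only finitely often.'' The argument then has to handle arbitrary adaptive, non-Markovian action selection. This is dealt with by noting that adaptivity affects only which edges are taken, never which edges exist. Consequently, the support of every realized path lies in $\mathcal G$ regardless of the algorithm, and the rest is a purely graph-theoretic statement.
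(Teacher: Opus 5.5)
Your proof is correct and takes essentially the same route as the paper. Both build the reachability graph $\mathcal{G}$, argue that a trajectory visiting every state infinitely often forces $\mathcal{G}$ to be strongly connected, and note that any positive policy (you take the uniform one) then induces an irreducible chain; your discarding of the null set of zero-probability transitions and your almost-sure formalization add rigor that the paper's sketch leaves implicit.
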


\begin{proof}
Consider the directed graph $\mathcal{G}$ on $\mathcal{S}$, where there is an edge from $s$ to $s'$ if there exists an action $a\in\mathcal{A}$ such that $p(s'\mid s,a)>0$. If $\mathcal{G}$ is strongly connected, then any positive policy\footnote{We call a policy $\pi$ positive if $\pi(a\mid s)>0$ for all $s\in\mathcal{S}$ and $a\in\mathcal{A}$} induces an irreducible Markov chain on $\mathcal{S}$. Therefore, if no policy induces an irreducible state trajectory, then $\mathcal{G}$ is not strongly connected.

Now consider any algorithm that generates a single trajectory $\{S_t,A_t\}_{t\geq0}$. If this trajectory visits every state infinitely often, then for any two states $s,s'\in\mathcal{S}$, there exist times $t_1<t_2$ such that $S_{t_1}=s$ and $S_{t_2}=s'$. The realized transitions from time $t_1$ to time $t_2$ form a directed path from $s$ to $s'$ in $\mathcal{G}$. Since every state is visited infinitely often, the same argument gives a directed path from $s'$ to $s$. Hence $\mathcal{G}$ must be strongly connected, which contradicts the fact that $\mathcal{G}$ is not strongly connected. Therefore, no single-trajectory algorithm can generate a sample path that visits every state infinitely often.
\end{proof}

We next show that if the behavior policy is positive, then Assumption~\ref{assum:behavior} implies that the Markov chain induced by the behavior policy is irreducible.

\begin{lemma}\label{lem:positive_equivalence}
The following are equivalent:
\begin{enumerate}[(1)]
\item There exists a policy $\pi$ whose induced state trajectory is irreducible.
\item Every positive policy $\pi_b$ induces an irreducible state trajectory.
\end{enumerate}
\end{lemma}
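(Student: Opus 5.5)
The plan is to reduce both statements to a property of the directed transition graph $\mathcal{G}$ on $\mathcal{S}$ used in the proof of Proposition~\ref{prop:minimal}: there is an edge $s\to s'$ if $p(s'\mid s,a)>0$ for some $a\in\mathcal{A}$. We will show that (1) and (2) are each equivalent to strong connectivity of $\mathcal{G}$.

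\textbf{Step 1: a single-step comparison.} For any policy $\pi$, the state transition matrix is $P_\pi(s,s')=\sum_a \pi(a\mid s)p(s'\mid s,a)$. Hence $P_\pi(s,s')>0$ implies that some $a$ has $p(s'\mid s,a)>0$, which is an edge of $\mathcal{G}$. If $\pi$ is positive, the converse also holds: any edge $s\to s'$ of $\mathcal{G}$ gives $P_\pi(s,s')\ge \pi(a\mid s)p(s'\mid s,a)>0$. So the support graph of $P_\pi$ is a subgraph of $\mathcal{G}$ for every $\pi$, and it equals $\mathcal{G}$ when $\pi$ is positive.

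\textbf{Step 2: (1) $\Rightarrow$ (2).} Suppose $\pi$ induces an irreducible chain. Then for every pair $s,s'$ there is a path of positive-probability transitions under $P_\pi$. By Step 1, this path lies in $\mathcal{G}$, so $\mathcal{G}$ is strongly connected. Now let $\pi_b$ be any positive policy. By Step 1, the support graph of $P_{\pi_b}$ is exactly $\mathcal{G}$, so every $s'$ is reachable from every $s$ with positive probability under $P_{\pi_b}$. That is, $P_{\pi_b}$ is irreducible.

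\textbf{Step 3: (2) $\Rightarrow$ (1).} Positive policies exist; for example, the uniform policy $\pi(a\mid s)=1/m$. Applying (2) to it gives the required policy.

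\textbf{Main obstacle.} The only subtle point is that in Step 2 irreducibility must be transferred through multi-step paths, not single transitions. This is handled by applying Step 1 edge by edge along the path, so the argument is essentially bookkeeping. As a corollary noted in the remark after Assumption~\ref{assum:behavior}, the finite irreducible chain under $\pi_b$ then has a unique stationary distribution with $\mu_{b,\min}>0$, by standard Perron--Frobenius theory.
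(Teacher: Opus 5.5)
Your proof is correct and is essentially the paper's argument in graph language. The paper proves the entrywise domination $P_{\pi_b}\ge \alpha P_\pi$, hence $P_{\pi_b}^T\ge \alpha^T P_\pi^T$; this is your support containment ``support of $P_\pi\subseteq\mathcal{G}=$ support of $P_{\pi_b}$'' in quantitative form. Routing through $\mathcal{G}$ additionally records that irreducibility is equivalent to strong connectivity of $\mathcal{G}$, and you rightly make explicit that (2)$\Rightarrow$(1) needs some positive policy to exist (e.g., the uniform one), which the paper simply calls immediate.
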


\begin{proof}
The implication (2)$\Rightarrow$(1) is immediate. For (1)$\Rightarrow$(2), suppose that $\pi$ induces an irreducible Markov chain on $\mathcal{S}$, and let $\pi_b$ be any positive policy. Since $\pi_b(a\mid s)>0$ for all $(s,a)$ and the state-action space is finite, there exists $\alpha\in(0,1)$ such that
$\pi_b(a\mid s)\geq \alpha \pi(a\mid s)$ for all $(s,a)$. Therefore, for any $s,s'\in\mathcal{S}$,
\begin{align*}
P_{\pi_b}(s,s')
=
\sum_{a\in\mathcal{A}}\pi_b(a\mid s)p(s'\mid s,a)
\geq
\alpha \sum_{a\in\mathcal{A}}\pi(a\mid s)p(s'\mid s,a)
=
\alpha P_{\pi}(s,s').
\end{align*}
It follows that, for any $T\geq1$,
\begin{align*}
P_{\pi_b}^T(s,s')\geq \alpha^T P_{\pi}^T(s,s').
\end{align*}
Since the Markov chain induced by $\pi$ is irreducible, for every pair $s,s'\in\mathcal{S}$ there exists $T$ such that $P_{\pi}^T(s,s')>0$. Hence $P_{\pi_b}^T(s,s')>0$, which implies that the Markov chain induced by $\pi_b$ is irreducible.
\end{proof}

\paragraph{Lazy-chain construction \cite{schweitzer1971iterative}.}

The preceding lemma shows that Assumption~\ref{assum:behavior}, together with the positivity of $\pi_b$, implies irreducibility of the Markov chain induced by $\pi_b$. We now explain why aperiodicity can be imposed without loss of generality through the standard lazy-chain construction.

Given a state-action trajectory $\{S_t,A_t\}$ generated by a policy $\pi$, let $P_\pi$ denote the corresponding state-action transition kernel. Fix $\lambda\in(0,1)$. The lazy version of this chain is defined by using the transition kernel
\begin{align*}
P'_\pi := (1-\lambda)P_\pi+\lambda I.
\end{align*}
Equivalently, at each step the chain stays at the current state-action pair with probability $\lambda$, and follows the original transition kernel $P_\pi$ with probability $1-\lambda$. This construction can be implemented online by repeating each observed state-action-reward tuple for a geometrically distributed number of times before advancing to the next transition.

It is clear that $P'_\pi$ has the same stationary distribution as $P_\pi$. Moreover, if $P_\pi$ is irreducible, then $P'_\pi$ is also irreducible. Since $P'_\pi$ has a self-loop probability at least $\lambda$ at every state-action pair, it is aperiodic. Therefore, on a finite state-action space, irreducibility and aperiodicity imply geometric mixing \cite[Theorem 4.9]{levin2017markov}.

\subsection{Comparison with Assumptions from Existing Literature}\label{app:comparison}

We now compare Assumption~\ref{assum:behavior} with commonly used assumptions in the actor--critic literature. We focus on three aspects: (i) mixing requirements, (ii) exploration conditions, and (iii) policy smoothness assumptions.

\paragraph{Uniform ergodicity across policies.}

A frequently adopted assumption concerns uniform geometric mixing of the Markov chains induced by a class of policies. Let $\Theta$ be a parameter space for policies. The following assumption appears frequently in the literature \cite[Assumption 4.2]{wu2020finite}, \cite[Assumption 2]{olshevsky2023small}, \cite[Assumption 2]{xu2020twotimescaleNAC}, \cite[Assumption 3.2]{chen2023actorcritic}.

\begin{assumption}\label{assum:uniform_ergodicity}
For any policy parameter $\theta\in\Theta$, the policy $\pi_\theta$ induces a unique stationary distribution $\mu_\theta$ on the state trajectory. Moreover, there exist constants $c>0$ and $\rho\in(0,1)$ such that
\begin{align*}
\sup_{\theta\in\Theta}\max_{s\in\mathcal{S}}
\mathrm{d_{TV}}\big(P_{\pi_\theta}^t(s,\cdot),\mu_\theta(\cdot)\big)
\leq c\rho^t,
\quad \forall t\geq0,
\end{align*}
where $P_{\pi_\theta}$ denotes the state transition matrix induced by $\pi_\theta$.
\end{assumption}

Assumption~\ref{assum:uniform_ergodicity} is substantially stronger than Assumption~\ref{assum:behavior}. It not only requires every policy in the parameterized class to induce an ergodic Markov chain, but also imposes a common geometric mixing rate over the entire class. Even if a similar condition is imposed only on the policies generated along the algorithm trajectory, it remains algorithm-dependent, since verifying it requires controlling the very policy sequence whose convergence is being analyzed.

In contrast, Assumption~\ref{assum:behavior} is purely structural: it only requires the existence of one policy that induces an irreducible Markov chain on the state space. As shown in Proposition~\ref{prop:minimal}, this condition is minimal for state-space exploration. In our analysis, the existence of a stationary distribution and geometric mixing follow from Assumption~\ref{assum:behavior}, the positivity of the fixed behavior policy $\pi_b$, and the lazy-chain construction above.

\paragraph{Exploration conditions.}

In analyses involving linear function approximation, a common exploration-type condition is imposed through the feature covariance structure. Let $\phi:\mathcal{S}\times\mathcal{A}\to\mathbb{R}^d$ be the feature map, and let $\mu_\theta$ be the stationary distribution of the state trajectory under $\pi_\theta$. The following condition appears frequently in equivalent forms \cite[Assumption 3.1]{kumar2026single}, \cite[Assumption 3.1]{kumarconvergence}, \cite[Assumption 6]{olshevsky2023small}, \cite[Assumption 4.1]{wu2020finite}.

\begin{assumption}\label{assum:exploration}
Let
\begin{align*}
A_\theta
=
\mathbb{E}_{s\sim\mu_\theta,\,a\sim\pi_\theta(\cdot\mid s),\,s'\sim p(\cdot\mid s,a),\,a'\sim\pi_\theta(\cdot\mid s')}
\big[\phi(s,a)\big(\gamma\phi(s',a')-\phi(s,a)\big)^\top\big].
\end{align*}
There exists $\lambda>0$ such that $A_\theta+\lambda I$ is negative semidefinite for all $\theta\in\Theta$.
\end{assumption}

The following lemma shows that, in the tabular case, this condition fails near optimal policies whenever a suboptimal action exists.

\begin{lemma}\label{lem:exploration_fail}
Consider the tabular case. Suppose that there exist a state $s\in\mathcal{S}$ and actions $a_1,a_2\in\mathcal{A}$ such that $Q^*(s,a_1)<Q^*(s,a_2)$. Let $\pi^*$ be an optimal policy satisfying $\pi^*(a_1\mid s)=0$. Then, for any $\lambda>0$, there exists $\delta>0$ such that $A_\pi+\lambda I$ is not negative semidefinite for every policy $\pi$ satisfying $\mathrm{d_{TV}}(\pi,\pi^*)<\delta$. Consequently, global convergence to an optimal policy and the validity of Assumption~\ref{assum:exploration} along the algorithm trajectory cannot hold simultaneously for general tabular MDPs.
\end{lemma}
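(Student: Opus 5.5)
The plan is to test the quadratic form of $A_\pi+\lambda I$ on a single coordinate vector, namely the indicator of the suboptimal pair $(s,a_1)$, and show that it is strictly positive whenever $\pi(a_1\mid s)$ is small. Negative semidefiniteness of a (possibly nonsymmetric) matrix $M$ means $x^\top M x\le 0$ for all $x$, so one vector with $x^\top(A_\pi+\lambda I)x>0$ suffices.

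First, I would write $A_\pi$ explicitly in the tabular case. There $\phi(s,a)=e_{(s,a)}\in\mathbb{R}^{mn}$, so
\begin{align*}
A_\pi=D_\pi(\gamma\widehat P_\pi-I),
\end{align*}
where $D_\pi=\mathrm{diag}\{\mu_\pi(s)\pi(a\mid s)\}$ and $\widehat P_\pi$ is the state-action transition matrix, with entries $\widehat P_\pi((s,a),(s',a'))=p(s'\mid s,a)\pi(a'\mid s')$. Take $x=e_{(s,a_1)}$. Then
\begin{align*}
x^\top A_\pi x=\mu_\pi(s)\pi(a_1\mid s)\big(\gamma\widehat P_\pi((s,a_1),(s,a_1))-1\big)\ge-\mu_\pi(s)\pi(a_1\mid s)\ge-\pi(a_1\mid s),
\end{align*}
because the diagonal entry of $\widehat P_\pi$ is nonnegative and $\mu_\pi(s)\le 1$. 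Hence
\begin{align*}
x^\top(A_\pi+\lambda I)x\ge\lambda-\pi(a_1\mid s).
\end{align*}
Since $\pi^*(a_1\mid s)=0$, the total variation bound gives $\pi(a_1\mid s)\le\mathrm{d_{TV}}(\pi(s),\pi^*(s))<\delta$, where $\mathrm{d_{TV}}(\pi,\pi^*)$ is read as the maximum over states. Choosing $\delta=\lambda$ makes the right-hand side strictly positive. This proves the first claim. If $\mu_\pi$ is not unique for some $\pi$, the bound still holds for any choice of stationary distribution, since only $\mu_\pi(s)\le 1$ is used.

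For the consequence, I would first observe that every optimal policy, being greedy with respect to $Q^*$ at every state, satisfies $\pi^*(a_1\mid s)=0$, because $a_1$ is strictly suboptimal at $s$. Global convergence of the trajectory to an optimal policy therefore forces $\pi_t(a_1\mid s)\to 0$. So for any fixed $\lambda>0$, eventually $\pi_t(a_1\mid s)<\lambda$. At that point $A_{\pi_t}+\lambda I$ fails to be negative semidefinite, contradicting Assumption~\ref{assum:exploration} along the trajectory.

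The only delicate point is bookkeeping rather than an obstacle: fixing the convention for $\mathrm{d_{TV}}$ between policies and for $\mu_\pi$ when the chain is not ergodic, since the argument needs nothing beyond $\mu_\pi(s)\le 1$.
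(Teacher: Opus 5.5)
Your proof is correct and follows essentially the same route as the paper. Both test the quadratic form on the coordinate vector $e_{(s,a_1)}$, use $\mu_\pi(s)\le 1$ and the nonnegativity of the diagonal entry of $\widehat P_\pi$ to obtain $x^\top A_\pi x\ge -\pi(a_1\mid s)$, and then make $\pi(a_1\mid s)$ small via $\delta$. Your explicit choice $\delta=\lambda$, using $\pi(a_1\mid s)\le \mathrm{d_{TV}}(\pi(s),\pi^*(s))$, works just as well as the paper's $\lambda/2$ threshold, and the consequence is argued the same way.
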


\begin{proof}
In the tabular setting, the feature map is the identity over state-action pairs, and hence
\begin{align*}
A_\pi = D_\pi(\gamma \widehat{P}_\pi-I),
\end{align*}
where $D_\pi=\mathrm{diag}\big(\mu_\pi(s)\pi(a\mid s)\mid (s,a)\in\mathcal{S}\times\mathcal{A}\big)$ and $\widehat{P}_\pi$ is the state-action transition kernel induced by $\pi$. Let $e(s,a_1)$ denote the standard basis vector corresponding to the state-action pair $(s,a_1)$. Then
\begin{align*}
e(s,a_1)^\top A_\pi e(s,a_1)
=
\mu_\pi(s)\pi(a_1\mid s)\big(\gamma \widehat{P}_\pi((s,a_1),(s,a_1))-1\big).
\end{align*}
Since $0\leq \widehat{P}_\pi((s,a_1),(s,a_1))\leq1$, we have
\begin{align*}
e(s,a_1)^\top A_\pi e(s,a_1)\geq -\mu_\pi(s)\pi(a_1\mid s)\geq -\pi(a_1\mid s).
\end{align*}
Because $\pi^*(a_1\mid s)=0$, there exists $\delta>0$ such that $\mathrm{d_{TV}}(\pi,\pi^*)<\delta$ implies $\pi(a_1\mid s)<\lambda/2$. Therefore,
\begin{align*}
e(s,a_1)^\top (A_\pi+\lambda I)e(s,a_1)
=
e(s,a_1)^\top A_\pi e(s,a_1)+\lambda
>
\frac{\lambda}{2}>0.
\end{align*}
Thus $A_\pi+\lambda I$ is not negative semidefinite for any such policy $\pi$. The final claim follows because any algorithm that globally converges to $\pi^*$ must eventually enter this neighborhood, where Assumption~\ref{assum:exploration} fails.
\end{proof}

Since our analysis focuses on the tabular case, we do not impose such spectral exploration conditions. Instead, exploration is ensured through the positive behavior policy. Lemma~\ref{lem:exploration_fail} shows that Assumption~\ref{assum:exploration} is incompatible with global convergence in general tabular MDPs whenever optimality requires eliminating suboptimal actions.

\paragraph{Policy smoothness assumptions.}

Most of the actor--critic literature adopts an optimization viewpoint and analyzes the actor through policy gradients. Such analyses typically require Lipschitz continuity of policies and log-policy gradients. The following assumption appears frequently \cite[Assumption 3.3]{chen2023actorcritic}, \cite[Assumption 3]{olshevsky2023small}, \cite[Assumption 4.3]{wu2020finite}.

\begin{assumption}\label{assum:smooth}
The following hold for all $\theta,\theta_1,\theta_2\in\Theta$, $s\in\mathcal{S}$, and $a\in\mathcal{A}$:
\begin{enumerate}[(1)]
\item There exists $B>0$ such that
$\|\nabla_\theta\log\pi_\theta(a\mid s)\|\leq B$.
\item There exists $L>0$ such that
$\|\nabla_\theta\log\pi_{\theta_1}(a\mid s)-\nabla_\theta\log\pi_{\theta_2}(a\mid s)\|
\leq L\|\theta_1-\theta_2\|$.
\item There exists $L'>0$ such that
$|\pi_{\theta_1}(a\mid s)-\pi_{\theta_2}(a\mid s)|
\leq L'\|\theta_1-\theta_2\|$.
\end{enumerate}
\end{assumption}

In the tabular case, where the policy is parameterized by itself, i.e., $\theta=\pi$, the first condition fails because
$\frac{\partial}{\partial \pi(a\mid s)}\log\pi(a\mid s)=1/\pi(a\mid s)$ can be arbitrarily large as $\pi(a\mid s)$ approaches zero. Similarly, since the map $x\mapsto 1/x$ is not Lipschitz near zero, the second condition also fails. Although the third condition is benign under this parametrization, the first two conditions rule out policies approaching the boundary of the simplex. This is restrictive for global optimality analysis in the tabular setting, where optimal policies are often deterministic and hence lie on the boundary of the policy simplex.

\end{document}